\renewcommand{\(}{\left(}
\renewcommand{\)}{\right)}
\renewcommand{\[}{\left[}
\renewcommand{\]}{\right]}
\newcommand{\la}{\left|}
\newcommand{\ra}{\right|}
\newcommand{\lv}{\left\lVert}
\newcommand{\rv}{\right\rVert}
\newcommand{\lbr}{\left\{}
\newcommand{\rbr}{\right\}}
\newcommand{\cardp}{\abs}
\newcommand{\R}{\mathbb{R}}
\newcommand{\N}{\mathbb{N}}
\newcommand{\eps}{\varepsilon}
\newcommand{\diffp}{\eps}
\newcommand{\E}[2][]{\mathbb{E}_{\ifx &#1& \else #1 \fi}\left[#2\right]}
\newcommand{\1}[1]{\mathbf{1}\lbr #1\rbr}
\renewcommand{\P}[1]{\mathbb{P}\(#1\)}
\newcommand{\Lap}{\laplace}
\newcommand{\tr}{\text{tr}}
\newcommand{\cX}{\mathcal{X}}
\newcommand{\cE}{\mathcal{E}}
\newcommand{\close}[2]{\stackrel{d}{=}_{#1, #2}}
\newcommand{\closeed}{\close{\eps}{\delta}}
\newcommand{\opnorm}[1]{\norm{#1}_{\textup{op}}}
\newcommand{\lfro}[1]{\norm{#1}_{\textup{Fr}}}
\newcommand{\cas}{\stackrel{a.s.}{\rightarrow}}
\newcommand{\normal}{\mathsf{N}}
\newcommand{\laplace}{\mathsf{Lap}}
\newcommand{\uniform}{\mathsf{Uni}}
\newcommand{\cov}{\mathsf{Cov}}
\newcommand{\wb}[1]{\overline{#1}}
\newcommand{\Ep}{\mathbb{E}}
\renewcommand{\P}{\mathbb{P}}
\newcommand{\brc}[1]{\left\{{#1}\right\}} 
\newcommand{\norm}[1]{\left\|{#1}\right\|} 
\newcommand{\norms}[1]{\|{#1}\|} 
\newcommand{\normbig}[1]{\big\|{#1}\big\|} 
\newcommand{\nucnorm}[1]{\norm{#1}_*} 
\newcommand{\nucnorms}[1]{\norms{#1}_*} 
\newcommand{\nucnormbig}[1]{\normbig{#1}_*} 
\newcommand{\ltwo}[1]{\norm{#1}_2}  
\newcommand{\ltwobig}[1]{\normbig{#1}_2}  
\newcommand{\ltwos}[1]{\norms{#1}_2}  
\newcommand{\lone}[1]{\norm{#1}_1}
\newcommand{\linf}[1]{\norm{#1}_\infty}
\newcommand{\abs}[1]{\left|{#1}\right|} 
\newcommand{\est}[1]{\widehat{#1}}
\newcommand{\half}{\frac{1}{2}}
\newcommand{\mc}[1]{\mathcal{#1}}
\newcommand{\indic}[1]{1\!\left\{#1\right\}} 
\renewcommand{\>}{\rangle}
\newcommand{\defeq}{:=}
\definecolor{innerboxcolor}{rgb}{.9,.95,1}
\definecolor{outerlinecolor}{rgb}{.6,0,.2}
\definecolor{shcolor}{RGB}{27, 87, 14}
\definecolor{rckcolor}{RGB}{0,0,255}
\newcommand{\simiid}{\stackrel{\textup{iid}}{\sim}}
\newcommand{\txt}[1]{\textup{#1}}
\newcommand{\rset}[1]{{#1}}
\newcommand{\set}[1]{{#1}}
\newcommand{\wt}[1]{\widetilde{#1}}
\newcommand{\dpsd}{d_{\textup{psd}}}
\newcommand{\colspace}{\mathsf{Col}}
\newcommand{\genpsd}{A}
\newcommand{\Partitions}{\mathscr{P}}
\newcommand{\partitionset}{S}  
\newcommand{\partition}{\mc{S}} 
\newcommand{\hinge}[1]{\left({#1}\right)_+}  
\newcommand{\clcode}{\textup{cov}}
\newcommand{\clipr}{\hyperref[algorithm:clip-and-repeat]{\mathtt{COVSAFE}}}
\newcommand{\clnznormf}{z}  
\newcommand{\clnzthrf}{w}  
\newcommand{\clnznormr}{Z}  
\newcommand{\clnzthrr}{W}  
\newcommand{\clparams}{B,m}
\newcommand{\clx}{\tilde{x}}
\newcommand{\mscode}{\textup{mean}}
\newcommand{\meansafe}{\hyperref[algorithm:meansafe]{\mathtt{MEANSAFE}}}
\newcommand{\msparams}{B, b, k}
\newcommand{\msnzgf}{\partition}
\newcommand{\msnztopif}{z}
\newcommand{\msnztopiif}{z'}
\newcommand{\msnzthrf}{w}
\newcommand{\msnznf}{z^{\mathsf{N}}}
\newcommand{\msnzgr}{\partition}
\newcommand{\msnztopir}{Z}
\newcommand{\msnztopiir}{Z'}
\newcommand{\msnzthrr}{W}
\newcommand{\msnznr}{Z^{\mathsf{N}}}
\newcommand{\msmean}{\est{\mu}}
\newcommand{\msout}{\wt{\mu}}
\newcommand{\subtop}{_\textup{top}}
\newcommand{\adamean}{\hyperref[algorithm:adamean]{\mathtt{ADAMEAN}}}
\newcommand{\privmean}{\hyperref[algorithm:privmean]{\mathtt{PRIVMEAN}}}
\newcommand{\Wcovscale}{\sigma_{W_\clcode}}
\newcommand{\Wmeanscale}{\sigma_{W_\mscode}}
\newcommand{\mstopkscale}{\sigma\subtop}
\newcommand{\msnscale}{\sigma_{\mathsf{N}}}
\newcommand{\pmclnznormr}{Z_\clcode}
\newcommand{\pmclnzthrr}{W_\clcode}
\newcommand{\pmmsnztopiir}{Z\subtop'}
\newcommand{\pmmsnzthrr}{\msnzthrr_\mscode}
\newcommand{\clout}{\est{\Sigma}}
\newcommand{\pmout}{\widetilde{\mu}}
\newcommand{\topkdp}{\hyperref[algorithm:topk-dp]{\mathtt{TOPk}}}
\newcommand{\diam}{\txt{diam}}
\newcommand{\dsym}{d_{\txt{sym}}}
\newcommand{\samplemean}{\wb{X}_n}
\newcommand{\samplecov}{\wb{\Sigma}_n}
\newcommand{\whitenedcov}{\wb{\Sigma}_Z}
\newcommand{\assdiam}{M}
\newcommand{\assfail}{\beta}
\newcommand{\Es}{\cE_{\textup{samp}}}
\newcommand{\tstar}{t^\star}
\newcommand{\tstop}{t_{\textup{stop}}}
\begin{document}

\Crefname{algorithm}{Algorithm}{Algorithm}

\begin{center}
  \Large{A Fast Algorithm for Adaptive Private Mean Estimation} \\
  \vspace{.5cm}

  \large{John Duchi$^{1,2}$ ~~~~ Saminul Haque$^3$ ~~~~ Rohith Kuditipudi$^3$} \\
  \vspace{.25cm}
  \large{Departments of $^1$Statistics, $^2$Electrical Engineering,
    and $^3$Computer Science \\
    Stanford University}
  \\
  \vspace{.2cm}
  \large{January 2023}
\end{center}

\begin{abstract}
  We design an $(\diffp, \delta)$-differentially private algorithm
  to estimate the mean of a $d$-variate distribution, with unknown covariance
  $\Sigma$, that is adaptive to $\Sigma$. To within polylogarithmic factors,
  the estimator achieves optimal rates of convergence with
  respect to the induced Mahalanobis norm $\norm{\cdot}_\Sigma$,
  takes time $\wt{O}(n d^2)$ to compute, has near linear
  sample complexity for sub-Gaussian distributions, allows
  $\Sigma$ to be degenerate or low rank, and adaptively
  extends beyond sub-Gaussianity. Prior to this work, other methods
  required exponential computation time or the superlinear
  scaling $n = \Omega(d^{3/2})$ to achieve non-trivial error
  with respect to the norm $\norm{\cdot}_\Sigma$.
\end{abstract}


\section{Introduction}
\label{sec:introduction}

We cannot consider the theory of differential privacy complete until we
have---at least---a sample and computationally efficient estimator of the
mean.
To within logarithmic factors in the dimension $d$ and sample size $n$,
we achieve both.

To make this a bit more precise, let
$P$ be a distribution on $\R^d$ with unknown mean $\mu = \Ep_P[X]$ and unknown
covariance
$\Sigma = \Ep_P[(X-\mu)(X-\mu)^T]$, and let $X_i \simiid P$, $i \le n$.
For an estimator $\est{\mu}$, consider the
covariance-normalized error $\mbox{err}_\Sigma(\est{\mu}, \mu) \defeq
(\est{\mu} - \mu)^T \Sigma^{-1} (\est{\mu} - \mu)$. 
We give an $(\diffp, \delta)$-differentially private estimator $\msout$ of $\mu$ such that,
assuming the vectors $\Sigma^{-1/2} X_i$ are sub-Gaussian and $n = \wt{\Omega}(d/\diffp^2)$,
\begin{equation}
  \label{eqn:we-got-it}
  \mbox{err}_\Sigma(\msout, \mu)
  = (\msout - \mu)^T \Sigma^{-1} (\msout - \mu)
  \le \wt{O}(1) \left[\frac{d + \log\frac{1}{\delta}}{n}
    + \frac{d^2 \log^2 \frac{1}{\delta}}{n^2 \diffp^2}
    \right]
\end{equation}
with probability at least $1 - \delta$,
where the $\wt{O}(1)$ term hides dependence on the sub-Gaussian parameter of
$\Sigma^{-1/2} X$ and logarithmic factors in $n$. Except for a factor of
$\log\frac{1}{\delta}$ and the hidden logarithmic factors in $n$,
this is
optimal, and the method extends naturally to distributions with heavier
tails for which we can provide similar near-optimal guarantees.

By measuring error with respect to the covariance $\Sigma$ of the data
itself, we adopt the familiar efficiency goals of classical theoretical
statistics: that an estimator should be adaptive to structure in covariates
and should have (near)-optimal covariance. Mean estimation is, of course,
one of the most basic problems in statistics, and we have
known for seventy-odd years that the
sample mean $\wb{X}_n \defeq \frac{1}{n} \sum_{i = 1}^n X_i$
is efficient~\cite{Cramer46,LeCam86}, achieving the
optimal error
$\Ep[(\wb{X}_n - \mu)^T \Sigma^{-1} (\wb{X}_n - \mu)] = \frac{d}{n}$,
with high-probability guarantees under appropriate
moment assumptions~\cite{Wainwright19}. Perhaps stating the obvious,
the sample mean is adaptive to the covariance of the distribution: no matter
$\Sigma$, the sample mean is efficient.

When we require estimators to be private, however, the story is less
clear. While differential privacy~\cite{DworkMcNiSm06, DworkKeMcMiNa06} has
become the \emph{de facto} choice for protecting sensitive data in the
sixteen or so years since its release---with substantial theoretical
advances and successful applications~\cite{ErlingssonPiKo14, ApplePrivacy17,
  AbadiChGoMcMiTaZh16, DajaniLaSiKiReMaGaDaGrKaKiLeScSeViAb17,
  GarfinkelAbPo18, DworkRo14}---we know of no computationally efficient
procedures that achieve order-optimal sample complexity
with respect to the natural Mahalanobis norm $\norm{v}_\Sigma = \sqrt{v^T
  \Sigma^{-1} v}$ the population $P$ induces via its
covariance.  \citet{BrownGaSmUlZa21} highlight this, developing sample
efficient procedures that achieve small error in the Mahalanobis metric even
when $\Sigma$ is unknown. When the covariance $\Sigma$ is known, estimators
that truncate the data relative to $\Sigma$ and add Gaussian noise to such a
trimmed mean with covariance proportional to $\Sigma$ suffice to privately
estimate $\mu$ (under approximate differential privacy) with the essentially
optimal rate~\eqref{eqn:we-got-it}, so that $n = \Omega(d)$ observations
suffice to estimate $\mu$ (see, e.g.,~\cite{BiswasDoKaUl20,
  BrownGaSmUlZa21}).  But in the more realistic setting that $\Sigma$ is
unknown, to the best of our knowledge all prior work either requires a
sample of size $n =
\Omega(d^{3/2})$; is intractable, taking time exponential in $n$ or
$d$ to compute; or assumes $P$ is isotropic.  Many of these further assume
$P$ is Gaussian, a stringent assumption that never obtains in
practice.  See
Section~\ref{sec:related-work} for more discussion.

Our contribution is a polynomial-time private estimator
(Algorithm~\ref{algorithm:privmean}, $\privmean$) whose error matches the error
achievable when the covariance is known (equivalently, the data is
isotropic) to polylogarithmic factors. In essence, our estimator privatizes
a stable estimate of the empirical mean by adding Gaussian noise with
covariance proportional to a stable estimate of the empirical covariance; it
takes time $\tilde{O}(nd^2)$ to compute, has (nearly) linear sample
complexity for sub-Gaussian distributions, allows $\Sigma$ to be degenerate
or low-rank, and naturally extends beyond sub-Gaussianity.

\subsection{Related work}\label{sec:related-work}



There are many connections between differential privacy and robust
statistics~\cite{DworkLe09}, in that the major focus of robust statistics is
to develop estimators insensitive to outliers and corrupted
data~\cite{Tukey60, Huber64, HuberRo09, HampelRoRoSt86}, while differential
privacy makes the output (distributions) of estimators similar even when
individuals in the underlying data change~\cite{DworkMcNiSm06,
  DworkKeMcMiNa06, DworkLe09}.  While Tukey and Huber's initiation of robust
statistics is more than sixty years old~\cite{Tukey60, Huber64}, studying
statistical limits of estimation and inference from corrupted data,
computational tractability was elusive: only in the last decade have
researchers developed computationally efficient methods for even robustly
estimating a sample mean~\citep{DiakonikolasKa22}.
Similarly, only recently has the community elucidated
trade-offs between statistical and computational considerations in
robust estimation~\citep{DiakonikolasKa22}.


It is natural to wonder whether such trade-offs also arise with privacy.
For example, classical procedures in private query evaluation require
exponential time in natural problem parameters~\cite{HardtRo10,
  DworkRoVa10}.  Likewise, in estimation, following the ``propose, test,
release'' framework of \citet{DworkLe09}, a number of sample efficient
private estimators~\cite{LiuKoOh22, BrownGaSmUlZa21, HopkinsKaMaNa22}
require testing whether a given statistic is robust to the removal of groups
of data points, which can be computationally intractable in high-dimensions.
In a number of these settings, computationally efficient estimators
achieving comparable sample efficiency have emerged only within the last
year or
so~\citep[e.g.][]{HopkinsKaMaNa22, KothariMaVe21, AshtianiLi22, AlabiKoTaVeZh22}.
Our mean estimation setting is a striking example of a seemingly
simple problem for which no known sub-exponential time and sample efficient
algorithm exists. In particular, to the best of our knowledge, all previous
work
has either (i) exponential runtime~\citep{BrownGaSmUlZa21, LiuKoKaOh21}; (ii) is sample inefficient~\citep{KamathLiSiUl18, LiuKoKaOh21}, requiring sample
size at least $n = \Omega(d^{3/2})$;
or (iii) otherwise essentially assumes the population covariance $\Sigma$ is
isotropic~\citep{KamathLiSiUl18, BiswasDoKaUl20, HuangLiYi21,
  LiuKoKaOh21} (nominally, the paper~\cite{HuangLiYi21}
allows arbitrary covariance, but the squared error of its estimator
scales at least linearly with the condition number of the
population covariance $\Sigma$, which is effectively equivalent to
assuming isotropic covariance~\cite{BiswasDoKaUl20}).
Here we have highlighted the most relevant (recent) examples; see
in the paper~\citep{BrownGaSmUlZa21} for
coverage of earlier work.

The work most closely related to ours is that of \citet{BrownGaSmUlZa21},
who also consider covariance-adaptive
mean estimation and also achieve (nearly) linear
sample complexity. They give a roadmap to adaptive private mean estimation
that circumvents private covariance estimation, a task whose sample
complexity is necessarily $\Omega(d^{3/2})$ (see the lower bound by
\citet{DworkTaThZh14}), and are the first to achieve sample complexity
$o(d^{3/2})$, let alone linear.  However, their estimators take exponential
time to compute; moreover, while their accuracy analysis is independent of
the condition number of $\Sigma$, it assumes $\Sigma$ is full rank.
Finally, they only consider Gaussian and sub-Gaussian distributions.

In concurrent work,
\citet{HopkinsKaMaNa22} give a generic reduction from private estimation to
robust estimation and leverage
this reduction to obtain private estimators with (near) optimal sample
complexity.  While their reduction is generic, the resulting estimators are
efficient only in certain special cases, e.g., for Gaussian distributions
whose algebraic moment relationships allow efficient formulation, and their
results for mean estimation assume bounded covariance.  They extend a line
of work~\cite{KothariMaVe21,HopkinsKaMa22} on
obtaining efficient approximations of inefficient private mechanisms via
sum-of-squares (SoS) relaxations.  While technically efficient, SoS
estimators typically incur large polynomial runtime and thus scale
poorly to high-dimensional settings or large amounts of data. Unlike our
estimator, however, they are robust to corruption of a constant fraction
the data.

\subsection{Organization}

We provide a brief outline of the paper to come.  Section~\ref{sec:prelims}
introduces notation and covers the preliminary privacy definitions we
require for our development.  Our main estimator, $\privmean$, consists of
two main parts: stably estimating the covariance of the data to reasonable
accuracy and then estimating a truncated mean to which we add noise. We
present our algorithms in Section~\ref{sec:main-results}, where
Section~\ref{sec:covsafe-alg} gives the covariance estimator,
Section~\ref{sec:meansafe-alg} the mean estimator, and
Section~\ref{sec:privmean-alg} presents the full procedure; we analyze
$\privmean$'s privacy in Section~\ref{sec:main-privacy-result}, deferring
some of the requisite proofs to Sections~\ref{sec:covariance} and
\ref{sec:mean}.  We provide accuracy analysis in Section~\ref{sec:accuracy},
where we also present $\adamean$ (Algorithm~\ref{algorithm:adamean}), which
allows $\privmean$ to adapt to the scale of the observed data.

\section{Preliminary definitions, privacy properties, and mechanisms}
\label{sec:prelims}

To make our coming development smoother and easier,
here we introduce notation and recapitulate the privacy definitions we
use throughout. We also review a few standard privacy mechanisms,
providing guarantees on their behavior; for those results that are not
completely standard, we include proofs in the appendices for completeness.

\subsection{Notation}\label{sec:notation}

\paragraph{Semidefinite matrices and norms}

\newcommand{\Proj}{\Pi}  

For a positive semidefinite (PSD) matrix $\genpsd \in \R^{d\times d}$, we
let $\colspace(\genpsd)$ denote its columnspace and
$\genpsd^\dag$ its pseudoinverse,
while the square-root of the pseudoinverse is
$\genpsd^{\dag/2}$.  We let $\Proj_{A}
\defeq A^\dagger A
= A^{\dagger/2} A^{1/2} \in
\R^{d\times d}$ denote the orthogonal projector onto $\colspace(\genpsd)$.
Using the nuclear norm $\nucnorm{A} =
\sum_{i = 1}^n \sigma_i(A)$ (the sum of $A$'s singular values),
we define the distance-like quantity for PSD matrices $A, B$ as
\begin{equation*}
  \dpsd(A, B) \defeq
  \begin{cases}
    \max\brc{
      \nucnorm{A^{\dag/2} (B - A) A^{\dag/2}},
      \nucnorm{B^{\dag/2} (A - B) B^{\dag/2}}
    } & \mbox{if}~\colspace(A) = \colspace(B) \\
    \infty & \text{otherwise},
  \end{cases}
\end{equation*}
setting $\dpsd(A, B)=\infty$ if $A$ or $B$ are not PSD.
When $A$ and $B$ are invertible, $\dpsd(A, B) = \max\{\nucnorms{A^{-1/2} B
  A^{-1/2} - I}, \nucnorms{B^{-1/2} A B^{-1/2} - I}\}$, though we note in
passing that it is not a distance.  The extended-value Mahalanobis
norm $\norm{\cdot}_\genpsd$ corresponding to $\genpsd \succeq 0$ is
\begin{equation*}
  \norm{v}_\genpsd^2 \defeq
  \lim_{t \downarrow 0}
  v^T(\genpsd + t I)^{-1} v
  =
  \begin{cases}
    v^T \genpsd^\dag v & v \in \colspace(\genpsd) \\
    +\infty & \text{otherwise}.
  \end{cases}
\end{equation*}
When $\genpsd$ is non-singular, this is the standard
$\norm{v}_\genpsd = \sqrt{v^T A^{-1} v}$,
and the norm has the monotonicity property that
if $\genpsd \preceq B$, then $\norm{v}_\genpsd \geq \norm{v}_{B}$
for all $v \in \R^d$.

\paragraph{Sets and Partitions}
For sets $S,S'$, define the distance
$\dsym(S, S') \defeq \max\{\cardp{S \setminus S'}, \cardp{S' \setminus S}\}$.
Given integers $n$ and $b$, where we assume $b$ divides $n$ for simplicity,
we let $\Partitions_{n,b}$ be the set of all
partitions of $[n]$ such that each subset constituting the partition has
$b$ elements.
We represent a given partition in $\Partitions_{n,b}$ as a tuple of subsets
$\partition = (\partitionset_1, \ldots, \partitionset_{n/b})$,
where each $\partitionset_j \subset [n]$ has $b$ elements
and are pairwise disjoint.

\paragraph{Distributions}
We let $W \sim \laplace(\sigma)$ denote that $W$ has Laplace distribution
with scale $\sigma$, with density $p(w) = \frac{1}{2\sigma} \exp(-|w|
/ \sigma)$. $X \sim \normal(\mu, \Sigma)$ indicates that $X$ is normal
with mean $\mu$ and covariance $\Sigma \succeq 0$,
where if $\Sigma$ is not full rank we mean that $X$ has support
$\mu + \colspace(\Sigma)$.

\subsection{Privacy definition and basic properties}

It will be convenient for us to use closeness of distributions
in our derivations (cf.~\cite[Ch.~3.5]{DworkRo14}), so we
frame differential privacy as a type of closeness in distribution.


\begin{definition}[$(\eps,\delta)$-closeness]
  \label{def:closeness}
  Probability distributions
  $P$ and $Q$ are \emph{$(\eps, \delta)$-close in distribution},
  denoted $P \closeed Q$,
  if for all measurable sets $A  \subset \cX$,
  \begin{equation*}
    P(A) \le e^\eps Q(A) + \delta
    ~~ \mbox{and} ~~
    Q(A) \le e^\eps P(A) + \delta.
  \end{equation*}
  Similarly, random variables $X$ and $Y$ are $(\eps,\delta)$-close,
  $X \closeed Y$, if their
  induced distributions are: $\P(X \in \cdot) \closeed \P(Y \in \cdot)$
\end{definition}
\noindent
Differential privacy~\cite{DworkMcNiSm06,DworkKeMcMiNa06} is then
equivalent to this notion of closenss:
a randomized function (or mechanism) $M$ from an input space $\mc{X}^n$ to
$\mc{Y}$ is then \emph{$(\diffp, \delta)$}-differentially private if and
only if for any vectors $x, x' \in \mc{X}^n$ differing in only a single
element,
\begin{equation*}
  M(x) \closeed M(x').
\end{equation*}

The following results on closeness are standard~\cite[Ch.~3]{DworkRo14}.

\begin{lemma}[Basic composition] 
  \label{lemma:joint-closeness}
  Let $X, X', Y, Y'$ be random variables satisfying
  $X \close{\eps_X}{\delta_X} X'$, and
  $Y \close{\eps_Y}{\delta_Y} Y'$.
  Then $(X, Y) \close{\eps_X+\eps_Y}{\delta_X+\delta_Y} (X', Y')$.
\end{lemma}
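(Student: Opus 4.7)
My plan is to reduce the joint closeness claim to a ``bad-event'' characterization of $(\eps, \delta)$-closeness and then compose additively using independence. Note that the statement implicitly requires $(X, X')$ to be independent of $(Y, Y')$, since otherwise the joint distributions are not even pinned down by the marginals. The standard setting (used throughout the paper) is that $X = M_X(D)$ and $Y = M_Y(D)$ for two mechanisms with independent internal randomness and similarly for $X', Y'$ on a neighboring dataset $D'$; the independence assumption is what allows marginal closeness to combine cleanly.

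I would first invoke the standard equivalent characterization of $(\eps, \delta)$-closeness (see, e.g., Dwork--Roth, Lemma 3.17, or Kasiviswanathan--Smith): $X \close{\eps_X}{\delta_X} X'$ iff there is an event $E_X$ with $\P(X \in E_X), \P(X' \in E_X) \le \delta_X$ such that off $E_X$ the pointwise density ratio $dP_X / dP_{X'}$ lies in $[e^{-\eps_X}, e^{\eps_X}]$, and likewise for $Y, Y'$ with an event $E_Y$ of mass at most $\delta_Y$. By independence, the joint density ratio on $\cX \times \cY$ factors as the product of the two marginal ratios, so on the complement of $E_X \cup E_Y$ it is bounded by $e^{\eps_X + \eps_Y}$; and by the union bound, the joint mass of $E_X \cup E_Y$ under either distribution is at most $\delta_X + \delta_Y$.

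Translating this back to Definition~\ref{def:closeness}: for any measurable $A \subseteq \cX \times \cY$,
\begin{equation*}
\P((X,Y) \in A) \;\le\; \P\bigl((X,Y) \in A \setminus (E_X \cup E_Y)\bigr) + \delta_X + \delta_Y \;\le\; e^{\eps_X + \eps_Y}\, \P((X',Y') \in A) + \delta_X + \delta_Y,
\end{equation*}
and the reverse direction is symmetric. The naive alternative --- slicing on one coordinate, applying closeness in $y$, then slicing and applying closeness in $x$ --- also works but chains multiplicatively on the $\delta$ side, producing the slightly looser $\delta_Y + e^{\eps_Y}\delta_X$. The main subtlety, and the step that requires the most care, is the characterization lemma itself when the measures need not have densities; this is handled via Radon--Nikodym derivatives with respect to the dominating measure $P_X + P_{X'}$ and a Lebesgue-decomposition argument, and is standard enough that I would cite rather than reprove it.
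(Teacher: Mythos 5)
Your observation that independence of $(X, Y)$ and of $(X', Y')$ is implicitly required is correct, and your point that naive slicing only yields $\delta_Y + e^{\eps_Y}\delta_X$ is a genuinely useful one. However, the characterization you invoke is false as stated: there need not exist a measurable set $E$ with $\P(X \in E), \P(X' \in E) \le \delta$ on whose complement the density ratio is confined to $[e^{-\eps}, e^\eps]$. Take $P = (0.34, 0.33, 0.33)$ and $Q = (0.3, 0.35, 0.35)$ on $\{1,2,3\}$ with $\eps = 0.1$: then $\sup_A \{P(A) - e^\eps Q(A)\} \approx 0.008$ and $\sup_A\{Q(A) - e^\eps P(A)\} = 0$, so $P$ and $Q$ are $(\eps, 0.01)$-close, yet the only atom where the density ratio exceeds $e^\eps$ is $\{1\}$, which has $P$-mass $0.34$. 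Neither of the references you cite supplies the small-bad-set statement in the form you wrote it, and since this step is exactly what would buy $\delta_X + \delta_Y$ over $e^{\eps_Y}\delta_X + \delta_Y$, you cannot simply defer it to a citation as written.

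The fix is to replace the bad \emph{set} by a bad \emph{sub-measure}. For the one-sided inequality, $P(A) \le e^\eps Q(A) + \delta$ for all $A$ if and only if $P$ decomposes as $P = P^\circ + \rho$ where $P^\circ \le e^\eps Q$ as nonnegative measures and $\rho$ has total mass at most $\delta$; concretely one takes $\rho$ with density $(p - e^\eps q)_+$ against a common dominating measure. Under independence, $P_X \otimes P_Y = (P_X^\circ + \rho_X)\otimes(P_Y^\circ + \rho_Y)$; the leading term $P_X^\circ \otimes P_Y^\circ$ is dominated by $e^{\eps_X + \eps_Y}\, Q_X \otimes Q_Y$, and the three cross-terms have total mass $1 - (1-\rho_X(\cX))(1 - \rho_Y(\cY)) \le \delta_X + \delta_Y$. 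Repeating in the other direction gives the lemma exactly. The support of $\rho$ can have $P$-mass far exceeding $\delta$ --- it is the mass of $\rho$ itself, not of its support, that is small --- and conflating the two is the gap in your argument.
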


\begin{lemma}[Group composition]\label{lemma:group-privacy}
  Let $X_1, \ldots, X_k$ be random variables with $X_i
  \close{\eps_i}{\delta_i} X_{i + 1}$ for each $i$.  Let $\diffp_{> i}
  \defeq \sum_{j = i + 1}^{k-1} \diffp_j$, $\diffp = \sum_{i = 1}^{k-1}
  \diffp_i$, and $\delta = \sum_{i = 1}^k e^{\diffp_{> i}} \delta_i$. Then
  $X_1 \close{\diffp}{\delta} X_k$.
\end{lemma}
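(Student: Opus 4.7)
The plan is to proceed by induction on $k$. The base case $k=2$ is immediate: we have $\eps = \eps_1$, and $\eps_{>1} = \sum_{j=2}^{1}\eps_j$ is an empty sum, so $\delta = e^0 \delta_1 = \delta_1$, and the conclusion reduces to the hypothesis $X_1 \close{\eps_1}{\delta_1} X_2$.

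For the inductive step, assume the lemma holds for chains of length $k-1$, and apply it to the subchain $(X_2, X_3, \ldots, X_k)$, whose successive closeness parameters are $(\eps_2, \delta_2), \ldots, (\eps_{k-1}, \delta_{k-1})$. This yields $X_2 \close{\eps'}{\delta'} X_k$ with $\eps' = \sum_{j=2}^{k-1}\eps_j = \eps_{>1}$ and $\delta' = \sum_{i=2}^{k-1} e^{\eps_{>i}}\delta_i$; crucially, the quantity $\eps_{>i} = \sum_{j=i+1}^{k-1}\eps_j$ is identical whether computed inside the subchain or the full chain, so no re-indexing is needed. Chaining with $X_1 \close{\eps_1}{\delta_1} X_2$, I compute for any measurable set $A$:
\begin{align*}
\Prb(X_k \in A) &\le e^{\eps'}\Prb(X_2 \in A) + \delta' \\
&\le e^{\eps'}\brk{e^{\eps_1}\Prb(X_1 \in A) + \delta_1} + \delta' \\
&= e^\eps \Prb(X_1 \in A) + e^{\eps_{>1}}\delta_1 + \delta' \\
&= e^\eps \Prb(X_1 \in A) + \sum_{i=1}^{k-1} e^{\eps_{>i}}\delta_i,
\end{align*}
which establishes one direction of $(\eps,\delta)$-closeness. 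The reverse direction $\Prb(X_1 \in A) \le e^\eps \Prb(X_k \in A) + \delta$ follows from the structurally identical argument that uses the opposite inequality guaranteed at each step by the symmetric definition of closeness.

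The main (mild) obstacle is only careful bookkeeping of the exponential weights on the $\delta_i$. The key telescoping observation is that pulling $\delta_1$ through the outer factor $e^{\eps'} = e^{\eps_{>1}}$ produces exactly the $i=1$ contribution to $\delta$, cleanly extending $\delta'$ to the full sum $\sum_{i=1}^{k-1} e^{\eps_{>i}}\delta_i$. Beyond this observation, the proof is nothing more than iterating the two-step composition $P(A) \le e^{\eps_1}(e^{\eps_2}Q(A) + \delta_2) + \delta_1$ across $k-1$ links, so no conceptual difficulty arises.
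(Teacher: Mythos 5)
The paper does not prove this lemma---it is cited as a standard fact from Dwork and Roth---so there is no internal proof to compare against; I review your argument on its own merits. Your induction setup, base case, and forward chain are all correct. The gap is the final sentence: the reverse direction is \emph{not} structurally identical. Chaining $X_1 \close{\eps_1}{\delta_1} X_2 \close{\eps'}{\delta'} X_k$ from $X_1$ toward $X_k$ gives
\begin{equation*}
  \Prb(X_1 \in A)
  \le e^{\eps_1}\brk{e^{\eps'}\Prb(X_k \in A) + \delta'} + \delta_1
  = e^\eps\Prb(X_k \in A) + e^{\eps_1}\delta' + \delta_1,
\end{equation*}
and the additive error $e^{\eps_1}\delta' + \delta_1$ is generically different from $e^{\eps'}\delta_1 + \delta' = \delta$. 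Unwinding the recursion on this side produces $\sum_{i=1}^{k-1} e^{\eps_{<i}}\delta_i$ with $\eps_{<i} \defeq \sum_{j=1}^{i-1}\eps_j$, which can strictly exceed the stated $\delta = \sum_{i=1}^{k-1} e^{\eps_{>i}}\delta_i$ whenever the chain is not palindromic (e.g.\ $\delta_1 = 0$, $\eps_1 > 0$, $\delta_2 > 0$). In fact the lemma as printed is too tight for the paper's \emph{symmetric} closeness relation (Definition~\ref{def:closeness}): take $k=3$, $(\eps_1,\delta_1)=(1,0)$, $(\eps_2,\delta_2)=(0,\tfrac12)$, let $X_3$ be a point mass at $0$, $X_2$ a fair coin, and $\Prb(X_1=1)=1-\tfrac{1}{2e}$; the hypotheses hold but $\Prb(X_1=1)\approx 0.82 > e\cdot 0 + \tfrac12 = \delta$. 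The symmetric quantity one can actually prove is $\max\{\sum_i e^{\eps_{>i}}\delta_i,\; \sum_i e^{\eps_{<i}}\delta_i\}$ (or the looser $\sum_i e^{\eps-\eps_i}\delta_i$), so your induction must track both accumulated sums rather than appeal to symmetry. Separately, note the statement's upper summation limit should be $k-1$, not $k$, since $\delta_k$ is undefined---your proof already implicitly corrects this.
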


\begin{lemma}[Post-Processing]\label{lemma:post}
  Let $X, Y, W$ be random variables. Then
  for any function $f$,
  if $X \closeed Y$, then $f(X, W) \closeed f(Y, W)$.
\end{lemma}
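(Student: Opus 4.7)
The plan is to reduce this to the definitional inequality for $(\eps,\delta)$-closeness applied to the preimages of $f$. The only subtle point is that the statement implicitly demands that $W$ have the same (conditional) distribution in the pairs $(X,W)$ and $(Y,W)$; in the way the lemma is used throughout the paper, $W$ is the independent randomness of the post-processing map, so I will proceed under the assumption that $W$ is independent of both $X$ and $Y$, noting that the statement would fail without such a coupling assumption (e.g.\ if one allowed $W = X$).

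First I would fix an arbitrary measurable set $A$ in the range of $f$, and introduce the ``slice'' preimage $B_w \defeq \{x : f(x,w) \in A\}$ for each fixed value $w$. Measurability of $B_w$ follows from measurability of $f$. Conditioning on $W$ and using its independence from $X$ then gives
\begin{equation*}
  \P{f(X,W) \in A} = \int \P{X \in B_w}\, dP_W(w),
\end{equation*}
and analogously for $Y$ in place of $X$.

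Next I would apply the hypothesis $X \closeed Y$ pointwise in $w$: for every $w$,
\begin{equation*}
  \P{X \in B_w} \le e^\eps\, \P{Y \in B_w} + \delta.
\end{equation*}
Integrating this inequality against $P_W$ and invoking the two displays above yields
\begin{equation*}
  \P{f(X,W) \in A} \le e^\eps\, \P{f(Y,W) \in A} + \delta.
\end{equation*}
The reverse inequality is symmetric, since $Y \closeed X$ is equivalent to $X \closeed Y$ by Definition~\ref{def:closeness}. Taking $A$ arbitrary then gives $f(X,W) \closeed f(Y,W)$, as desired.

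The argument is essentially bookkeeping; the only genuine ``hard part'' is recognizing that one needs a coupling/independence convention on $W$ for the statement to make sense, together with the measurability of the slices $B_w$ (which would be handled via a standard Fubini/monotone-class argument if one wanted to be explicit about the product measurable structure of $f$).
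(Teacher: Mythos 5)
Your proof is correct and is the standard argument for post-processing of $(\eps,\delta)$-closeness: slice on $W$, apply the marginal closeness bound to each slice preimage $B_w$, and integrate against $P_W$. The paper does not actually prove this lemma---it states it as standard with a citation to \citet[Ch.~3]{DworkRo14}---so there is no paper proof to compare against; your explicit observation that the statement implicitly requires $W$ to be independent of both $X$ and $Y$ is an appropriate and worthwhile clarification of a convention the paper leaves tacit.
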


\subsection{Mechanisms}

We use several known mechanisms, and our
procedures rely on their distributional closeness properties.  The first is
the $\topkdp$ mechanism, which (approximately) returns the largest $k$
elements of a sample. In our analysis, it will be convenient to call the
procedures we develop with noise as an argument to allow easier
tracking of distributional closeness.

\begin{algorithm}
  \DontPrintSemicolon
  \SetKwInOut{Input}{Input}
  \SetKwInOut{Output}{Output}
  \SetKwInOut{Params}{Params}
  \SetKwInOut{Noise}{Noise}
  \caption{Top-$k$ DP ($\topkdp$)
    \label{algorithm:topk-dp}}
  \Input{ data $x \in \R^p$, threshold $k$ }
  \Noise{ $\xi_1, \xi_2 \in \R^{p}$ }
  \Output{ $R \subseteq [n]$ such that $\cardp{R} = k$, $\tilde{x} \in (\R \cup \{\bot\})^p$ }

  $y_1 \leftarrow x + \xi_1$\;
  $y_2 \leftarrow x + \xi_2$\;

  $R \leftarrow$ index set comprising the $k$ largest $y_{1,j}$'s\;
    \label{line:topk_R-def}

  \For{$j \in [p]$}{
    \uIf{$j \in R$}{ \label{line:topk_m-check}
      $\tilde{x}_j \leftarrow y_{2,j}$\;
    }
    \Else{
      $\tilde{x}_j \leftarrow \bot$\; \label{line:topk_bot-ass}
    }
  }

  \Return $\tilde{x}$\;
\end{algorithm}

\begin{lemma}[$\topkdp$ mechanism, \cite{QiaoSuZh21}, Theorem 2.1]
  \label{lemma:topk-closeness}
  Let $\gamma, \diffp \in \R_+$. Let $x, x' \in \R^p$ be
  such that $\linf{x - x'} \leq \gamma$.
  Then for
  $\xi_1, \xi_2 \sim \laplace(\frac{2k\gamma}{\diffp})^{p}$,
  \begin{align*}
    \topkdp(x, k; \xi_1, \xi_2) \close{\diffp}{0} \topkdp(x', k; \xi_1, \xi_2).
  \end{align*}
\end{lemma}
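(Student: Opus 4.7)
The plan is to decompose $\topkdp$ into an adaptive composition of two mechanisms driven by independent noise: the index-selection step $M_1(x) = R$ (using only $\xi_1$), and the value-release step $M_2(x, R) = \{x_j + \xi_{2,j}\}_{j \in R}$ (using only $\xi_2$ given $R$). Because $\tilde x$ is just a re-encoding of the pair $(R, \{y_{2,j}\}_{j \in R})$ that marks the complement of $R$ with $\bot$, by post-processing (Lemma~\ref{lemma:post}) it suffices to bound the closeness of this pair, and I will assign $\diffp/2$ of the privacy budget to each of $M_1$ and $M_2$.

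The value release $M_2$ is routine. Conditioning on an arbitrary $R$ with $\cardp{R} = k$, the output is the Laplace mechanism applied independently to $k$ coordinates, each with sensitivity $\linf{x - x'} \leq \gamma$ and noise scale $2k\gamma/\diffp$, and is therefore $(\diffp/(2k), 0)$-close coordinatewise by the standard Laplace analysis. Basic composition (Lemma~\ref{lemma:joint-closeness}) over the $k$ coordinates gives $(\diffp/2, 0)$-closeness of $M_2(x, R)$ vs.\ $M_2(x', R)$ uniformly in $R$.

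The selection step $M_1$ is the main obstacle and is the content of [QiaoSuZh21, Theorem 2.1]. Here $R$ is the top-$k$ subset of $[p]$ ordered by $y_{1,j} = x_j + \xi_{1,j}$ with $\xi_{1,j} \simiid \laplace(2k\gamma/\diffp)$. A naive change-of-variable argument, translating $\xi_1$ by $x' - x$ so that the noisy vectors under $x$ and $x'$ coincide, fails because it introduces shifts on all $p$ coordinates and costs $\exp(p\gamma/\sigma_1)$ in the Laplace density ratio. The key insight of [QiaoSuZh21] is that for any target $R^\star$, only the $k$ ``active'' coordinates near the boundary between $R^\star$ and its complement contribute to the likelihood ratio on the event $\{M_1 = R^\star\}$; a careful accounting yields $P(M_1(x) = R^\star) \leq \exp(k\gamma/\sigma_1) P(M_1(x') = R^\star) = \exp(\diffp/2) P(M_1(x') = R^\star)$ for every $R^\star$, giving the required $(\diffp/2, 0)$-closeness of $M_1$.

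Combining the two bounds via the adaptive factorization $P(R, V \mid x) = P(R \mid x) P(V \mid R, x)$---a direct instantiation of basic composition (Lemma~\ref{lemma:joint-closeness}) since $\xi_1$ and $\xi_2$ are independent---gives $(\diffp, 0)$-closeness of $(R, \{y_{2,j}\}_{j \in R})$, which by post-processing (Lemma~\ref{lemma:post}) transfers to $\tilde x$ itself.
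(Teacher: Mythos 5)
There is a genuine gap. First, note the paper does not prove this lemma at all: it is quoted directly from \citet[Theorem 2.1]{QiaoSuZh21}, so the only thing to verify ``in house'' is that the algorithm and noise scales match the cited theorem. Your proposal does not actually replace that citation, since you also defer the crux---privacy of the noisy top-$k$ selection---to ``the key insight of [QiaoSuZh21].'' But the specific intermediate claim you attribute to it, namely $\P(M_1(x) = R^\star) \le e^{k\gamma/\sigma_1}\,\P(M_1(x') = R^\star)$ with $\sigma_1 = 2k\gamma/\diffp$ (i.e.\ that selection alone is $(\diffp/2,0)$-close), is false. Take $k=1$, $p=2$, $x=(0,0)$, $x'=(\gamma,-\gamma)$, so $\linf{x-x'}=\gamma$. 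Then $\P_x(R=\{2\}) = \tfrac12$, while $\P_{x'}(R=\{2\}) = \P(\xi_{1,2}-\xi_{1,1} > 2\gamma) = \tfrac14 e^{-\diffp}(2+\diffp)$ for $\laplace(2\gamma/\diffp)$ noise, giving a likelihood ratio $e^{\diffp}/(1+\diffp/2) > e^{\diffp/2}$ for every $\diffp>0$. The reason is that each ``active'' coordinate's comparison threshold can move by $2\gamma$ (its own entry down by $\gamma$ and a competitor up by $\gamma$), so the standard conditioning argument only yields $e^{2k\gamma/\sigma_1} = e^{\diffp}$ for the selection step, not $e^{\diffp/2}$; the factor $2$ is unavoidable for perturbations of arbitrary sign.

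With the correct constants your decomposition then overshoots the budget: selection costs essentially $\diffp$ and the fresh value release on $R$ costs $\diffp/2$ (that part of your argument is fine), so selection-then-values composition gives $3\diffp/2$, not $\diffp$. This is precisely why the cited result is nontrivial: Qiao--Su--Zhang analyze the \emph{joint} one-shot release of the selected indices together with the independently re-noised values, and the $(\diffp,0)$ guarantee at scale $2k\gamma/\diffp$ cannot be recovered by splitting the budget sequentially between $M_1$ and $M_2$ as you propose. As written, your argument either silently assumes a false bound for $M_1$ or reduces to re-citing the theorem being proved.
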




\newcommand{\rendiv}[2]{D_\alpha\left({#1} |\!| {#2}\right)}

\noindent
As our procedures rely on adding Gaussian noise, we require two
distributional closeness results for normal distributions.  See
Appendices~\ref{sec:proof-normal-closeness-mean}
and~\ref{sec:proof-normal-closeness-cov} for proofs, which
we include for completeness, as they are both tweaks of
existing results~\cite{DworkRo14, Mironov17}.

\begin{lemma}[Gaussians, distinct means]
  \label{lemma:normal-closeness-mean}
  Let $\mu_1, \mu_2 \in \R^d$ and let $\Sigma \in \R^{d \times d}$ be PSD.
  Suppose $\lv \mu_1 - \mu_2 \rv_{\Sigma} \leq \rho$ and
  define
  \begin{equation*}
    \tau
    = \begin{cases}
      \frac{\rho}{\diffp}
      \sqrt{2\log\frac{5}{4 \delta}}
      & \mbox{if~} 0 < \eps \le 1 \\
      \rho / \left(\sqrt{2 \log \frac{1}{\delta} + 2 \diffp} - \sqrt{2
        \log \frac{1}{\delta}}\right) & \mbox{otherwise}.
    \end{cases}
  \end{equation*}
  Then
  $\normal(\mu_1, \tau^2 \Sigma) \closeed \normal(\mu_2, \tau^2 \Sigma)$.
\end{lemma}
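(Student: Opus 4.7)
\begin{sproof}
The plan is to reduce the claim to a one-dimensional comparison of shifted Gaussians of common variance and then bound the tail of the privacy loss. By translation (a bijective post-processing), we may assume $\mu_1 = 0$. The hypothesis $\|\mu_1 - \mu_2\|_\Sigma \le \rho < \infty$ forces $\mu_2 \in \colspace(\Sigma)$, so the linear map $x \mapsto \Sigma^{\dagger/2} x$ pushes $\normal(0, \tau^2 \Sigma)$ and $\normal(\mu_2, \tau^2 \Sigma)$ to $\normal(0, \tau^2 \Pi_\Sigma)$ and $\normal(\Sigma^{\dagger/2} \mu_2, \tau^2 \Pi_\Sigma)$, respectively, with $\|\Sigma^{\dagger/2} \mu_2\|_2 = \|\mu_2\|_\Sigma \le \rho$. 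Restricting to $\colspace(\Sigma)$, Lemma~\ref{lemma:post} reduces the claim to $(\eps,\delta)$-closeness of $\normal(0, \tau^2 I_k)$ and $\normal(v, \tau^2 I_k)$ with $\|v\|_2 \le \rho$. A rotation aligns $v$ with $e_1$, and the product structure of the two distributions on the remaining coordinates then reduces everything to the 1-D statement $\normal(0, \tau^2) \closeed \normal(\|v\|_2, \tau^2)$; the privacy-loss analysis below is monotone in the separation, so it suffices to treat the worst case $\|v\|_2 = \rho$.

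Next, compute the privacy loss $L(x) = \log \frac{p_0(x)}{p_\rho(x)} = \frac{\rho^2}{2\tau^2} - \frac{\rho x}{\tau^2}$. Under $x \sim \normal(0, \tau^2)$ this is distributed as $\normal\!\left(\frac{\rho^2}{2\tau^2}, \frac{\rho^2}{\tau^2}\right)$, and a direct calculation shows the reverse privacy loss under $x \sim \normal(\rho, \tau^2)$ has the identical distribution, so the analysis is symmetric in both directions. A standard argument (cf.~\cite[Ch.~3]{DworkRo14}) then shows that $\P(L > \eps) \le \delta$ suffices for $(\eps,\delta)$-closeness, which reduces the entire lemma to establishing the Gaussian tail inequality
\begin{equation*}
  \P\!\left(Z > \frac{\tau \eps}{\rho} - \frac{\rho}{2\tau}\right) \le \delta, \qquad Z \sim \normal(0,1).
\end{equation*}

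For the general case, applying the Chernoff bound $\P(Z > t) \le e^{-t^2/2}$ yields the sufficient condition $\tau \eps/\rho - \rho/(2\tau) \ge \sqrt{2\log(1/\delta)}$; multiplying through by $2\tau\rho$ and solving the resulting quadratic in $\tau$ gives $\tau \ge \rho\bigl(\sqrt{2\log(1/\delta)} + \sqrt{2\log(1/\delta) + 2\eps}\bigr)/(2\eps)$, which, after rationalizing the denominator, is exactly the stated formula. For the sharper constant $\sqrt{2\log(5/(4\delta))}$ available when $\eps \le 1$, I would invoke the classical Dwork--Roth analysis (\cite[Thm.~A.1]{DworkRo14}): rather than a pure Chernoff bound, they select a ``good event'' $\{|x| \le c\}$ on which the privacy loss is pointwise at most $\eps$ and pay $\delta$ only on the Gaussian complement, which squeezes out the tighter constant. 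The main subtlety is the singular-$\Sigma$ reduction in the first paragraph, where the extended-valued Mahalanobis norm demands $\mu_2 - \mu_1 \in \colspace(\Sigma)$ (otherwise the two Gaussians are mutually singular and no closeness is possible) and the whitening must respect this subspace; once this point is handled cleanly, the remainder is routine Gaussian tail analysis.
\end{sproof}
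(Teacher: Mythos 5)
Your proof is correct, and for the $\eps > 1$ case it takes a genuinely different route from the paper's. The paper uses R\'{e}nyi differential privacy (Mironov): it writes down the closed-form R\'{e}nyi divergence $\rendiv{\normal(\mu_1,\tau^2\Sigma)}{\normal(\mu_2,\tau^2\Sigma)} = \frac{\alpha}{2\tau^2}\norm{\mu_1-\mu_2}_\Sigma^2$, converts to $(\eps,\delta)$-closeness via Mironov's Proposition~3, optimizes over the R\'{e}nyi order $\alpha$, and solves the resulting quadratic for $\tau$. You instead compute the privacy-loss random variable $L \sim \normal\!\bigl(\tfrac{\rho^2}{2\tau^2}, \tfrac{\rho^2}{\tau^2}\bigr)$ directly, note its symmetry under swapping the two Gaussians, and apply the sub-Gaussian tail bound to get a sufficient condition that solves to the same $\tau$. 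Your quadratic algebra checks out: rationalizing $\rho/\bigl(\sqrt{2\log\frac1\delta + 2\eps} - \sqrt{2\log\frac1\delta}\bigr)$ gives $\rho\bigl(\sqrt{2\log\frac1\delta} + \sqrt{2\log\frac1\delta + 2\eps}\bigr)/(2\eps)$, which matches. The two derivations are essentially dual---the paper's R\'{e}nyi route composes more smoothly with other R\'{e}nyi bounds, while yours is more elementary and self-contained. For the $\eps \le 1$ case both you and the paper cite the Dwork--Roth Gaussian-mechanism analysis, so there is no difference there. Your explicit reduction to the column space of a degenerate $\Sigma$ is also a bit more careful than the paper's (which silently relies on the R\'{e}nyi formula extending to the singular case); that attention to the low-rank edge case is the right instinct given the lemma statement.
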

\noindent
\citet[Lemma 4.15]{BrownGaSmUlZa21} essentially
give the next result, but
we allow low rank covariance matrices.
\begin{lemma}[Gaussians, distinct covariances]
  \label{lemma:normal-closeness-cov}
  Let $\mu \in \R^d$ and $\Sigma_1, \Sigma_2 \in \R^{d\times d}$ be
  PSD and satisfy
  $\dpsd(\Sigma_1, \Sigma_2) \leq \gamma < \infty$.
  Then $\normal\left(\mu, \Sigma_1\right) \closeed
    \normal\left(\mu, \Sigma_2\right)$
  for $\eps \geq 6\gamma \log(2/\delta)$.
\end{lemma}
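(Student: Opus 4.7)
The plan is to reduce to a full-rank problem on the common column space, whiten one of the covariances, diagonalize the other, and then control Rényi divergence on a product of univariate Gaussians before converting back to $(\diffp,\delta)$-closeness.

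First, because $\dpsd(\Sigma_1,\Sigma_2) \le \gamma < \infty$, the two matrices must share a common column space $V := \colspace(\Sigma_1) = \colspace(\Sigma_2)$; both $\normal(\mu,\Sigma_i)$ are then supported on the affine subspace $\mu + V$, so I can pick an orthonormal basis for $V$ and re-express the problem as comparing full-rank Gaussians $\normal(0,\wt\Sigma_1)$ and $\normal(0,\wt\Sigma_2)$ on $\R^{\dim V}$. The pseudoinverse in the definition of $\dpsd$ becomes the ordinary inverse in this basis, and $\dpsd(\wt\Sigma_1,\wt\Sigma_2) \le \gamma$ is preserved. Next, by post-processing (in both directions) through the invertible linear map $\wt\Sigma_1^{-1/2}$, it suffices to prove $\normal(0,I) \closeed \normal(0,M)$, where $M := \wt\Sigma_1^{-1/2}\wt\Sigma_2\wt\Sigma_1^{-1/2}$ satisfies $\nucnorm{M-I} \le \gamma$ and, symmetrically, $\nucnorm{M^{-1/2}(I-M)M^{-1/2}} \le \gamma$. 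Finally, rotational invariance of the standard Gaussian allows me to diagonalize $M = UDU^T$ and reduce to $\normal(0,I) \closeed \normal(0,D)$ with $D = \diag(\lambda_1,\ldots,\lambda_k)$, where the eigenvalues $\lambda_i$ of $M$ satisfy $\sum_i|\lambda_i-1| \le \gamma$ and $\sum_i|\lambda_i-1|/\lambda_i \le \gamma$.

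Now both measures are product measures, and Rényi divergence is additive under products:
\[
D_\alpha\!\left(\normal(0,I)\,\|\,\normal(0,D)\right) \;=\; \sum_{i=1}^{k} D_\alpha\!\left(\normal(0,1)\,\|\,\normal(0,\lambda_i)\right).
\]
A direct Gaussian-integral computation for each summand yields an expression of the form $\tfrac{1}{2(\alpha-1)}[(\alpha-1)\log\lambda_i - \log(\alpha + (1-\alpha)/\lambda_i)]$, valid whenever $\alpha(1-1/\lambda_i) < 1$. A second-order Taylor expansion at $\lambda_i = 1$ gives a bound of order $\alpha(\lambda_i-1)^2$ for $\lambda_i$ in a bounded neighborhood of $1$; the two nuclear-norm bounds above confine every $\lambda_i$ to that neighborhood and also yield $\sum_i(\lambda_i-1)^2 \le \gamma \cdot \max_i|\lambda_i-1| \le \gamma^2$. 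Summing gives a bound of the form $D_\alpha \lesssim \alpha \gamma^2$.

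To conclude, I would invoke the standard Rényi-to-$(\diffp,\delta)$ conversion (\textit{cf.} Mironov~\cite{Mironov17}): a bound $D_\alpha(P\|Q) \le \eps_0$ implies $(\eps_0 + \log(1/\delta)/(\alpha-1),\delta)$-closeness, and symmetrically in the other direction. Optimizing $\alpha$ to balance $\alpha\gamma^2$ against $\log(1/\delta)/(\alpha-1)$ produces the claimed threshold, with constants chosen to match $\diffp \ge 6\gamma\log(2/\delta)$. The main obstacle is tightness of constants: the Taylor remainder must be controlled uniformly over the allowed range of $\lambda_i$ using both directions of $\dpsd$, and the $\alpha$-optimization must be tracked carefully so that the threshold matches the stated form $6\gamma\log(2/\delta)$ rather than only $\gamma\sqrt{\log(1/\delta)}$. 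The low-rank case itself is not the hard part — that is handled once and for all by the reduction in the first paragraph — but the interplay between the two $\dpsd$ inequalities and the domain of validity of the Rényi formula requires care.
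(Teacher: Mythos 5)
Your route is genuinely different from the paper's. After the same full-rank reduction, the paper does not diagonalize or compute R\'enyi divergences for this lemma: it bounds the privacy-loss random variable $\big|\log\frac{f_1(W)}{f_2(W)}\big|$ directly with probability $1-\delta$, splitting it into a determinant term (at most $\gamma$, via $\log\det A \le \tr(A-I)$ applied in both directions of $\dpsd$) and a quadratic term $|Z^T D_1 Z|$ controlled by Hanson--Wright together with $\opnorm{D_1} \le \lfro{D_1} \le \nucnorm{D_1} \le \gamma$; every term is linear in $\gamma$, so $6\gamma\log(2/\delta)$ follows for all finite $\gamma$ at once. Your reduction, whitening, diagonalization, and per-coordinate R\'enyi formula are fine, but two quantitative steps in your sketch fail as written. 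First, the domain of validity: the univariate formula needs $\alpha + (1-\alpha)/\lambda_i > 0$, i.e.\ $\alpha < 1/(1-\lambda_i)$ when $\lambda_i < 1$; your condition $\alpha(1-1/\lambda_i)<1$ is vacuous there. Since $\sum_i|1/\lambda_i - 1| \le \gamma$ only guarantees $\lambda_i \ge 1/(1+\gamma)$, the worst case forces $\alpha - 1 \lesssim 1/\gamma$, so the unconstrained optimum $\alpha - 1 \asymp \sqrt{\log(1/\delta)}/\gamma$ is inadmissible whenever $\delta < e^{-1}$ (the divergence is genuinely infinite there). Your stated worry is therefore backwards: the stronger $\gamma\sqrt{\log(1/\delta)}$ threshold is unreachable by this route, and the Mironov conversion term alone is already $\asymp \gamma\log(1/\delta)$, which happily is of the claimed form.

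Second, with $\alpha$ capped at $1 + O(1/\gamma)$, your bound $D_\alpha \lesssim \alpha\gamma^2$ obtained from $\sum_i(\lambda_i-1)^2 \le \gamma^2$ leaves a residual of order $\gamma^2$, which exceeds $6\gamma\log(2/\delta)$ once $\gamma \gtrsim \log(2/\delta)$ --- and the lemma is asserted for every finite $\gamma$. The repair is to use both $\dpsd$ directions inside the sum rather than only to localize the $\lambda_i$: writing $t_i = \lambda_i - 1$ and using $\log(1+u) \ge u - u^2$ for $|u| = (\alpha-1)|t_i|/\lambda_i \le 1/2$, each coordinate contributes at most $\frac{t_i^2}{2\lambda_i} + \frac{\alpha-1}{2}\frac{t_i^2}{\lambda_i^2}$, and then $\sum_i \frac{t_i^2}{\lambda_i} \le \sum_i |t_i| + \sum_i \frac{|t_i|}{\lambda_i} \le 2\gamma$ while $\sum_i \frac{t_i^2}{\lambda_i^2} \le \gamma^2$, so with $\alpha - 1 \asymp 1/\gamma$ you get $D_\alpha = O(\gamma)$ and a final threshold $C\gamma\log(2/\delta)$ valid for all $\gamma$ (the symmetric direction follows by swapping $\lambda_i \leftrightarrow 1/\lambda_i$). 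That closes the argument in spirit, but you would still need to track whether the resulting constant is at most the stated $6$, which the paper's Hanson--Wright argument delivers directly.
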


We conclude with a standard guarantee for Laplacian random
vectors~\cite[e.g.][Thm.~3.6]{DworkRo14}.
\begin{lemma}[Laplace mechanism]
  \label{prop:useful-4}
  Let $\alpha,\beta > 0$ and $\clnznormr \simiid \laplace(\beta/\alpha)$. Then for any $\set{A} \subseteq \R^m$
  and $\eta \in \R^{m}$ such that $\norm{\eta}_1 \leq \beta$,
  \begin{align*}
    \P(\clnznormr \in \set{A})
    \leq \exp(\alpha)\P(\clnznormr \in \set{A} + \eta).
  \end{align*}
\end{lemma}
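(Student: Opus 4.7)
The plan is the standard density-ratio argument for the Laplace mechanism. Since the coordinates of $\clnznormr$ are i.i.d.\ $\laplace(\beta/\alpha)$, the product density on $\R^m$ factorizes as
\begin{equation*}
  p(z) = \prod_{i=1}^m \frac{\alpha}{2\beta} \exp\!\left(-\frac{\alpha |z_i|}{\beta}\right) = \left(\frac{\alpha}{2\beta}\right)^m \exp\!\left(-\frac{\alpha}{\beta}\|z\|_1\right),
\end{equation*}
which is the only structural fact I need.

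Next I would translate the event $\set{A} + \eta$ back to $\set{A}$ via the change of variables $z \mapsto z + \eta$, giving
\begin{equation*}
  \P(\clnznormr \in \set{A} + \eta) = \int_{\set{A} + \eta} p(z)\,\de z = \int_{\set{A}} p(z+\eta)\,\de z.
\end{equation*}
Thus it suffices to bound the pointwise density ratio. Using the explicit form of $p$ and the triangle inequality applied to the $\ell_1$ norm, together with the hypothesis $\norm{\eta}_1 \le \beta$,
\begin{equation*}
  \frac{p(z)}{p(z+\eta)} = \exp\!\left(\frac{\alpha}{\beta}\bigl(\norm{z+\eta}_1 - \norm{z}_1\bigr)\right) \le \exp\!\left(\frac{\alpha}{\beta}\norm{\eta}_1\right) \le \exp(\alpha).
\end{equation*}

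Integrating this pointwise bound over $\set{A}$ yields
\begin{equation*}
  \P(\clnznormr \in \set{A}) = \int_{\set{A}} p(z)\,\de z \le \exp(\alpha) \int_{\set{A}} p(z+\eta)\,\de z = \exp(\alpha)\,\P(\clnznormr \in \set{A} + \eta),
\end{equation*}
which is the desired inequality. There is no real obstacle here; the only substantive step is the triangle inequality $\norm{z+\eta}_1 - \norm{z}_1 \le \norm{\eta}_1$, and the scaling $\sigma = \beta/\alpha$ is chosen precisely so that the resulting exponent becomes $\alpha$.
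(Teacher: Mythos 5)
Your proof is correct and is precisely the standard density-ratio argument that the paper itself does not spell out but instead delegates to the cited reference (Dwork--Roth, Theorem~3.6). The factorization of the $m$-fold Laplace density into $\exp(-\tfrac{\alpha}{\beta}\norm{z}_1)$, the change of variables $z \mapsto z + \eta$, and the $\ell_1$ triangle inequality are exactly the intended ingredients, and the scaling $\sigma = \beta/\alpha$ makes the exponent collapse to $\alpha$ as you observe. No gaps.
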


\newcommand{\kmax}{k_{\max}}
\newcommand{\mmax}{m_{\max}}

\newcommand{\removedinds}{R}
\newcommand{\Transcript}{\Gamma}

\section{Algorithms}
\label{sec:main-results}



As our estimator and its full analysis are fairly involved, we provide a
broad overview of our procedures here.  We compute the estimator, whose full
treatment we give in Algorithm~\ref{algorithm:privmean} ($\privmean$) in
section~\ref{sec:privmean-alg}, in two phases, consisting of a stable
covariance estimate and a stable mean estimate. Each carefully prunes
outliers from the data, using plug-in quantities from the remaining
observations as substitutes for the usual plug-in mean and covariance.

In the first phase (Algorithm~\ref{algorithm:clip-and-repeat},
$\clipr$), we obtain a robust but non-private estimate $\est{\Sigma}$
of the covariance.  Assuming for convenience $n$ is even, we pair
observations and initially let
\begin{align*}
  \est{\Sigma} \defeq
  \frac{1}{n} \sum_{i=1}^{n/2} (x_i - x_{n/2+i})(x_i - x_{n/2+i})^T.
\end{align*}
As $x_i - x_{n/2+i}$ is symmetric, we can prune pairs of observations for
which $\norms{x_i - x_{n/2+i}}_{\est{\Sigma}}$ is large (regardless of the
population mean $\mu$), recompute $\est{\Sigma}$ on the remaining
observations, and repeat until convergence. The key is that this pruning,
while it provides no formal robustness guarantees, is stable to changes of a
single example $x_i$, ensuring $\est{\Sigma}$ itself is stable.

In the second phase (Algorithm~\ref{algorithm:meansafe}, $\meansafe$), we
first obtain a robust estimate $\est{\mu}$ of the empirical mean by trimming
outliers with respect to $\norm{\cdot}_{\est{\Sigma}}$.  Using
$\norm{x_i}_{\est{\Sigma}}$ to determine whether $x_i$ is influential for a
mean estimate is unreliable, as the quantity may be arbitrarily large even
for non-outliers if $\norm{\mu}_{\est{\Sigma}}$ itself is large;
unfortunately, paired observations (as in the stable covariance estimation
phase) are similarly unhelpful, as $\norm{x_i - x_j}_{\est{\Sigma}}$ could
be small if both $x_i, x_j$ are ``outlying'' in the same way.  Instead, we
randomly partition the $n$ observations into groups $S$ of size
$O(\log\frac{n}{\delta})$ and prune \textit{all} observations in a
group $S$ if \textit{any} two observations in $S$ are far
with respect to $\norm{\cdot}_{\est{\Sigma}}$, so that there is at least a
pair of outlying observations in the group. Assuming the total number of
pruned observations across both phases is not too large---and much of our
analysis shows how to make the pruned observations stable across different
samples $x, x'$---we let $\est{\mu}$ be the empirical mean of the un-pruned
observations, then release $\pmout \sim \normal (\est{\mu},
\sigma^2(\eps,\delta) \est{\Sigma})$, where the privacy budget determines
$\sigma^2(\eps,\delta)$.




\subsection{Stable covariance estimation}\label{sec:covsafe-alg}

The first component of the private mean estimation algorithm is the covariance
estimation procedure $\clipr$ in Alg.~\ref{algorithm:clip-and-repeat}, which
removes suitably unusual pairs of data points from the sample $x \in
(\R^d)^n$, then uses the remaining pairs to actually construct the
covariance. The procedure
maintains an empirical covariance $\Sigma_t$ of the remaining data at each
iteration $t$, so that $\{\Sigma_t\}$ is a non-increasing (in the
semidefinite order) sequence of matrices, and stores removed indices in an
iteratively growing collection $\removedinds_t$ for $t = 1, 2, \ldots$; the
procedure thus necessarily terminates after at most $n/2$ rounds of index
removal. For convenience of our analysis, $\clipr$ also returns a transcript
$\Transcript$ of the removed indices and iteratively constructed
covariances, returning $\perp$ if the data is so unstable that it removes
too many indices.


\begin{algorithm}[h]
  \DontPrintSemicolon
  \SetKwInOut{Input}{Input}
  \SetKwInOut{Output}{Output}
  \SetKwInOut{Params}{Params}
  \SetKwInOut{Noise}{Noise}
  \caption{\label{algorithm:clip-and-repeat}
    Robust Covariance Estimation ($\clipr$)}
  \Input{ data $x_{1:n}$ }
  \Params{ threshold $B$, threshold $m$ }
  \Noise{ $\clnznormf \in \R^{n/2+1}$, $\clnzthrf \in \R$ }
  $\clx \leftarrow x_{1:n/2} - x_{n/2+1:n}$\; \label{line:x-transform}

  $\removedinds_0 \leftarrow \emptyset$, $\Sigma_0 \leftarrow \frac{1}{n} \sum_{i = 1}^{n/2} \clx_{i} \clx_{i}^T$\;
  $\mathtt{converged} \leftarrow \textbf{false}$, $t \leftarrow 0$\;
  \While{\textbf{\txt{not}} $\mathtt{converged}$}{ \label{line:clipr-while}
    $t \leftarrow t + 1$, $\removedinds_t \leftarrow \removedinds_{t-1}$,
    $\mathtt{truncated} \leftarrow 0$\;
    \For{$i \in [n/2] \setminus \removedinds_{t-1}$}{
      \If{$2 \log \norm{\clx_i}_{\Sigma_{t-1}} + \clnznormf_{i} + \clnznormf_{n/2+1} > \log\(B\)$}{\label{line:clipr-truncate}
        $\removedinds_t \leftarrow \removedinds_t \cup \{i\}$\;
        $\mathtt{truncated} \leftarrow \mathtt{truncated} + 1$
      }
    }
    \If{$\mathtt{truncated} = 0$}{\label{line:clipr-convergence}
      $\mathtt{converged} \leftarrow \textbf{true}$, $T \leftarrow t$\;
    }
    $\Sigma_t \leftarrow \frac{1}{n} \sum_{i \in [n/2] \setminus \removedinds_t} \clx_i \clx_i^T$ \label{line:clipr-set-sigma}
  }
  $\Transcript \leftarrow (\[\removedinds_i\]_{t=0}^T,\[\Sigma_t\]_{t=0}^T,T )$\;
  \If{$\cardp{\removedinds_T} > m + \clnzthrf$}{\label{line:clipr-check}
    \Return $\perp,\Transcript$
  }
  \Return $\Sigma_T,\Transcript$ \label{line:clipr-return}\;
\end{algorithm}

The key is that the covariance estimates are appropriately stable
(see Conditions~\ref{item:internal-stability} and~\ref{item:external-stability}
to come in Section~\ref{sec:main-privacy-result}),
and with high probability on any given input $x$, the algorithm guarantees
that its output changes little when we remove index $i$ or, if the data has
too much variance relative to itself, that the procedure simply returns
$\est{\Sigma} = \perp$. To allow cleaner description of the precise results we
require in our main privacy result in Section~\ref{sec:main-privacy-result},
for a putative bound $B$ on $\norm{x_i - x_j}_\Sigma^2$,
acceptable number of outliers $m$,
and privacy random variables $\clnznormr$ and
$\clnzthrr$ to be specified, let
\begin{subequations}
  \label{eqn:clipr-output}
  \begin{equation}
    \label{eqn:clipr-main-output}
    (\est{\Sigma}, \Transcript) \defeq \clipr_{\clparams}(x; \clnznormr,
    \clnzthrr),
  \end{equation}
  where
  $\Transcript = ([\Sigma_t]_{t \le T}, [\removedinds_t]_{t \le T},T)$
  is the transcript of intermediate covariances and removed indices, and
  for $\clx = x_{1:n/2} - x_{n/2+1:n}$ (as in Line~\ref{line:x-transform})
  define the leave-one-out covariance
  \begin{equation}
    \label{eqn:clipr-loo-output}
    \est{\Sigma}_{-i} \defeq
    \begin{cases}
      \est{\Sigma} - \frac{1}{n}
      \indic{i \in \removedinds_T} \clx_i \clx_i^T
      & \mbox{if~} \est{\Sigma}\neq \perp \\
      \perp & \mbox{otherwise},
    \end{cases}
  \end{equation}
\end{subequations}
which is $\est{\Sigma}$ whenever $\clipr$ does not remove index
pair $(i, n/2 + i) \in [n]^2$.


\subsection{Stable mean estimation}\label{sec:meansafe-alg}

The second component of the private mean estimation algorithm is a
sample mean estimator, adding noise commensurate with an estimated
(positive semidefinite) noise covariance that we abstractly call $A \in
\R^{d \times d}$. The procedure $\meansafe$ removes elements $x_i$ of the
data $x$ that are ``too far'' from the bulk of the data, measured by
$\norm{x_i - x_{i'}}_A$, using randomization to be sure that the removed
indices are appropriately private. The algorithm uses
$\topkdp$ to select groups of indices
that contain too many outlying datapoints, then removes all data associated
with these groups.  By evaluating (random) groups of data, the
procedure enforces privacy in that if
the majority of the data are appropriately close to a center point as
measured by covariance, then few groups have large diameter, and adding
or removing a single datapoint $x_i$ can only effect the removal of one
group and the method may privately
return a noisy empirical mean. When many
datapoints are outliers, the method is likely to return $\perp$ regardless
of the behavior of any individual datapoint.

\begin{algorithm}[h]
  \DontPrintSemicolon
  \SetKwInOut{Input}{Input}
  \SetKwInOut{Output}{Output}
  \SetKwInOut{Params}{Params}
  \SetKwInOut{Noise}{Noise}
  \Input{ data $x_{1:n}$,
    PSD matrix $\genpsd \in \R^{d \times d}$ }
  \Params{ threshold $B$, batchsize $b$, threshold $k$
    }
  \Noise{ $\msnzgf = (\partitionset_1, \ldots, \partitionset_{n/b})
    \in \Partitions_{n,b}$,
    $\msnztopif, \msnztopiif \in \R^{n/b}$,
    $\msnzthrf \in \R$,
    $\msnznf \in \R^d$ }
  \Output{ mean estimate $\msout$
  }
  \caption{\label{algorithm:meansafe} Robust Mean Estimation ($\meansafe$)}

  \For{$j \in [n/b]$}{
    $D_j \leftarrow \log (\diam_\genpsd(x_{\partitionset_j}))$\;
      \label{line:ms_m-ass}
  }

  $\wt{D} \leftarrow \topkdp(D, k; \msnztopif, \msnztopiif)$\;
  \label{line:ms_topk-call}

  $\removedinds \leftarrow \emptyset$,
  $t \leftarrow 0$ \tcc*[f]{initialize removed indices to empty}\;

  \For{$j \in [n/b]$}{ \label{line:ms_rejection-for}
    \If{$\wt{D}_j \neq \bot$ \textbf{and}
      $\wt{D}_j > \log(\sqrt{B}/4)$}{ \label{line:ms_m-check}
      $\removedinds \leftarrow \removedinds \cup \partitionset_j$\;
        \label{line:ms_s-union}
      $t \leftarrow t+1$\;
    }
  } \label{line:ms_end-rejection-for}

  $\msmean \leftarrow \frac{1}{n-\cardp{\removedinds}}
  \sum_{i \not\in \removedinds} x_i$\; \label{line:ms-mean-def}

  \uIf{$t > \frac{2k}{3} + \msnzthrf$}{ \label{line:ms_thres-comp}
    $\msout \leftarrow \bot$\; \label{line:ms_bot-output}
  }
  \Else{
    $\msout \leftarrow \msmean + A^{1/2} \msnznf$\; \label{line:ms_vec-output}
  }
  $\Transcript \leftarrow \big(D, \wt{D}, \removedinds, t, \msmean\big)$\;
    \label{line:ms-transcript}
  \Return $\msout, \Transcript$\;

\end{algorithm}

For use in Section~\ref{sec:main-privacy-result}, as with $\clipr$,
we assign notation to the outputs of $\meansafe$.  Let
$x \in \R^{n \times d}$ be an arbitrary sample and $\genpsd$ an arbitrary
positive semidefinite matrix. For parameters
defining the supposed bound $B$ on $\norm{x_i - x_j}_\Sigma^2$,
group size $b$, acceptable outlier count $k$,
and privacy random
variables $(\msnzgr, \msnztopir, \msnztopiir, \msnzthrr, \msnznr)$, all to be
specified later, define
\begin{equation}
  \label{eqn:meansafe-output}
  (\wt{\mu}(x, \genpsd), \Transcript(x, \genpsd))
  \defeq \meansafe_{\msparams}(x, \genpsd;
  \msnzgr, \msnztopir, \msnztopiir, \msnzthrr, \msnznr).
\end{equation}

\subsection{The private mean estimation algorithm}\label{sec:privmean-alg}

Given $\clipr$ and $\meansafe$, Algorithm~\ref{algorithm:privmean}
($\privmean$) combines the two (with appropriate parameter settings) to
perform private mean estimation.  First, $\privmean$ computes a stable
covariance estimate via $\clipr$, and assuming the returned covariance
estimate $\est{\Sigma} \neq \perp$, then computes a trimmed mean to which it
adds Gaussian noise with covariance proportional to $\est{\Sigma}$ using
$\meansafe$. Theorem~\ref{thm:pm-privacy} in
Section~\ref{sec:main-privacy-result} shows that the parameter choices
guarantee privacy.

\begin{algorithm}
  \DontPrintSemicolon
  \SetKwInOut{Input}{Input}
  \SetKwInOut{Output}{Output}
  \SetKwInOut{Params}{Params}
  \caption{\label{algorithm:privmean}
    Covariance Adaptive Private Mean Estimation ($\privmean$)}
  \Input{ data $x_{1:n}$ }
  \Params{ threshold $B$, privacy budget $(\eps,\delta)$ }
  \Output{ mean estimate $\msout$ }

  \phase{Robust Covariance Estimation}
  $m \leftarrow \frac{16}{\diffp}\log \frac{1}{\delta}$,
  $\mmax \leftarrow m + \frac{16}{\diffp} \log \frac{1 + e^{\diffp/4}}{\delta}$
  \label{line:pm-m-setting} \;
  $\sigma_Z \leftarrow
  \frac{32 \sqrt{e} B (\mmax+1)}{n \diffp}$,
  $\Wcovscale \leftarrow \frac{16}{\diffp}$
  \label{line:cl-noise-scale-setting} \;
  $\pmclnznormr \sim \laplace(\sigma_Z)^{n/2+1}$,
  $\pmclnzthrr \sim \laplace(\Wcovscale)$
  \label{line:pm-cl-noise-setting} \;

  $\clout, \Transcript_{\clcode} \leftarrow \clipr_{\clparams}(x; \pmclnznormr, \pmclnzthrr)$\; \label{line:pm_y-ass}
  
  \If{$\clout = \bot$}{
    \Return $\bot$\;
  }

  \phase{Private Mean Estimation}
  $b \leftarrow 1 + \log_2\frac{6 n^2}{\delta}$,
  $k \leftarrow \frac{24}{\diffp}\log\frac{3}{\delta} - 3$\;
  $\mstopkscale \leftarrow \frac{8 k}{n \diffp}
  \frac{B \sqrt{e}}{1 - B \sqrt{e} / n}$,
  $\msnscale \leftarrow
  \frac{20b\sqrt{B}}{n \diffp}
  \exp(3 \mstopkscale \log \frac{12n}{b \delta})$,
  $\Wmeanscale \leftarrow \frac{8}{\diffp}$
  \;
  \label{line:pm_ms-scale-def}
  $\msnzgr \sim \uniform(\Partitions_{n, b})$,
  $Z\subtop, Z\subtop' \simiid \laplace(\mstopkscale)^{n/b}$,
  $W \sim \laplace(\Wmeanscale)$,
  $\msnznr \sim \normal(0, \msnscale^2 I_{d\times d})$\;
    \label{line:pm_msnz}
  $\msout, \Transcript_{\mscode} \leftarrow
    \meansafe_{\msparams}(x, \clout;
      \msnzgr, Z\subtop, Z\subtop', W, \msnznr)$\;
      \label{line:pm_ms-call}
  \Return $\msout$
\end{algorithm}

We remark briefly on the runtime of $\privmean$.  Each iteration of the
\textbf{while} loop (beginning in Line~\ref{line:clipr-while}) of $\clipr$
involves a $d \times d$ matrix inversion followed by taking (at most) $n \ge
d$ matrix-vector products, requiring $O(n d^2)$.  We may modify $\clipr$
without changing its behavior to terminate after $m + \pmclnzthrr$
iterations, as rejecting more than $m + \pmclnzthrr$ indices guarantees that
$\clipr$ (and hence $\privmean$) returns $\bot$ (see
Line~\ref{line:clipr-check}). With high probability, we have $m +
\pmclnzthrr = O(\frac{1}{\diffp} \log \frac{1}{\delta})$, and giving runtime
$O(n d^2 \min\{n, \frac{1}{\diffp} \log\frac{1}{\delta}\})$.
$\clipr$'s runtime dominates $\meansafe$'s,
giving total (high probability) runtime
$O(n d^2 \min\{n, \frac{1}{\diffp} \log \frac{1}{\delta}\})$.
As an aside, we may convert
this expected runtime into a worst-case runtime of the same order
without effecting the privacy of $\privmean$ by truncating
$\pmclnzthrr$ to scale $\frac{1}{\diffp} \log \frac{1}{\delta}$.


\section{Main privacy result}
\label{sec:main-privacy-result}
The analysis of $\privmean$ is fairly involved, though there are
four key building blocks.
The first two conditions involve what we term \emph{internal} and
\emph{external} leave-one-out stability of the covariance
estimates~\eqref{eqn:clipr-main-output} and~\eqref{eqn:clipr-loo-output}
$\clipr$ returns. These conditions require that the covariance
estimates~\eqref{eqn:clipr-output} are appropriately stable,
both in terms of removing a single element contributing to the covariance
estimate $\est{\Sigma}$ on input $x$ and in terms of stability
across two inputs $x, x'$ whose transformations in Line~\ref{line:x-transform}
of $\clipr$, i.e.,
$\clx = x_{1:n/2} - x_{n/2+1:n}$ and $\clx' = x_{1:n/2}' - x_{n/2+1:n}'$,
differ only in a single element.
Letting $0 \le a < \infty$ be a constant to be determined
later and $\gamma \in (0, 1)$ be a probability, consider the conditions
\begin{enumerate}[label=(C.\roman*),labelindent=0pt]
\item \label{item:internal-stability} \emph{Internal leave-one-out
  stability.}  Let $\est{\Sigma}$ and $\est{\Sigma}_{-i}$ be the
  outputs~\eqref{eqn:clipr-output} of $\clipr$ on an arbitrary input $x$ of
  size $n$.  Then for any index $i \in [n/2]$, with probability at least $1 -
  \gamma$,
  \begin{equation*}
    \dpsd(\est{\Sigma}, \est{\Sigma}_{-i})
    \le \frac{a}{n}
    ~~ \mbox{or} ~~
    \est{\Sigma} = \perp.
  \end{equation*}
\item \label{item:external-stability}
  \emph{External leave-one-out stability.}
  Let $\est{\Sigma}$ and $\est{\Sigma}'$ be the outputs of
  $\clipr$ on inputs $x, x'$ of size $n$ such that $\clx$ and $\clx'$ differ 
  only in index $i \in [n/2]$, where
  $\est{\Sigma}_{-i}$ and $\est{\Sigma}'_{-i}$ are defined
  as in~\eqref{eqn:clipr-loo-output}. Then
  \begin{equation*}
    \est{\Sigma}_{-i} \closeed \est{\Sigma}'_{-i}.
  \end{equation*}
\end{enumerate}


The second two conditions involve the noisy truncated mean
estimate~\eqref{eqn:meansafe-output} $\meansafe$ outputs. The first of these
conditions \ref{item:mean-sample} essentially states $\meansafe$ is stable
over inputs $x$ and $x'$ differing in a single element, while the second
states that $\meansafe$ applied with identical input samples $x, x'$ but
different covariance estimates $A, A'$ is stable so long as $A,A'$ are
close in the same sense as in Condition~\ref{item:internal-stability}.

\begin{enumerate}[label=(C.\roman*),labelindent=0pt]
  \setcounter{enumi}{2}
\item \label{item:mean-sample} \emph{Mean sample stability}. Let
  $\wt{\mu}(A, x)$ be the mean $\meansafe$
  outputs~\eqref{eqn:meansafe-output} on input covariance $A$ and data $x$,
  and let $x, x'$ differ only in one element.  Then
  \begin{equation*}
    \wt{\mu}(x, A) \closeed \wt{\mu}(x', A).
  \end{equation*}
\item \label{item:mean-covariance}
  \emph{Mean covariance stability}.
  If $\dpsd(A, A') \le \frac{a}{n}$, then
  $\wt{\mu}(x, A) \closeed \wt{\mu}(x, A')$.
\end{enumerate}

Conditions~\ref{item:internal-stability}--\ref{item:mean-covariance} form
the basic privacy building blocks to show that
$\privmean$ is differentially private, and the following
proposition---a warm-up for the full Theorem~\ref{thm:pm-privacy}
to come---shows how we may relatively easily
synthesize the conditions to achieve privacy.
\begin{proposition}
  \label{proposition:condition-composition}
  Let samples $x, x'$ differ in a single element, and let $\est{\Sigma}$ and
  $\wt{\mu}(x, \est{\Sigma})$ and $\est{\Sigma}'$ and
  $\wt{\mu}(x', \est{\Sigma}')$ be the covariance and mean
  estimates~\eqref{eqn:clipr-output} and~\eqref{eqn:meansafe-output}
  for inputs $x$ and $x'$, respectively.
  Let Conditions~\ref{item:internal-stability}--\ref{item:mean-covariance}
  hold. Then
  \begin{equation*}
    \wt{\mu}(x, \est{\Sigma})
    \close{4\diffp}{(e^{3 \diffp} + e^\diffp) \delta
      + (e^{2 \diffp} + 1) (\delta + \gamma)}
    \wt{\mu}(x', \est{\Sigma}').
  \end{equation*}
\end{proposition}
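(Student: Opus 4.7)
The plan is to chain $\wt{\mu}(x,\est{\Sigma})$ to $\wt{\mu}(x',\est{\Sigma}')$ through three intermediate random variables and apply group composition (Lemma~\ref{lemma:group-privacy}). Let $i \in [n/2]$ denote the index at which the transformed samples $\clx, \clx'$ (from Line~\ref{line:x-transform} of $\clipr$) differ, and write $\est{\Sigma}_{-i}$ and $\est{\Sigma}'_{-i}$ for the leave-one-out covariances~\eqref{eqn:clipr-loo-output} produced from inputs $x$ and $x'$ respectively. I consider the sequence
\begin{equation*}
X_1 = \wt{\mu}(x, \est{\Sigma}),\quad
X_2 = \wt{\mu}(x', \est{\Sigma}),\quad
X_3 = \wt{\mu}(x', \est{\Sigma}_{-i}),\quad
X_4 = \wt{\mu}(x', \est{\Sigma}'_{-i}),\quad
X_5 = \wt{\mu}(x', \est{\Sigma}').
\end{equation*}

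The four adjacent closeness bounds, in order, are: (a) $X_1 \closeed X_2$ from mean sample stability~\ref{item:mean-sample}, applied conditionally on the $\clipr$ randomness $R_1$ and integrated out, using that the $\meansafe$ noise $R_2$ is independent of $R_1$; (b) $X_2 \close{\diffp}{\delta+\gamma} X_3$ via internal stability~\ref{item:internal-stability}, which ensures that on an event of $R_1$-probability at least $1-\gamma$ either $\dpsd(\est{\Sigma}, \est{\Sigma}_{-i}) \le a/n$ (so that mean covariance stability~\ref{item:mean-covariance} yields conditional $(\diffp, \delta)$-closeness) or $\est{\Sigma} = \perp$ (so that both outputs are $\perp$ and closeness is trivial), with the remaining $\gamma$ probability absorbed into the failure parameter; (c) $X_3 \closeed X_4$ from external stability~\ref{item:external-stability}, $\est{\Sigma}_{-i} \closeed \est{\Sigma}'_{-i}$, combined with post-processing (Lemma~\ref{lemma:post}) through the randomized map $A \mapsto \meansafe(x', A; R_2)$; (d) $X_4 \close{\diffp}{\delta+\gamma} X_5$ by the same argument as (b), now applied to input $x'$.

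Group composition (Lemma~\ref{lemma:group-privacy}) with $\diffp_i = \diffp$ for all $i$ and failure parameters $(\delta,\, \delta+\gamma,\, \delta,\, \delta+\gamma)$ then yields $X_1 \close{4\diffp}{\delta'} X_5$ with
\begin{equation*}
\delta' = e^{3\diffp}\delta + e^{2\diffp}(\delta+\gamma) + e^{\diffp}\delta + (\delta+\gamma)
= (e^{3\diffp}+e^{\diffp})\delta + (e^{2\diffp}+1)(\delta+\gamma),
\end{equation*}
which matches the proposition. The only non-routine design point is the ordering of the chain: the covariance-bridging steps (b) and (d), which each cost a $\gamma$, are deliberately placed at the second and fourth rungs so that their $\gamma$ contributions are multiplied by the smaller composition coefficients $e^{2\diffp}$ and $1$, rather than $e^{3\diffp}$ and $e^{\diffp}$; this is precisely what produces the factor $(e^{2\diffp}+1)$ in the stated bound. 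The main conceptual obstacle is simply that conditions~\ref{item:mean-sample} and~\ref{item:mean-covariance} are phrased for deterministic covariance arguments, but the independence $R_1 \perp R_2$ lets the fixed-input versions apply after conditioning, and then integrate unconditionally.
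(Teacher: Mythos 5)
Your proof is correct, and the skeleton is the same as the paper's—each rung invokes the same condition with the same conditioning-on-$\clipr$-randomness argument—but the decomposition is genuinely different: you pass through the intermediates $\wt{\mu}(x',\est{\Sigma}) \to \wt{\mu}(x',\est{\Sigma}_{-i}) \to \wt{\mu}(x',\est{\Sigma}'_{-i}) \to \wt{\mu}(x',\est{\Sigma}')$, swapping the sample to $x'$ first and keeping it there, whereas the paper holds the sample at $x$ for the first three transitions ($\wt{\mu}(x,\est{\Sigma}) \to \wt{\mu}(x,\est{\Sigma}_{-i}) \to \wt{\mu}(x,\est{\Sigma}'_{-i}) \to \wt{\mu}(x,\est{\Sigma}')$) and swaps to $x'$ only at the last step. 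The per-rung parameters are thus $(\delta,\delta+\gamma,\delta,\delta+\gamma)$ in your chain versus $(\delta+\gamma,\delta,\delta+\gamma,\delta)$ in the paper's, and this ordering is not cosmetic: feeding your sequence into Lemma~\ref{lemma:group-privacy} as literally stated ($\delta' = \sum_i e^{\diffp_{>i}}\delta_i$) yields exactly the proposition's expression $(e^{3\diffp}+e^\diffp)\delta + (e^{2\diffp}+1)(\delta+\gamma)$, while the paper's own chain yields $(e^{3\diffp}+e^\diffp)(\delta+\gamma) + (e^{2\diffp}+1)\delta$, with the roles of $\delta$ and $\delta+\gamma$ swapped. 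So your ordering is the one that actually matches the stated bound. One caveat applies equally to both your argument and the paper's: because $\closeed$ is a two-sided relation, the correct composed $\delta'$ is the maximum of the forward and backward chained sums, and for either ordering that maximum carries the $\gamma$-coefficient $e^{3\diffp}+e^\diffp$ rather than the smaller $e^{2\diffp}+1$; the tighter coefficient holds in only one direction of the inequality. Since this comes from how Lemma~\ref{lemma:group-privacy} is stated (as a one-sided formula used for the two-sided relation), it is not a gap you introduced.
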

\begin{proof}
  As $x$ and $x'$ are adjacent, there exists $i \in [n/2]$
  such that $\clx_{-i} = \clx_{-i}'$.
  We have a string of approximate distributional
  equalities that, together with the transitivity
  of distributional closeness implied by group privacy 
  (Lemma~\ref{lemma:group-privacy}), make the proposition immediate.
  First, we show that conditions~\ref{item:internal-stability}
  and~\ref{item:mean-covariance} imply
  \begin{equation*}
    \wt{\mu}(x, \est{\Sigma})
    \close{\diffp}{\delta + \gamma} \wt{\mu}(x, \est{\Sigma}_{-i})
    ~~~ \mbox{and} ~~~
    \wt{\mu}(x', \est{\Sigma}')
    \close{\diffp}{\delta + \gamma} \wt{\mu}(x', \est{\Sigma}_{-i}').
  \end{equation*}
  We prove the first equality as the argument for the second is identical.
  Treating $x$ as fixed, let
  $\mc{E}$ be the event that
  $\dpsd(\est{\Sigma}, \est{\Sigma}_{-i}) \le \frac{a}{n}$ or
  $\est{\Sigma} = \perp$. Then  
  for any measurable set $O$ we have
  \begin{align*}
    \P(\wt{\mu}(x, \est{\Sigma}) \in O)
    & = \Ep\left[\P(\wt{\mu}(x, \est{\Sigma}) \in O \mid \est{\Sigma})
      \indic{\mc{E}}
      \right]
    + \Ep\left[\P(\wt{\mu}(x, \est{\Sigma}) \in O \mid \est{\Sigma})
      \indic{\mc{E}^c}\right] \\
    & \stackrel{(i)}{\le}
    \Ep\left[\left(e^{\diffp} \P(\wt{\mu}(x, \est{\Sigma}_{-i}) \in O
      \mid \est{\Sigma}_{-i}) + \delta\right)
      \indic{\mc{E}}\right]
    + \P(\mc{E}^c) \\
    & \le e^\diffp \P(\wt{\mu}(x, \est{\Sigma}_{-i}) \in O) + \delta
    + \gamma,
  \end{align*}
  where inequality~$(i)$ is Condition~\ref{item:mean-covariance}
  and the final inequality follows from the $\gamma$ probability
  bound in Condition~\ref{item:internal-stability}.
  Second, we have the distributional approximations
  \begin{equation*}
    \wt{\mu}(x, \est{\Sigma}_{-i})
    \close{\diffp}{\delta}
    \wt{\mu}(x, \est{\Sigma}_{-i}')
  \end{equation*}
  by Condition~\ref{item:external-stability}, because
  post-processing preserves distributional closeness (Lemma~\ref{lemma:post}).
  Finally, we observe from the mean sample stability
  condition~\ref{item:mean-sample}
  that
  \begin{equation*}
    \wt{\mu}(x, \est{\Sigma}')
    \close{\diffp}{\delta} \wt{\mu}(x', \est{\Sigma}').
  \end{equation*}
  Combining each distributional equality, we have
  \begin{align*}
    \wt{\mu}(x, \est{\Sigma})
    \close{\diffp}{\delta + \gamma}
    \wt{\mu}(x, \est{\Sigma}_{-i})
    & \close{\diffp}{\delta} \wt{\mu}(x, \est{\Sigma}'_{-i})
    \close{\diffp}{\delta + \gamma} \wt{\mu}(x, \est{\Sigma}')
    \close{\diffp}{\delta} \wt{\mu}(x', \est{\Sigma}').
  \end{align*}
  Apply Lemma~\ref{lemma:group-privacy}.
\end{proof}

Finally, then, we come to our main privacy theorem, which
verifies that the procedures making up $\privmean$ indeed satisfy
Conditions~\ref{item:internal-stability}--\ref{item:mean-covariance} with
appropriate constants. We state the theorem here,
giving a proof that consists of lemmas making precise the constants
that appear in the conditions and whose proofs we defer.
\begin{theorem}\label{thm:pm-privacy}
  Let $B < \infty$, $\delta \in (0, 1)$,
  and let $x, x' \in (\R^d)^n$ be adjacent samples, and
  let $\diffp \le 8$.
  Define
  $\delta' = (e^{3 \diffp / 4} + e^{\diffp/4}) \delta
  + 2 (e^{\diffp/2} + 1) \delta$ and let
  $m \in \N$ be as in line~\ref{line:pm-m-setting} of
  $\privmean$.
  Assume that $\delta \le \frac{1}{n}$ and $n$ is large enough that
  \begin{equation*}
    n \ge
    \frac{128 \sqrt{e} B \log \frac{n(1 + e^{\diffp/4})}{\delta}}{\diffp}
    \left(m + 1 + \frac{16}{\diffp} \log \frac{1 + e^{\diffp/4}}{\delta}\right)
    = O(1) \frac{B \log^2 \frac{1}{\delta}}{\diffp^2}.
  \end{equation*}
  Then
  $\privmean_{B, (\diffp, \delta)}(x)$ is $(\diffp, \delta')$-differentially
  private.
\end{theorem}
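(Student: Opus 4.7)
The plan is to invoke Proposition~\ref{proposition:condition-composition} with the privacy parameter $\diffp$ replaced by $\diffp/4$ and the condition-failure probability set to $\gamma = \delta$: direct substitution shows the resulting closeness parameters match the theorem's claim, since $4 \cdot (\diffp/4) = \diffp$ and $(e^{3\diffp/4} + e^{\diffp/4})\delta + (e^{\diffp/2}+1)(\delta+\delta) = \delta'$. What remains is to verify that the noise-scale choices on lines~\ref{line:cl-noise-scale-setting} and~\ref{line:pm_ms-scale-def} of $\privmean$ enforce Conditions~\ref{item:internal-stability}--\ref{item:mean-covariance} with parameters $(\diffp/4, \delta)$, for some constant $a$ that the forthcoming Condition~\ref{item:mean-covariance} analysis also accepts. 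The lower bound on $n$ is what allows the truncation threshold $B$ together with $m+W_\clcode$ rejected pairs to leave the empirical covariance non-degenerate and the subsequent mean estimate meaningful.

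For Conditions~\ref{item:internal-stability} and~\ref{item:external-stability}, I would analyze $\clipr$ iteration by iteration. The check on Line~\ref{line:clipr-truncate} is a Laplace-noise threshold test on the scalar $2\log\norm{\clx_i}_{\Sigma_{t-1}}$; since removing a single $\clx_i$ or swapping one input can perturb $\Sigma_{t-1}$ by only a rank-one matrix of Mahalanobis magnitude at most $B/n$ (by the truncation rule), the noise scale $\sigma_Z = 32\sqrt{e}B(\mmax+1)/(n\diffp)$ is calibrated exactly to absorb this perturbation across the at most $\mmax + 1$ rounds. For Condition~\ref{item:internal-stability}, this yields $\dpsd(\clout, \Sigmi) \leq a/n$ with $a = O(B)$ on the event, of probability $\geq 1 - \delta$, that the Laplace variables exceed none of the boundary gaps; otherwise $\clipr$ returns $\perp$ anyway. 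For Condition~\ref{item:external-stability}, I would couple the noise between the two runs on $x, x'$ and apply Lemma~\ref{prop:useful-4} with $\alpha = \diffp/4$ to the $\mmax+1$ coordinate-level threshold tests that can disagree; the final rejection check on Line~\ref{line:clipr-check}, driven by the extra Laplace variable $\pmclnzthrr$ of scale $\Wcovscale = 16/\diffp$, either returns $\perp$ under both runs or guarantees $|R_T| = |R_T'|$, which makes $\Sigmi = \Sigmi'$ deterministically once we remove the contribution of index $i$.

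For $\meansafe$ I would decompose the analysis into its three noise sources. The uniformly random partition $\msnzgf$ ensures that if $x, x'$ differ in one element then the diameter vectors $D, D'$ differ in a single coordinate, so Lemma~\ref{lemma:topk-closeness} applied with $\mstopkscale$ gives an $(\diffp/c, 0)$ closeness of $\wt{D}$ for some small $c$. The Laplace threshold test on Line~\ref{line:ms_thres-comp} with scale $\Wmeanscale = 8/\diffp$ stably decides whether to output $\bot$. Conditional on proceeding, the trimmed mean $\msmean$ changes in Mahalanobis norm by at most $O(\sqrt{B}/n)$ under a single swap, which, combined with Lemma~\ref{lemma:normal-closeness-mean} and the noise scale $\msnscale$ (whose exponential factor exactly compensates for the slack introduced by post-processing the topk output), yields Condition~\ref{item:mean-sample}. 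For Condition~\ref{item:mean-covariance}, the data-dependent pieces $D$, $\wt{D}$, $R$, $t$, and $\msmean$ all depend on $A$ only through $\norm{\cdot}_A$; when $\dpsd(A, A') \leq a/n$ the column-space condition implied by $\dpsd < \infty$ ensures the truncation sets coincide exactly, and Lemma~\ref{lemma:normal-closeness-cov} then handles the closeness of $\msnscale A^{1/2}\msnznf$ versus $\msnscale (A')^{1/2}\msnznf$.

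The main obstacle will be Condition~\ref{item:external-stability}: the iterative structure of $\clipr$ means that a single differing truncation at iteration $t$ perturbs $\Sigma_t$, which can then flip subsequent truncation decisions at iterations $t+1, t+2, \ldots$, potentially cascading through the entire run. Controlling this requires an inductive argument showing that, conditional on the shared Laplace noise not falling in any coordinate-wise boundary region of width $O(B/n)$, the two runs agree on the removal of every index $j \neq i$ throughout all iterations; only the decisions involving index $i$ itself remain, and there are at most $\mmax+1$ of them, which is precisely what the choice of $\sigma_Z$ and the tail probability budget $\delta$ are designed to swallow. The $\pmclnzthrr$ threshold on Line~\ref{line:clipr-check} serves as the crucial safety valve: whenever the cascade produces more disagreement than the coupling can absorb, both runs trigger the same $\perp$ output, restoring closeness trivially.
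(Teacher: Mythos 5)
Your high-level plan is the same as the paper's: apply Proposition~\ref{proposition:condition-composition} with privacy parameter $\diffp/4$ and condition-failure probability $\gamma = \delta$, then verify Conditions~\ref{item:internal-stability}--\ref{item:mean-covariance} at parameters $(\diffp/4,\delta)$ via the noise-scale choices in $\privmean$. Your parameter bookkeeping, namely that $4\cdot(\diffp/4)=\diffp$ and $(e^{3\diffp/4}+e^{\diffp/4})\delta + (e^{\diffp/2}+1)(2\delta) = \delta'$, is correct, and the qualitative observations about the cascade in $\clipr$ and the thresholds acting as safety valves are sound intuition that the paper makes precise.

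However, two of the claims you make about how the conditions actually get verified are wrong in ways that matter. For Condition~\ref{item:external-stability} you assert that the threshold check on Line~\ref{line:clipr-check} either returns $\perp$ under both runs ``or guarantees $|R_T|=|R_T'|$, which makes $\Sigmi = \Sigmi'$ deterministically.'' This is not what happens, and no such deterministic equality can hold: differing noise realizations (or a cascade triggered by the perturbed $\Sigma_t$) can produce genuinely different removal sets $\removedinds_{T,-i} \neq \removedinds'_{T',-i}$ even when both runs survive the threshold check, and even equal cardinalities would not give equal sets or equal covariance estimates. What the paper's Lemma~\ref{lemma:clipr-loo-stability} actually establishes is a \emph{distributional} closeness, via an explicit coupling map $\pi$ on the Laplace noise $Z$ together with the deterministic monotone comparison Lemma~\ref{lemma:deterministic-removal-relations}; the Line~\ref{line:clipr-check} threshold is used to bound the probability-ratio damage (via the density smoothness of $W$), not to enforce equality of removal sets. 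Your sketch skips the ingredient that actually does the work.

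Similarly, for Condition~\ref{item:mean-covariance} you claim that $\dpsd(A,A')\le a/n$ ``ensures the truncation sets coincide exactly.'' They do not: changing $A$ to $A'$ shifts each log-diameter $D_j$ by up to $a/(2n)$ (Observation~\ref{observation:log-norm-sens}), so the scores fed into $\topkdp$ change, and the selected index set can differ. The paper handles this by invoking the $\topkdp$ closeness guarantee (Lemma~\ref{lemma:topk-closeness}) with $\linf{D-D'}\le a/(2n)$ as the sensitivity, giving $\wt{D}\close{\alpha_2}{0}\wt{D}'$, and then combining that with the Gaussian-covariance closeness of Lemma~\ref{lemma:normal-closeness-cov} via basic composition and post-processing. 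Exact coincidence of the truncation sets is neither true nor needed. Also, your explanation of the role of the lower bound on $n$ (keeping the empirical covariance non-degenerate) is off; in the actual proof the bound on $n$ is needed to make the noise scales small enough so that $n\exp(-\frac{1}{4\sigma_Z})\le\gamma$ and $\alpha_1 = \frac{6a}{n}\log\frac{2}{\delta}\le\frac{\diffp}{8}$, i.e., to fit the privacy budget, not for a non-degeneracy argument.
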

\noindent
As a brief remark, the condition $\diffp \le 8$ is only for convenience;
a minor modification of the proof
allows arbitrary $\diffp$ at the expense of
a more convoluted theorem statement but in which $n$ remains of the same order.
\begin{proof}
  By Proposition~\ref{proposition:condition-composition}, it suffices to
  verify
  Conditions~\ref{item:internal-stability}--\ref{item:mean-covariance},
  where we demonstrate each holding with parameters $(\diffp/4, \delta)$.
  Throughout the proof, the value $m \in \N$ (line~\ref{line:pm-m-setting}
  in $\privmean$) and parameter $B < \infty$ remain tacit, as the
  privacy guarantee holds regardless.

  First, we consider Conditions~\ref{item:internal-stability} and
  \ref{item:external-stability} on the covariance estimates. We prove the
  coming two lemmas in Section~\ref{sec:covariance}, which begins with
  preliminaries that we require for their proofs before giving the proofs
  proper.  Our first lemma provides sufficient conditions to verify
  Condition~\ref{item:internal-stability}, internal stability.  Let $z \in
  \R^{n/2 + 1}$ and $w \in \R$ be variables---these will be random to allow
  privacy presently, but we use them for the definition---and let
  \begin{subequations}
    \label{eqn:all-instantiated-sigma}
    \begin{equation}
      \label{eqn:instantiated-sigma}
      \est{\Sigma}(x, z, w) ,\([\removedinds_t]_{t=0}^T,[\Sigma_t]_{t=0}^T,T\)
      := \clipr_{\clparams}(x; z, w),
    \end{equation}
    where we leave the dependence of the transcript
    $([\removedinds_t], [\Sigma_t], T)$ on $(x, z, w)$ implicit,
    and redefine $\est{\Sigma}_{-i}$ as in the
    definitions~\eqref{eqn:clipr-output}:
    \begin{equation}
      \label{eqn:instantiated-sigma-minus-i}
      \est{\Sigma}_{-i}(x, z, w) \defeq \est{\Sigma}(x, z, w)
      - \frac{1}{n} \indic{i \in \removedinds_T} \clx_i \clx_i^T
    \end{equation}
  \end{subequations}
  whenever $\est{\Sigma}(x,z,w) \neq \bot$, and $\bot$ otherwise.
  Then we have
  \begin{lemma}[Internal stability]
    \label{lemma:clipr-internal-stability}
    Let $Z_j \simiid \laplace(\sigma)$ and $i \in [n/2]$.
    Then with probability
    at least $1 - \exp(-\frac{1}{4\sigma})$, either
    $\est{\Sigma}(x, Z, w) = \bot$ or
    \begin{align*}
      \dpsd(\est{\Sigma}(x, Z, w), \est{\Sigma}_{-i}(x, Z, w))
      \le \frac{1}{1 - B \sqrt{e}/n}
      \frac{B \sqrt{e}}{n}.
    \end{align*}
  \end{lemma}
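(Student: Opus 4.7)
\begin{sproof}
The plan is to split on the indicator $\indic{i \in \removedinds_T}$: one case gives $\est{\Sigma}_{-i} = \est{\Sigma}$, where $\dpsd$ vanishes, and the other reduces to controlling a rank-one perturbation $\est{\Sigma} - \frac{1}{n}\clx_i \clx_i^T$ via the terminating condition of $\clipr$, a Laplace tail bound on $(Z_i, Z_{n/2+1})$, and a Sherman--Morrison identity.

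First I would extract the terminating condition. The \textbf{while} loop at line~\ref{line:clipr-while} of $\clipr$ exits only when no index in $[n/2] \setminus \removedinds_T$ satisfies the inequality at line~\ref{line:clipr-truncate}, which means that every surviving $j$ satisfies $2 \log \norm{\clx_j}_{\est{\Sigma}} + Z_j + Z_{n/2+1} \le \log B$, equivalently $\norm{\clx_j}_{\est{\Sigma}}^2 \le B \exp(-Z_j - Z_{n/2+1})$. Applied to the specific index $i$ in the non-trivial branch, this yields the pointwise control of $\norm{\clx_i}_{\est{\Sigma}}$ needed to drive the rest of the argument.

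Second I would bound the noise. Since $Z_i, Z_{n/2+1} \simiid \laplace(\sigma)$ and $\P(Z_j < -1/4) = \frac{1}{2} e^{-1/(4\sigma)}$, a union bound shows that with probability at least $1 - e^{-1/(4\sigma)}$ one has $Z_i + Z_{n/2+1} \ge -1/2$, hence $\exp(-Z_i - Z_{n/2+1}) \le \sqrt{e}$ and $\norm{\clx_i}_{\est{\Sigma}}^2 \le B\sqrt{e}$. Writing $u = \clx_i/\sqrt{n}$ and $a = u^T \est{\Sigma}^\dag u = \norm{\clx_i}_{\est{\Sigma}}^2/n$, this is $a \le B\sqrt{e}/n$, and the implicit hypothesis $B\sqrt{e}/n < 1$ ensures $a < 1$.

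Third I would compute $\dpsd(\est{\Sigma}, \est{\Sigma} - uu^T)$ directly from the definition. Both nuclear norms simplify to traces of rank-one PSD matrices: the forward term equals $u^T \est{\Sigma}^\dag u = a$, while the reverse term equals $u^T \est{\Sigma}_{-i}^\dag u = a/(1-a)$ by the Sherman--Morrison identity $(\est{\Sigma} - uu^T)^\dag = \est{\Sigma}^\dag + \frac{\est{\Sigma}^\dag uu^T \est{\Sigma}^\dag}{1-a}$ applied inside $\colspace(\est{\Sigma})$. Taking the max and using monotonicity of $t \mapsto t/(1-t)$ gives $\dpsd \le a/(1-a) \le (B\sqrt{e}/n)/(1 - B\sqrt{e}/n)$. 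The main technical care is the possible rank deficiency of $\est{\Sigma}$, which requires verifying $\clx_i \in \colspace(\est{\Sigma})$ (from finiteness of $\norm{\clx_i}_{\est{\Sigma}}$) and $\colspace(\est{\Sigma} - uu^T) = \colspace(\est{\Sigma})$ (from the factorization $\est{\Sigma}^{1/2}(I - vv^T)\est{\Sigma}^{1/2}$ with $v = \est{\Sigma}^{\dag/2} u$, $\norms{v}^2 = a < 1$); once these hold, the Sherman--Morrison identity carries through verbatim with pseudoinverses in place of inverses.
\end{sproof}
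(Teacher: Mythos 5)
Your proof follows the same approach as the paper's: case-split on whether $i \in \removedinds_T$, use the terminating condition of $\clipr$ (Lemma~\ref{lemma:cov-prune-ever}) together with the Laplace union bound of Observation~\ref{observation:laplace-sum-tail} to obtain $\norm{\clx_i}_{\est{\Sigma}}^2 \le B\sqrt{e}$ with probability at least $1 - e^{-1/(4\sigma)}$, then reduce to a rank-one perturbation bound on $\dpsd$. The only variation is cosmetic: the paper's Lemma~\ref{lemma:cov-nuc-stability} derives the bound $\norm{u}_{\est{\Sigma}-uu^T}^2 \le \frac{\norm{u}_{\est{\Sigma}}^2}{1-\norm{u}_{\est{\Sigma}}^2}$ from the operator inequality $(\est{\Sigma}-uu^T)^\dag \preceq \frac{1}{1-\norm{u}_{\est{\Sigma}}^2}\est{\Sigma}^\dag$, whereas you compute the same quantity exactly via Sherman--Morrison with pseudoinverses, handling the same columnspace considerations.
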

  \noindent
  See Section~\ref{sec:proof-clipr-internal-stability} for a proof of
  Lemma~\ref{lemma:clipr-internal-stability}.  Turning to the
  condition~\ref{item:external-stability} on external stability of $\clipr$,
  we compare the leave-one-out covariances $\est{\Sigma}_{-i}(x, \clnznormf,
  \clnzthrf)$ and $\est{\Sigma}_{-i}(x', \clnznormf, \clnzthrf)$ with input
  samples $x$ and $x'$, respectively, with identical (randomization)
  parameters $\clnznormf, \clnzthrf$. 
  Recalling $\clx = x_{1:n/2} - x_{n/2+1:n}$ and $\clx' = x_{1:n/2}' - x_{n/2+1:n}'$,
  we have the following guarantee:
  \begin{lemma}[External stability]
    \label{lemma:clipr-loo-stability}
    Let $\gamma \in (0, 1)$,
    $Z_j \simiid \laplace(\sigma_Z)$,
    $W \sim \laplace(\Wcovscale)$ and $k \in \N$. Define
    $\mmax = m + \Wcovscale \log\frac{1}{\gamma}$ and
    $\alpha = \frac{1}{\Wcovscale} + \frac{2 \sqrt{e} B (\mmax+1)}{n
      \sigma_Z}$ and $\beta = \frac{\gamma}{2}
    + \frac{n}{2} \exp(-\frac{1}{4 \sigma_Z})$.
    For all $i \in [n/2]$, if $\clx_{-i} = \clx_{-i}'$ then
    \begin{equation*}
      \est{\Sigma}_{-i}(x, Z, W)
      \close{2\alpha}{(1 + e^\alpha)\beta} \est{\Sigma}_{-i}(x', Z, W).
    \end{equation*}
  \end{lemma}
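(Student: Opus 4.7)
My plan for proving the external stability lemma is to couple the two executions of $\clipr$ on $(x, Z, W)$ and $(x', Z, W)$ under identical noise realizations, identify where the single coordinate difference $\clx_i \neq \clx_i'$ can propagate, and absorb the mismatch into bounded shifts of the Laplace noise $Z$ and $W$. Concretely, I will bound (i) the probability of a ``bad'' noise event under which the transcripts can diverge in uncontrolled ways, and (ii) on the complement, show that shifting $Z$ by a vector of small $\ell_1$ norm and $W$ by at most $1$ makes the two leave-one-out covariances identical. Applying Lemma~\ref{prop:useful-4} (Laplace mechanism) to each shift then converts these shifts into distributional closeness.

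The good event $\mc{G}$ I will work on is that (i) the internal leave-one-out conclusion of Lemma~\ref{lemma:clipr-internal-stability} holds simultaneously for every index on both samples, and (ii) the Laplace threshold satisfies $|W| \le \Wcovscale\log(2/\gamma)$, so that the cap $\cardp{R_T} \le \mmax$ is effectively in force. A union bound gives $\P(\mc{G}^c) \le (n/2)\exp(-1/(4\sigma_Z)) + \gamma/2 = \beta$. Under $\mc{G}$, I argue by induction on the iteration $t$ that $R_t(x) \triangle R_t(x') \subseteq \{i\}$ and that $\Sigma_t(x)$ and $\Sigma_t(x')$ differ only by the rank-one term $\frac{1}{n}(\clx_i \clx_i^T - \clx_i' \clx_i'^T)$. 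The crux is that this rank-one difference has operator norm at most $\frac{2\sqrt{e} B}{n}$ (retained $\clx$'s satisfy $\norm{\clx}_{\Sigma_{t-1}}^2 \le B\sqrt{e}$ once the Laplace noise is not too negative), which in turn bounds the threshold shift $|2\log\norm{\clx_j}_{\Sigma_{t-1}(x)} - 2\log\norm{\clx_j}_{\Sigma_{t-1}(x')}|$ for each non-$i$ index $j$ by essentially the same order.

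To force the non-$i$ decisions to actually agree between the two runs, I shift the noise $Z$ by a vector $\eta$ whose per-iteration contribution absorbs this log-threshold gap (a shared shift to $Z_{n/2+1}$ plus per-index shifts to each $Z_j$ for $j$ active at iteration $t$). Under $\mc{G}$ the algorithm terminates within $\mmax+1$ rounds, so $\norms{\eta}_1 \le 2\sqrt{e} B(\mmax+1)/n$, and Lemma~\ref{prop:useful-4} converts this to $2\sqrt{e} B(\mmax+1)/(n\sigma_Z)$ in the privacy parameter. A shift of $W$ by at most $1$ suffices to match the final comparison $\cardp{R_T} > m + W$, since the two transcripts differ on at most index $i$, costing $1/\Wcovscale$ by the same lemma. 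Composing via Lemma~\ref{lemma:joint-closeness}, combining with the $\mc{G}^c$ contribution through the standard conditioning argument, and running the argument symmetrically for the reverse direction of closeness then yields the stated $(2\alpha, (1+e^\alpha)\beta)$-closeness.

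The main obstacle is the inductive step: the perturbation $\clx_i \to \clx_i'$ must not snowball through the iterations, because a single round with mismatched removal decisions among the non-$i$ indices could compound the asymmetry in $\Sigma_t$ and breed further mismatches later. The internal stability guarantee, applied as a union bound over all indices, is what caps each potential cascade at $O(B/n)$ and keeps the per-iteration shift bound independent of $t$; extra care is also needed for the shared noise coordinate $Z_{n/2+1}$ that appears in every threshold, so as not to double-count its contribution to the total shift.
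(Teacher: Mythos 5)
Your high-level picture --- a good noise event, a coupling argument that absorbs the $i$-th coordinate mismatch into a Laplace shift, and then Lemma~\ref{prop:useful-4} to convert the shift into a likelihood ratio --- is the right one, and your bound on $\P(\mc{G}^c)$ and the target $\lone{\eta}$ scale come out to the correct numbers. But several key steps of your plan would not go through.

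First, you couple $(x,Z,W)$ and $(x',Z,W)$ directly, and you call the perturbation $\frac{1}{n}(\clx_i\clx_i^T - \clx_i'\clx_i'^T)$ ``rank-one.'' It is rank two in general, and worse, there is no way to simultaneously control $\norm{\clx_i}_{\Sigma_T(x)}$ and $\norm{\clx_i'}_{\Sigma_{T'}'(x')}$ (either can be an arbitrarily large outlier that is pruned in the first round). The paper avoids this by factoring through an intermediate sample $x''$ with $\clx_i''=0$, so each of the two comparisons $x\leftrightarrow x''$ and $x''\leftrightarrow x'$ involves a genuine rank-one update $\frac{1}{n}\clx_i\clx_i^T\indic{i\notin R_T}$ whose Mahalanobis norm is controlled on the good event. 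This two-step route is also what produces the group-composition form $(2\alpha,(1+e^\alpha)\beta)$ you quote; with a one-shot direct coupling you would be claiming $(\alpha,\beta)$-closeness, not the stated parameters. Relatedly, you cite Lemma~\ref{lemma:joint-closeness} (basic composition), but the $(1+e^\alpha)\beta$ term is the signature of group composition, Lemma~\ref{lemma:group-privacy}.

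Second, your inductive claim that $R_t(x)\triangle R_t(x')\subseteq\{i\}$ on $\mc{G}$ alone is false: a non-$i$ index $j$ can land on opposite sides of the threshold in the two runs precisely because $\Sigma_t$ differs. The claim only holds for the \emph{shifted} noise, and the shift cannot be a single uniform vector: to preserve a given removed set $S$ you must push $Z_j$ \emph{up} for $j\in S$ (Lemma~\ref{lemma:deterministic-removal-relations}\ref{item:prime-superset-a}) and push the shared coordinate $Z_{n/2+1}$ \emph{down} (Lemma~\ref{lemma:deterministic-removal-relations}\ref{item:prime-subset-d}), so $\eta$ depends on $S$. The paper therefore proves the Laplace ratio bound event-wise --- separately for each removed set $S$ (Lemma~\ref{lemma:clipr-set-probabilities}) and for the $\bot$ outcome (Lemma~\ref{lemma:clipr-perp}) --- and only then marginalizes over $W$. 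Finally, your ``per-iteration contribution'' framing for the factor $\mmax+1$ is misleading: the bound $\lone{\eta}\le\frac{2\sqrt{e}B(\mmax+1)}{n}$ comes from $\cardp{S}\le\mmax$ removed indices plus the one shared coordinate, and there is no per-iteration accumulation --- the threshold comparisons at every iteration $t$ reduce to the terminal covariance via $\Sigma_t\succeq\Sigma_T$ (Lemmas~\ref{lemma:cov-prune-iteration} and~\ref{lemma:cov-prune-ever}).
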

  \noindent
  The lemma effectively shows that the sets of removed indices
  $\removedinds$ and $\removedinds'$ are stable, and as they determine
  $\est{\Sigma}_{-i}$ and $\est{\Sigma}_{-i}'$, this yields their closeness. See
  Section~\ref{sec:proof-clipr-loo-stability} for a proof of
  Lemma~\ref{lemma:clipr-loo-stability}.
  
  We turn to the guarantees of $\meansafe$, realizing
  Conditions~\ref{item:mean-sample} and~\ref{item:mean-covariance}.  
  Recall the definition \eqref{eqn:meansafe-output}
  of $\msout(x, A)$ as the output of
  $\meansafe$ on input
  $x \in (\R^d)^n$ with positive
  semidefinite $A \in \R^{d\times d}$, with parameters
  bound $B$, batchsize $b$, and threshold $k \in \N$, and
  $\msnzgr$, $\msnztopir$, $\msnztopiir$, and $\msnzthrr$ as noise.
  We take $\msnztopir, \msnztopiir \in \R^{n/b}, \msnzthrr \in \R$ to be 
  Laplacian random variables, $\msnznr \in \R^d$ to be Gaussian,
  and $\msnzgr$ to be a uniformly random partition of $[n]$ into blocks of
  size $n/b$; we track their scales in giving our
  distributional approximation guarantees.

  To more cleanly state a general sample stability guarantee,
  which we may use to verify Condition~\ref{item:mean-sample},
  we define a number of additional parameters whose
  values we can determine. Let the batchsize
  $b \in \N$ and threshold $k > 0$ satisfy
  $b \ge 4$ and $2b(k+1) \leq n$. Let
  $\beta_1, \gamma \in (0, 1)$, let $\alpha \ge 0$,
  and let $\mstopkscale > 0$ and $\Wmeanscale > 0$.
  Define the constants
  \begin{equation*}
    \Delta \defeq \frac{5 b \sqrt{B}}{2n} \exp\left(3 \mstopkscale
    \log \frac{2n}{b \gamma}\right),\ 
    \beta_2 \defeq \frac{1}{2}e^{-(k/3-1)/\Wmeanscale} + \gamma + n^2 2^{1-b}
  \end{equation*}
  and
  \begin{equation*}
    \msnscale = \begin{cases}
      \frac{\Delta}{\alpha} \sqrt{2 \log \frac{5}{4 \beta_1}}
      & \mbox{if~} 0 \le \alpha \le 1 \\
      \frac{\Delta}{\sqrt{2 \log \frac{1}{\beta_1}
          + 2 \alpha} - \sqrt{2 \log \frac{1}{\beta_1}}}
      & \mbox{otherwise.}
    \end{cases}
  \end{equation*}
  With these, we have a mean-sample
  stability result from which 
  Condition~\ref{item:mean-sample} develops:
  \begin{lemma}
    \label{lemma:mean-sample}
    Let the conditions above hold and let
    $\msnztopir_j, \msnztopiir_j \simiid \laplace(\mstopkscale),
    \msnzthrr \sim \laplace(\Wmeanscale), 
    \msnznr_j \simiid \normal(0, \msnscale^2)$
    in~\eqref{eqn:meansafe-output}.
    If $x$ and $x'$ are adjacent, then
    \begin{equation*}
      \msout(x, \genpsd)
      \close{\alpha + 1/\Wmeanscale}{\beta_1 + \beta_2} \msout(x', \genpsd).
    \end{equation*}
  \end{lemma}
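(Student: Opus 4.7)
The plan is to decompose the randomness of $\meansafe$ --- the partition $\partition$, the top-$k$ noises $(\msnztopir,\msnztopiir)$, the threshold noise $\msnzthrr$, and the Gaussian $\msnznr$ --- into three pieces and combine them by basic composition (Lemma~\ref{lemma:joint-closeness}).  Let $i_0$ be the unique index where $x$ and $x'$ differ and let $S_{j_0}$ be the block of $\partition$ containing $i_0$, so that $D_j = D_j'$ for every $j \ne j_0$.  I first condition on two high-probability ``good'' events: $\mc{E}_{\partition}$, that $S_{j_0}$ contains at least one index besides $i_0$ (so $x$ and $x'$ have a common anchor inside the block), which a pairwise union bound shows has failure probability at most $n^2 2^{1-b}$; and $\mc{E}_Z$, that $\norm{\msnztopir}_\infty \vee \norm{\msnztopiir}_\infty \le M := \mstopkscale \log(2n/(b\gamma))$, a Laplace tail event with failure probability at most $\gamma$.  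Together these account for the $n^2 2^{1-b} + \gamma$ portion of $\beta_2$.

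On $\mc{E}_{\partition} \cap \mc{E}_Z$, the key step is to bound the mean shift $\norm{\msmean(x) - \msmean(x')}_\genpsd$.  Because $D$ and $D'$ agree off $j_0$ and the $\topkdp$ noises are shared, the selected set $R$ and the reported values $\wt{D}_j$ on $R \cap R'$ coincide, so the removed sets $\removedinds,\removedinds'$ can differ only by inclusion/exclusion of the single block $S_{j_0}$.  When $S_{j_0}$ is kept in \emph{both} runs, the acceptance rule $\wt{D}_{j_0} \le \log(\sqrt{B}/4)$ combined with the $M$-bound on $\msnztopiir$ implies $\diam_\genpsd(x_{S_{j_0}}) \le (\sqrt{B}/4) e^{M}$, and similarly for $x'$; the common anchor from $\mc{E}_{\partition}$ then yields $\norm{x_{i_0} - x'_{i_0}}_\genpsd \le 2 (\sqrt{B}/4) e^M$ by the triangle inequality, and dividing by $n - \cardp{\removedinds} \ge n/2$ (using $2b(k+1) \le n$ and the acceptance condition) gives a bound of order $\Delta$.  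The complementary case where $S_{j_0}$ is removed in exactly one run contributes only one block's weight, $O(b\sqrt{B}/n)$, again absorbed into $\Delta$.  The exponential factor $e^{3 \mstopkscale M}$ arises from carefully accounting for how both $\topkdp$ noise layers (selection $\msnztopir$ and value $\msnztopiir$) inflate the effective diameter threshold; this is the most delicate bookkeeping in the proof.

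On the accept branch, the output equals $\msmean + \genpsd^{1/2} \msnznr \sim \normal(\msmean, \msnscale^2 \genpsd)$, so Lemma~\ref{lemma:normal-closeness-mean} applied with separation $\Delta$ and the calibrated $\msnscale$ yields $(\alpha,\beta_1)$-closeness for this component.  The $\bot$-decision $t > 2k/3 + \msnzthrf$ satisfies $\cardp{t - t'} \le 1$ (the removed-block counts differ by at most one), so the Laplace mechanism (Lemma~\ref{prop:useful-4}) applied to $\msnzthrf \sim \laplace(\Wmeanscale)$ supplies $(1/\Wmeanscale, 0)$-closeness of the accept/reject indicator; the additional failure probability $\tfrac12 e^{-(k/3-1)/\Wmeanscale}$ accounts for the tail event $\msnzthrf \ge k/3 - 1$ under which the threshold test may become degenerate and furnishes the remaining term of $\beta_2$.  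Basic composition (Lemma~\ref{lemma:joint-closeness}) combines these pieces to yield the claimed $(\alpha + 1/\Wmeanscale, \beta_1 + \beta_2)$-closeness.

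The main obstacle will be the joint tracking of how $R$, $\removedinds$, and $\msmean$ vary under the single-element change while the $\topkdp$ noises are held fixed, especially verifying that both the ``$S_{j_0}$ retained in both'' and ``retained in only one'' sub-cases yield a uniform shift bound of order $\Delta$ --- this is what forces the exponential dependence $e^{3 \mstopkscale M}$.  Once the mean-shift bound is in place, the final assembly via the three generic closeness lemmas is routine.
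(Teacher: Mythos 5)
Your high-level skeleton matches the paper's (bound the mean shift, then compose a Gaussian-mechanism step, a Laplace step for the $\bot$-decision, and a small failure budget), but there is a genuine gap in how you account for the $n^2 2^{1-b}$ term, and it hides the central technical idea of the proof.

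You attribute $n^2 2^{1-b}$ to an event $\mc{E}_\partition$ that the block $S_{j_0}$ containing the differing index has at least one other element. But the blocks of $\partition \in \Partitions_{n,b}$ have size $b$ by construction, and the lemma hypotheses require $b \ge 4$, so this event is deterministic --- there is no probability to bound. The actual role of the random partition, and the reason for the $n^2 2^{-b}$ failure probability, is Claim~\ref{claim:generic-diameter-sampling}: with probability at least $1 - n^2 2^{-b}$ over $\partition \sim \uniform(\Partitions_{n,b})$, \emph{every} union of blocks $\partitionset_J$ has $\diam(y_{\partitionset_J}) \le 2 \max_{j \in J}\diam(y_{\partitionset_j})$. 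This is not about the block containing $i_0$; it is a uniform geometric control that turns the per-block diameter tests inspected by $\topkdp$ into a bound on $\diam_A(x_{\removedinds^c})$, the diameter of \emph{all} retained data. Without it, your sensitivity argument breaks down in exactly the place you wave past: when $\removedinds$ and $\removedinds'$ differ by a block (which they can, with the differing block not necessarily $S_{j_0}$ --- the $\topkdp$ selection can displace a \emph{different} block index), the empirical means are averages over different index sets, and bounding $\norm{\msmean - \msmean'}_A$ requires bounding $\max_{i \notin \removedinds}\norm{x_i - \msmean}_A$, i.e.\ the diameter of the retained set, not just the diameter of $x_{S_{j_0}}$. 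Your ``$O(b\sqrt{B}/n)$ absorbed into $\Delta$'' for the retained-in-one-run case presupposes precisely this retained-set diameter control, which you have not established.

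Two smaller points. First, your claim that $\removedinds,\removedinds'$ ``can differ only by inclusion/exclusion of the single block $S_{j_0}$'' is incorrect: the paper's Lemma~\ref{lemma:ms-R-stability} only gives $\dsym(\removedinds,\removedinds') \le b$ and $|t - t'| \le 1$, with the differing block possibly being a block displaced from the top-$k$ set rather than $S_{j_0}$ itself. Second, your sensitivity bound ignores the case where $j_0$ is not among the top-$k$ (so $\wt{D}_{j_0} = \bot$ and no acceptance inequality applies); the paper handles this by locating a pivot index $\ell$ in the top-$k$ but below threshold (possible whenever $t < k$), and using the $\topkdp$ ordering to transfer the diameter bound, which is where the $e^{2\norm{Z}_\infty}$ multiplicative slack enters. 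You should also make explicit the structure of Lemma~\ref{lemma:ms-sens}: the disjunction ``$\max\{t,t'\} = k$ or $\norm{\msmean - \msmean'}_A \le \Delta$'' is essential, because when $\max\{t,t'\}=k$ one cannot bound the mean shift at all and must instead charge $\P(\msnzthrf \ge k/3-1)$ to $\beta_2$.
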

  \noindent
  See Section~\ref{sec:mean-sample-pf} for a proof.

  The last building block in the argument is to demonstrate
  Condition~\ref{item:mean-covariance}, that the
  estimates $\msout(x, A)$ and $\msout(x, A')$ are close when
  $A, A'$ are close. For this, we
  give the following lemma with general noise parameters.
  \begin{lemma}
    \label{lemma:mean-covariance}
    Let $b,k \in \N$, $\beta \in (0, 1)$, and
    $a, \msnscale, \alpha_2 > 0$. Define
    $\alpha_1 = \frac{6a}{n} \log \frac{2}{\beta}$,
    and define the noise scale $\mstopkscale = \frac{k a}{n \alpha_2}$.
    Then for
    $\msnztopir_j, \msnztopiir_j \simiid \Lap(\mstopkscale),
    \msnznr_j \simiid \normal(0, \msnscale^2)$,
    if
    \begin{equation*}
      \dpsd(A, A') \le \frac{a}{n}
      ~~ \mbox{then} ~~
      \msout(x, A) \close{\alpha_1 + \alpha_2}{\beta} \msout(x, A').
    \end{equation*}
  \end{lemma}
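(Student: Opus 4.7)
The plan is to decompose the output of $\meansafe(x, \genpsd)$ along the two channels by which $\genpsd$ enters: first, via the log-diameter vector $D \in \R^{n/b}$ with entries $D_j = \log \diam_\genpsd(x_{\partitionset_j})$, which drives the $\topkdp$ selection of outlier groups, and second, via the Gaussian noise $\genpsd^{1/2}\msnznr$ added to $\msmean$. The removed set $\removedinds$, the counter $t$, and the sample mean $\msmean$ depend on $\genpsd$ only through $\wt{D}$, so the full output is a deterministic function of $(\wt{D}, \msnzthrf, \genpsd^{1/2}\msnznr, x)$, with the partition $\msnzgf$ and threshold noise $\msnzthrf$ shared across the two coupled distributions. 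If I show each channel is individually private under $\genpsd \mapsto \genpsd'$, basic composition (Lemma~\ref{lemma:joint-closeness}) and post-processing (Lemma~\ref{lemma:post}) combine them to yield the claim.

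For the diameter channel, since $\dpsd(\genpsd, \genpsd') \le a/n$, the columnspaces coincide and $M := \genpsd^{\dag/2}\genpsd'\genpsd^{\dag/2}$ satisfies $\opnorm{M - \Proj_\genpsd} \le \nucnorm{M - \Proj_\genpsd} \le a/n$, so its nonzero eigenvalues lie in $[1 - a/n, 1 + a/n]$. Hence for any $v$ in the common columnspace, $(1 + a/n)^{-1} \norm{v}_\genpsd^2 \le \norm{v}_{\genpsd'}^2 \le (1 - a/n)^{-1} \norm{v}_\genpsd^2$. Taking logarithms, halving, and maxing over pairs inside each group yields $\linf{D - D'} \le \gamma$ with $\gamma := \tfrac{1}{2}\log\tfrac{1}{1 - a/n}$, which to leading order in $a/n$ is $a/(2n)$. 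Invoking Lemma~\ref{lemma:topk-closeness} with the specified noise scale $\mstopkscale = ka/(n\alpha_2)$ then gives $\topkdp(D, k; \msnztopif, \msnztopiif) \close{\alpha_2}{0} \topkdp(D', k; \msnztopif, \msnztopiif)$.

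For the Gaussian channel, $\genpsd^{1/2} \msnznr \sim \normal(0, \msnscale^2 \genpsd)$ and similarly for $\genpsd'$; since $\dpsd(\msnscale^2 \genpsd, \msnscale^2 \genpsd') = \dpsd(\genpsd, \genpsd') \le a/n$, Lemma~\ref{lemma:normal-closeness-cov} gives $\genpsd^{1/2}\msnznr \close{\alpha_1}{\beta} \genpsd'^{1/2}\msnznr$ as soon as $\alpha_1 \ge 6(a/n)\log(2/\beta)$, which matches the stated $\alpha_1 = 6a\log(2/\beta)/n$. The two channels draw on independent noise ($(\msnztopif, \msnztopiif)$ versus $\msnznr$), so basic composition yields $(\alpha_1 + \alpha_2, \beta)$-closeness of the joint $(\wt{D}, \msnzthrf, \genpsd^{1/2}\msnznr)$ pair, and post-processing promotes this to $\msout(x, \genpsd)$ and $\msout(x, \genpsd')$. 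The main technical obstacle is the eigenvalue perturbation bound of the second paragraph, which converts the nuclear-norm $\dpsd$-hypothesis into a uniform multiplicative control on Mahalanobis norms; the remainder is a bookkeeping check that the lemma's noise calibrations exactly realize the two targeted closeness budgets.
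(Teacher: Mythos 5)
Your decomposition, choice of channels, and use of Lemma~\ref{lemma:normal-closeness-cov}, Lemma~\ref{lemma:topk-closeness}, basic composition, and post-processing all match the paper's argument. The one genuine gap is in your sensitivity bound for the log-diameter vector. You derive $\linf{D - D'} \le \gamma$ with $\gamma = \tfrac{1}{2}\log\tfrac{1}{1-a/n}$, and then "to leading order" treat this as $a/(2n)$; but Lemma~\ref{lemma:topk-closeness} at noise scale $\mstopkscale = ka/(n\alpha_2)$ requires the \emph{exact} bound $\linf{D - D'} \le a/(2n)$, and for every $a/n \in (0,1)$ one has $-\tfrac12\log(1-a/n) > a/(2n)$, so the invocation is not justified at the stated calibration.

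The source of the slack is that your eigenvalue argument uses only one of the two nuclear-norm constraints packaged into $\dpsd(A, A') \le a/n$: you bound the spectrum of $M = A^{\dag/2}A'A^{\dag/2}$, which gives $(1+a/n)^{-1} \le \norm{v}_{A'}^2/\norm{v}_A^2 \le (1-a/n)^{-1}$, an asymmetric interval whose worse side costs you the extra second-order term. The paper's Observation~\ref{observation:log-norm-sens} instead uses both halves of $\dpsd$: from $\nucnorm{A^{\dag/2}(A'-A)A^{\dag/2}} \le a/n$ it obtains $A' \preceq (1+a/n)A$, and from $\nucnorm{A'^{\dag/2}(A-A')A'^{\dag/2}} \le a/n$ it obtains $A \preceq (1+a/n)A'$; these two one-sided bounds sandwich $\norm{v}_{A'}^2/\norm{v}_A^2$ in $[(1+a/n)^{-1}, 1+a/n]$, giving the symmetric estimate $|\log\norm{v}_A - \log\norm{v}_{A'}| \le \tfrac12\log(1+a/n) \le a/(2n)$, which exactly matches the noise scale. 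Replacing your one-sided spectral bound with this two-sided argument closes the gap; the rest of your proof is then correct and identical in structure to the paper's.
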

  \noindent
  See Section~\ref{sec:proof-mean-covariance} for a proof.

  For the final step, we put all the pieces together to prove the theorem.
  We give each of the lemmas so the associated condition
  (of~\ref{item:internal-stability}--\ref{item:mean-covariance}) holds with
  parameters $(\diffp/4, \delta)$, after which we can then apply
  Proposition~\ref{proposition:condition-composition} directly.  We do this
  in a somewhat odd order because of the dependence on the noise scale
  between the different lemmas, beginning with

  \paragraph{Condition~\ref{item:external-stability}.}
  For $Z_j \simiid \laplace(\sigma_Z)$ and $W \sim \laplace(\Wcovscale)$, we
  use Lemma~\ref{lemma:clipr-loo-stability} to guarantee
  Condition~\ref{item:external-stability} that $\est{\Sigma}_{-i}
  \close{\diffp/4}{\delta} \est{\Sigma}'_{-i}$.  From the lemma
  statement, we have $\est{\Sigma}_{-i} \close{2\alpha}{(1 + e^\alpha) \beta}
  \est{\Sigma}_{-i}'$, where $\alpha = \frac{1}{\Wcovscale} + \frac{2 \sqrt{e}
    B (\mmax+1)}{n \sigma_Z}$, $\beta = \frac{\gamma}{2} + \frac{n}{2}
  \exp(-\frac{1}{4 \sigma_Z})$, and $\mmax = m + \Wcovscale
  \log\frac{1}{\gamma}$
  for the $m$ in line~\ref{line:pm-m-setting} of
  $\privmean$ (though
  privacy does not depend on its value). We first achieve $2 \alpha \le
  \frac{\diffp}{4}$. Setting $\Wcovscale = \frac{16}{\diffp}$, it is
  sufficient that $\sigma_Z$ is large enough that $\frac{2 \sqrt{e} B
  (\mmax+1)}{n\sigma_Z} \le \frac{\diffp}{16}$, i.e., 
    \begin{align*}
      \sigma_Z \ge \frac{32 \sqrt{e} B (\mmax+1)}{n \diffp} = 
      \frac{32 \sqrt{e} B}{n \diffp} \left(m+1+ \frac{16\log \frac{1}{\gamma}}{\diffp}\right).
    \end{align*}
  For the $\delta$ privacy
  component, we wish to have $(1 + e^\alpha) \beta \le \delta$.  As we have
  guaranteed $\alpha \le \frac{\diffp}{4}$, taking $\gamma = \frac{\delta}{1
    + e^{\diffp/4}}$ and making sure $\sigma_Z$ is small enough that $n
  \exp(-\frac{1}{4 \sigma_Z}) \le \gamma = \frac{\delta}{1 + e^{\diffp/4}}$
  suffices. For this, it is evidently sufficient that
  $\frac{1}{\sigma_Z} \ge 4 \log \frac{n(1 + e^{\diffp/4})}{\delta}$,
  i.e., (substituting for $\sigma_Z$)
  \begin{equation*}
    n \ge
    \frac{128 \sqrt{e} B \log \frac{n(1 + e^{\diffp/4})}{\delta}}{\diffp}
    \left(m + 1 + \frac{16}{\diffp} \log \frac{1 + e^{\diffp/4}}{\delta}
    \right)
  \end{equation*}
  guarantees $\est{\Sigma}_{-i} \close{\diffp/4}{\delta}
  \est{\Sigma}'_{-i}$.
  
  \paragraph{Condition~\ref{item:internal-stability}.} In
  Lemma~\ref{lemma:clipr-internal-stability}, if the scale of the noise
  $\sigma_Z$ on $Z_j \simiid \laplace(\sigma_Z)$ satisfies $\exp(-\frac{1}{4
    \sigma_Z}) \le \gamma$, Condition~\ref{item:internal-stability} holds.
  The choice of $\sigma_Z$ to satisfy
  Condition~\ref{item:external-stability} above and
  the lower bound on $n$ are evidently sufficient.

  \paragraph{Condition~\ref{item:mean-sample}.} Lemma~\ref{lemma:mean-sample}
  guarantees that if $\msnznr_j \simiid \normal(0, \msnscale^2)$, $W \sim
  \laplace(\Wmeanscale)$, and $\partition \sim \uniform(\Partitions_{n,b})$,
  the call to $\meansafe$ in line~\ref{line:pm_ms-call} of $\privmean$
  gives
  $\msout(x, \genpsd) \close{\alpha + 1/\Wmeanscale}{\beta_1 +
    \beta_2} \msout(x', \genpsd)$,
  with $\Delta$, $\beta_2$ and $\msnscale$ as defined in the lemma.
  To achieve $\alpha + \frac{1}{\Wmeanscale} = \frac{\diffp}{4}$, take
  $\Wmeanscale = \frac{8}{\diffp}$ and choose $\alpha = \frac{\diffp}{8}$.
  To achieve $\beta_1 + \beta_2 \le \delta$, choose $\beta_1 = \frac{\delta}{2}$ and
  then recognize that $\beta_2 \leq \frac{\delta}{2}$ as long as
  $\gamma \le \frac{\delta}{6}$,
  $n^2 2^{1 - b} \le \frac{\delta}{6}$
  (or $b \ge \log_2 \frac{6n^2}{\delta} + 1$) and
  $\half \exp(-\frac{k/3 + 1}{\Wmeanscale}) \le \frac{\delta}{6}$
  (or $k \ge \frac{24}{\diffp} \log \frac{3}{\delta} - 3$).
  Thus, we arrive at
  \begin{equation*}
    \msnscale = \frac{8 \Delta}{\diffp}
    \sqrt{\log\frac{5}{2 \delta}}
    = \frac{20 \sqrt{B} b}{n \diffp}
    \exp\left(3 \mstopkscale \log \frac{12 n}{b \delta}\right)
  \end{equation*}
  for (any) $b \ge 2 \log \frac{6n}{\delta} + 1$ so long as
  $\frac{\diffp}{8} \le 1$. (Otherwise we may use the alternative
  value for $\msnscale$ preceding Lemma~\ref{lemma:mean-sample},
  which the $(\diffp, \delta)$-differential privacy
  guarantee of Lemma~\ref{lemma:normal-closeness-mean} justifies.)

  \paragraph{Condition~\ref{item:mean-covariance}.}
  The last condition to verify is that $\msout(x, \genpsd)
  \close{\diffp/4}{\delta} \msout(x, \genpsd')$ for close enough
  $\genpsd,\genpsd'$. For this, we use Lemma~\ref{lemma:mean-covariance}, which
  guarantees that $\msout(x, \genpsd) \close{\alpha_1 + \alpha_2}{\delta}
    \msout(x, \genpsd')$ for $\alpha_1 = \frac{6a}{n} \log \frac{2}{\delta}$,
  where we take $a = \frac{B \sqrt{e}}{1 - B \sqrt{e}/n}$ via
  Lemma~\ref{lemma:clipr-internal-stability}, and arbitrary $\alpha_2 > 0$.
  Set $\alpha_2 = \frac{\diffp}{8}$ and obtain 
  $\mstopkscale = \frac{8 k a}{n \diffp}$.
  When $n \geq \frac{48a}{\eps}\log \frac{2}{\delta}$, we have
  $\alpha_1 \leq \frac{\eps}{8}$, and so the desired privacy holds.

  Making appropriate substitutions gives
  that each of
  conditions~\ref{item:internal-stability}--\ref{item:mean-covariance} holds
  with parameters $(\diffp/4, \delta)$.
  Proposition~\ref{proposition:condition-composition} gives the theorem.
\end{proof}

\section{Accuracy analysis}\label{sec:accuracy}

The second important component of our analysis of $\privmean$ is its
accuracy. We provide two accuracy results: the first (Theorem~\ref{thm:pm-acc})
covers the case in which the data is sub-Gaussian, where we assume
the method has some knowledge of the sub-Gaussian parameter of the
sampling distribution. Of course,
it is unreasonable to assume that a given distribution is sub-Gaussian or
that we know its sub-Gaussian norms, and thus we extend $\privmean$
via a procedure that adapts to the actual scale of the data
in Section~\ref{sec:adm-analysis}. 

Throughout this section, we let $P$ be a distribution on $\R^d$ with mean
$\mu$ and covariance $\Sigma$, and we assume $X_i \simiid P$, $i=1, \ldots,
n$.  The classical (non-private) sample mean and covariance are $\samplemean
= \frac{1}{n}\sum_{i=1}^n X_i$ and $\samplecov = \frac{1}{n}
\sum_{i=1}^{n/2} (X_i - X_{n/2+i})(X_i - X_{n/2+i})^T$.
We assume throughout that $P$ enjoys certain concentration properties,
though we emphasize that our methods will be adaptive to the 
parameters we specify here.
\begin{assumption}[Sample concentration]
  \label{ass:P-conc}
  Let $c_1 \ge 64e$ and $\assfail \in (0, 1)$. For
  $X_i \simiid P$ with $\Ep[X] = \mu$ and $\cov(X) = \Sigma$,
  there exists $\assdiam < \infty$ such that the event
  \begin{equation*}
    \Es \defeq \Big\{
        \max_{i \in [n]} \norm{X_i - \mu}_\Sigma^2 \leq \assdiam^2/c_1
        \text{ and }
        \tfrac{1}{2} \Sigma \preceq \samplecov \preceq \tfrac{3}{2}\Sigma
      \Big\}
  \end{equation*}
  occurs with probability at least $1-\assfail$.
\end{assumption}

It is useful to give some context for the values of $\assdiam$ we
expect under various distributional assumptions.
Because $\Ep[\norm{X_i - \mu}_\Sigma^2] = d$, the constant $\assdiam^2$
typically scales at least as $d$.
We now give more detailed examples.
In each, we let $Z_i = \Sigma^{-1/2}(X_i - \mu)$ 
be the whitened data, defining the sample
covariance $\whitenedcov = \frac{1}{n}\sum_{i=1}^{n/2} (Z_i - Z_{n/2+i})(Z_i
- Z_{n/2+i})^T$.  Because $\norm{X_i - \mu}_{\Sigma} = \norm{Z_i}_2$ and
$\samplecov = \Sigma^{1/2}\whitenedcov\Sigma^{1/2}$, we have
the equivalence
\begin{equation*}
  \Es = \left\{\max_{i \in [n]} \norm{Z_i}_2^2 \leq \assdiam^2/c_1
  ~~ \mbox{and} ~~
  \opnorm{\whitenedcov - I} \leq \half \right\}.
\end{equation*}

\begin{example}[Sub-Gaussian random vectors]
  \label{example:sub-g-vecs}
  If for all $v$ satisfying $\ltwo{v} \le 1$ the scalar
  random variable $\<Z, v\>$ is $\tau^2$-sub-Gaussian,
  \begin{equation*}
    \assdiam^2 \le O(1)\tau^2 \left[d + \log \frac{n}{\beta}\right].
  \end{equation*}
  Indeed, a standard covering argument
  (see, e.g.,~\cite[Ch.~5]{Wainwright19} or~\cite[Ch.~4]{Vershynin19})
  gives that for all $t \ge 0$,
  $\P(\ltwo{Z} \ge t)
  \le 4^d \exp(-c t^2 / \tau^2)$, where $c > 0$ is a numerical constant.
  Replacing $t^2$ with $O(1)(d \tau^2 + \tau^2 t^2)$ gives that
  $\P(\ltwo{Z} \ge C \tau \sqrt{d + t^2}) \le \exp(-t^2)$,
  and for any $\gamma > 0$, setting
  $t^2 = \log \frac{n}{\gamma}$ yields that with
  probability at least $1 - \gamma$,
  \begin{equation*}
    \max_{i \le n} \ltwo{Z_i}^2 \le O(1)
    \tau^2 \left[d + \tau^2 \log \frac{n}{\gamma}\right].
  \end{equation*}

  To control the covariance, we use~\citet[Theorem 5.39]{Vershynin12},
  which gives that
  with probability at least $1 - 2 e^{-c t^2}$,
  $\opnorm{\whitenedcov - I} \le
  O(1) \tau^2 \max\{\sqrt{d / n} + t / \sqrt{n},
  d/n + t^2 / n\}$, so that (igoring the sub-Gaussian constant)
  for $n \gtrsim d$, setting $t^2 = O(1) \log \frac{1}{\gamma}$
  gives $\opnorm{\whitenedcov - I} \le \half$ with
  probability at least $1 - \gamma$.
  Set $\gamma = \beta/2$.
\end{example}

\begin{example}[General moment bounds]
  \label{example:general-moment}
  Suppose for some $p \ge 4$ we have $\Ep[\norm{X_i-\mu}_\Sigma^p]
    = \Ep[\norm{Z_i}_2^p] \le \tau^p d^{p/2}$,
  where necessarily $\tau \geq 1$.
  Then we can give two results: the first being that asymptotically $\assdiam =
  o(n^{1/p})$ and the second more quantitative.  For the first, we claim
  that $\max_{i \le n} \ltwo{Z_i} / n^{1/p} \cas 0$. To see this, note that
  for any $\eps > 0$,
  \begin{align*}
    \infty > \frac{1}{\eps^p} \Ep[\ltwo{Z_1}^p] = \int_{0}^\infty \P(\ltwo{Z_1}^p \geq \eps^p t) \, dt
    \geq \sum_{i = 1}^\infty \P(\ltwo{Z_i}^p \ge \eps^p i).
  \end{align*}
  By the Borel-Cantelli lemma, the event $\ltwo{Z_n} \ge \eps n^{1/p}$
  occurs only finitely often, and so the claim follows.
  Meanwhile, the strong law of large numbers guarantees that $\whitenedcov \cas I$.


  For more quantitative parameters,
  we first get by Markov's inequality that
  \begin{equation*}
    \P(\max_{i \in [n]} \norm{Z_i}_2 > t)
    \leq \frac{n\Ep[\norm{Z_1}_2^p]}{t^p}
    \leq \frac{n\tau^p d^{p/2}}{t^p},
  \end{equation*}
  so setting $M \asymp \tau \sqrt{d} n^{1/p} / \beta^{1/p}$, we have
  $\max_i \norm{Z_i}_2 \leq M/c_1$ with probability at least $1 - \beta$.
  To show concentration of the covariance matrix, we apply
  \citet[Theorem A.1 Part 2]{ChenGiTr12}, treating $p$ as a constant, obtaining
  \begin{align*}
    n\Ep\big[\opnorm{\whitenedcov - I}^{p/2}\big]^{2/p}
    &\lesssim \sqrt{n\log d}\sqrt{\Ep[\norm{Z}_2^4]}
      + (n^{2/p}\log d) \Ep[\norms{Z_1}_2^{p}]^{2/p} \\
    &\lesssim \max\{\sqrt{n\log d}, n^{2/p}\log d\}
      \Ep[\norms{Z_1}_2^{p}]^{2/p},
  \end{align*}
  and so by Markov's inequality
  \begin{equation*}
    \P(\opnorm{\whitenedcov - I} > \tfrac{1}{2})
    \lesssim \max\{n^{-p/4}\log^{p/4} d, n^{1-p/2}\log^{p/2} d\} \Ep[\norms{Z_1}_2^{p}]
    \lesssim \frac{\tau^p (d\log d)^{p/2}}{n^{p/2-1}},
  \end{equation*}
  which has bound $\beta$ when $n \gtrsim (\tau^2 d\log d)^{p/(p-2)}
  \beta^{-2/(p-2)}$.
\end{example}



\subsection{Accuracy of \texorpdfstring{$\privmean$}{PRIVMEAN}}
  \label{sec:pm-accuracy}

We give our promised accuracy guarantee
whenever Assumption~\ref{ass:P-conc} holds.
Though not strictly necessary, we state the theorem assuming that $\delta$ is not too small to allow
for a cleaner result. Throughout,
$c$ denotes a numerical constant whose value can change from line to line.
\begin{theorem}\label{thm:pm-acc}
  Let $\eps > 0$ and $e^{-d} \leq \delta \leq \frac{1}{n}$ be privacy parameters
  and let Assumption~\ref{ass:P-conc} hold.
  Let $B \geq \assdiam^2$ and suppose
  $n \geq \frac{c}{\diffp^2} B\log^2 \frac{1}{\delta}$.
  Let $\pmout = \privmean_{B, \eps, \delta}(X_{1:n})$.
  Then with probability at least $1 - (\assfail + 5\delta)$, $\pmout \neq \bot$ and
  \begin{equation*}
    \norm{\pmout - \samplemean}_\Sigma
    \leq\frac{c\sqrt{Bd}\log(\frac{1}{\delta})}{n\eps}.
  \end{equation*}
\end{theorem}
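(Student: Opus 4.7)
The plan is to construct a single high-probability event on which $\privmean$ never outputs $\bot$ and the only randomness separating $\pmout$ from the empirical mean $\samplemean$ is the final Gaussian draw. I intersect the sample-concentration event $\Es$ from Assumption~\ref{ass:P-conc} (failing with probability $\leq \assfail$) with four events controlling the magnitudes of the Laplace noises $(\pmclnznormr, \pmclnzthrr, Z\subtop, W)$ and one event controlling the Gaussian $\msnznr$; each can be arranged with failure probability $\leq \delta$, so a union bound gives success with probability at least $1 - (\assfail + 5\delta)$.

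First I would show $\clipr$ converges in a single round with $\est{\Sigma} = \samplecov$. On $\Es$, each paired difference satisfies $\norm{\clx_i}_\Sigma^2 \leq 4\assdiam^2/c_1$ (triangle inequality against $\mu$), and $\samplecov \succeq \tfrac{1}{2}\Sigma$ gives $\norm{\clx_i}_{\Sigma_0}^2 \leq 8\assdiam^2/c_1 \leq 8B/c_1 \leq B/(8e)$ using $c_1 \geq 64e$. The truncation test on Line~\ref{line:clipr-truncate} demands $2\log\norm{\clx_i}_{\Sigma_0} + \pmclnznormr_i + \pmclnznormr_{n/2+1} > \log B$, which cannot hold once $\max_j |\pmclnznormr_j| \leq 1$. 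With $\pmclnznormr_j \simiid \Lap(\sigma_Z)$ and $\sigma_Z = O(B\log(1/\delta)/(n\eps^2))$, the sample-size assumption $n \geq cB\log^2(1/\delta)/\eps^2$ together with $\delta \leq 1/n$ implies $\max_j|\pmclnznormr_j| = O(\sigma_Z \log(n/\delta)) = O(1)$ with probability $\geq 1 - \delta$. Thus $\removedinds_1 = \emptyset$, $T = 1$, $\est{\Sigma} = \samplecov$, and since $|\removedinds_T| = 0$ the procedure returns $\est{\Sigma}$ provided $\pmclnzthrr \geq -m$, which holds with probability $\geq 1 - \delta$.

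Next I would analyze $\meansafe$ with $A = \est{\Sigma} = \samplecov$. Every pair satisfies $\norm{x_i - x_j}_A^2 \leq 8\assdiam^2/c_1 \leq B/(8e) < B/16$, so each $D_j$ lies strictly below $\log(\sqrt{B}/4)$ by a positive constant gap $\tfrac{1}{2}(1 - \log 2) > 0$. With $\mstopkscale = O(B\log(1/\delta)/(n\eps^2))$, the same sample-size calculation gives $\max_j |Z\subtop_j|, |Z\subtop'_j| = O(1)$ small enough to preserve the gap with probability $\geq 1 - \delta$, so no $\wt{D}_j$ triggers Line~\ref{line:ms_m-check}. Hence $\removedinds = \emptyset$, $t = 0$, and because $k = \Theta(\log(1/\delta)/\eps) \gg \Wmeanscale$, the threshold comparison on Line~\ref{line:ms_thres-comp} fails with probability $\geq 1 - \delta$. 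Consequently $\msmean = \samplemean$ on Line~\ref{line:ms-mean-def} and $\pmout = \samplemean + A^{1/2}\msnznr$ on Line~\ref{line:ms_vec-output}.

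Finally I would bound the noise. Let $M \defeq A^{1/2}\Sigma^{-1}A^{1/2}$; the bound $A \preceq \tfrac{3}{2}\Sigma$ gives $\opnorm{M} \leq \tfrac{3}{2}$, $\tr(M) \leq \tfrac{3d}{2}$, and $\norm{M}_{\mathrm{F}}^2 \leq \tfrac{9d}{4}$. Writing $\msnznr = \msnscale g$ with $g \sim \normal(0, I_d)$, Hanson--Wright gives $\norm{A^{1/2}\msnznr}_\Sigma^2 = \msnscale^2 g^T M g \lesssim \msnscale^2 (d + \log(1/\delta))$ with probability $\geq 1 - \delta$. A direct check using the sample-size bound shows $\exp(3\mstopkscale \log(12n/(b\delta))) = O(1)$, so $\msnscale = O(b\sqrt{B}/(n\eps)) = O(\sqrt{B}\log(1/\delta)/(n\eps))$, and combined with $\log(1/\delta) \leq d$ (from $\delta \geq e^{-d}$) this yields the claimed $O(\sqrt{Bd}\log(1/\delta)/(n\eps))$ bound. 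The main bookkeeping obstacle is systematically verifying that all five Laplace/Gaussian scales collapse to $O(1)$ against the relevant thresholds; each reduction ultimately rests on the single inequality $\mstopkscale \log(n/\delta) = O(1)$, which follows from $\delta \leq 1/n$ and $n \geq cB\log^2(1/\delta)/\eps^2$.
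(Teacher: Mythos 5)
Your proposal follows essentially the same route as the paper's own proof: under $\Es$ show that neither $\clipr$ nor $\meansafe$ prunes anything (so $\pmout = \samplemean + \samplecov^{1/2}\msnznr$), then bound $\ltwos{\msnznr}$, and union bound over $\Es$ and the five noise events. The only minor deviations are cosmetic—you use Hanson–Wright where the paper invokes Laurent–Massart $\chi^2$ tails, you (harmlessly) also control $Z\subtop$ though only $Z\subtop'$ affects the pruning test, and you track the $\meansafe$ threshold gap more carefully as $\tfrac12(1-\log 2)\approx 0.15$, which in fact exposes a small constant slip in the paper's own display (adding $1/2$ to $D_j$ does not quite fit inside that gap under $c_1\geq 64e$), but none of this changes the structure or correctness of the argument up to constants.
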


\begin{proof}
  We first show under the event $\Es$ that with probability at least $1-4\delta$
  over the randomness in $\privmean$, both $\clipr$ and $\meansafe$ prune no 
  observations, meaning the sets of
  removed indices $\removedinds = \emptyset$ in both procedures
  (so that Line~\ref{line:clipr-truncate} in $\clipr$ and
  Line~\ref{line:ms_m-check} in $\meansafe$ never fail), and thus 
  $\pmout = \samplemean + \samplecov^{1/2}\msnznr$.
  As $\samplecov \preceq \frac{3}{2} \Sigma$ on $\Es$, we have that
  $\norms{\samplecov^{1/2}\msnznr}^2_\Sigma \leq \frac{3}{2} \ltwos{\msnznr}^2$.
  The result then follows follows once we show that
  $\ltwos{\msnznr} \leq \frac{c\sqrt{Bd}\log(\frac{1}{\delta})}{n\eps}$
  with probability at least $1-\delta$ and take a union bound over 
  these events and $\Es$.

  Rearranging the condition in Line~\ref{line:clipr-truncate} of $\clipr$,
  the element $X_i - X_{n/2+i}$ is pruned in the first iteration only if
  \begin{align*}
    (\pmclnznormr)_i + (\pmclnznormr)_{n/2+1} 
    & > \log(B) - \log(\norm{X_i - X_{n/2 + i}}_{\samplecov}^2) \\
    & \stackrel{(\star)}{\geq} \log(c_1 B/8 M^2) \geq \log(c_1/8),
    \label{eq:cs-prune-cond}
  \end{align*}
  where ($\star$) holds for all $i \in [n/2]$ on event $\Es$ because
  \begin{equation}
    \norm{X_i - X_{n/2 + i}}_{\samplecov}^2
    \leq 2 \norm{X_i - X_{n/2 + i}}_{\Sigma}^2
    \leq 4\norm{X_i - \mu}_{\Sigma}^2 + 4\norm{X_{n/2 + i} - \mu}_{\Sigma}^2
    \leq 8\assdiam^2/c_1. \label{eqn:acc-norm-arg}
  \end{equation}
  As $c_1 \geq 64e$ by Assumption~\ref{ass:P-conc}, if
  $\norm{\pmclnznormr}_\infty \leq 1/2$ then $\clipr$ in
  line~\ref{line:clipr-truncate} prunes no entries, instead simply passing
  $\samplecov$ to $\meansafe$ so long as $\pmclnzthrr + m > 0$
  (see line~\ref{line:clipr-check}).  Recall that
  $(\pmclnznormr)_j \simiid \laplace(\sigma_Z)$ for $j=1, \dots, n/2+1$ and
  $\sigma_Z = \frac{32 \sqrt{e} B (\mmax+1)}{n \diffp} \leq \frac{c B\log
    \frac{1}{\delta}}{n \eps^2}$, so by taking a union bound over the
  entries, we have with probability at least $1-\delta$ that
  \begin{equation*}
    \norm{\pmclnznormr}_\infty
    \leq \frac{cB\log \frac{1}{\delta}}{n \eps^2}
      \log\left(\frac{n/2+1}{\delta}\right)
    \leq \frac{1}{2},
  \end{equation*}
  where the last inequality is by the assumptions that
  $n \geq \frac{c B\log^2(\frac{1}{\delta})}{\eps^2}$ and
  $\delta \leq \frac{1}{n}$.
  Also recall that $\pmclnzthrr \sim \laplace(\frac{16}{\diffp})$ and
  $m = \frac{16}{\diffp}\log \frac{1}{\delta}$,
  so $\pmclnzthrr + m > 0$ with probability at least
  $1-\frac{\delta}{2}$.
  
  Continuing to the next phase of $\privmean$,
  $\meansafe$ with input $A = \samplecov$ prunes the indices $\partitionset_j$
  only if
  \begin{equation*}
    \wt{D}_j
    = D_j + (\pmmsnztopiir)_j
    > \log(\sqrt{B}/4).
  \end{equation*}
  By the same argument we used to
  obtain inequality~\eqref{eqn:acc-norm-arg}, on $\Es$ we have
  for all $j \in [n/b]$ that
  \begin{equation*}
    D_j = \log(\diam_{\samplecov}(X_{\partitionset_j}))
    \leq \log(\sqrt{8\assdiam^2/c_1}),
  \end{equation*}
  and so if $\norm{\pmmsnztopiir}_\infty \leq 1/2$ then
  \begin{equation*}
    \wt{D}_j
    \leq \log(\sqrt{8\assdiam^2/c_1}) + \frac{1}{2}
    \leq \log(\sqrt{B}/4),
  \end{equation*}
  where the last inequality follows from $c_1 \geq 64e$ and $B \geq \assdiam^2$.
  Thus, $\meansafe$ prunes no entries,
  and $\msout = \samplemean + \samplecov^{1/2} \msnznr$ so long as
  $\pmmsnzthrr + \frac{2k}{3} > 0$ (see Line~\ref{line:ms_thres-comp}).
  Recall that $(\pmmsnztopiir)_j \simiid \Lap(\mstopkscale)$
  for $j = 1,\dots,n/b$ and $\mstopkscale = \frac{8 k}{n \diffp}
  \frac{B \sqrt{e}}{1 - B \sqrt{e} / n}
  \leq \frac{cB\log(\frac{1}{\delta})}{n\eps^2}$. Another union bound
  gives that with probability at least $1-\delta$,
  \begin{equation*}
    \norm{\pmclnznormr}_\infty
    \leq \frac{cB\log(\frac{1}{\delta})}{n\eps^2}\log\frac{n}{b \delta}
    \leq \frac{1}{2},
  \end{equation*}
  where the last inequality follows from the  assumption
  $n \geq \frac{c B\log^2\frac{1}{\delta}}{\eps^2}$ and
  $\delta \leq \frac{1}{n}$.
  Also, $\pmmsnzthrr \sim \laplace(\frac{8}{\eps})$ and
  $\frac{2k}{3} = \frac{16}{\eps}\log \frac{3}{\delta} - 2$, so
  $\pmmsnzthrr-k > \frac{8}{\eps}\log \frac{3}{\delta} - 2 \geq 0$
  with probability at least $1-\delta$.

  Therefore, $\privmean$ returns $\samplemean + \samplecov^{1/2}\msnznr$
  with probability at least $1-4\delta$ on the event $\Es$.  Recall that
  $\msnznr \sim \normal(0, \msnscale^2 I)$ with $\msnscale =
  \frac{20b\sqrt{B}}{n \diffp} \exp(3 \mstopkscale \log \frac{12n}{b
    \delta})$, and because $\mstopkscale \leq
  \frac{cB\log(\frac{1}{\delta})}{n\eps^2}$, $\delta \leq \frac{1}{n}$, and
  $n \geq \frac{c B\log^2(\frac{1}{\delta})}{\eps^2}$, we have that
  $\msnscale \leq \frac{c\sqrt{B}\log(\frac{1}{\delta})}{n\diffp}$.
  Classical tail bounds on the $\chi^2$-distribution~\cite[Lemma
    1]{LaurentMa01} give with probability at least $1-\delta$ that
  \begin{equation*}
    \ltwobig{\msnznr}^2
    \leq \msnscale^2 \left[d + 2\sqrt{d\log \tfrac{1}{\delta}}
      +2\log\tfrac{1}{\delta}\right]
    \leq \frac{c B d \log^2\frac{1}{\delta}}{n^2 \eps^2},
  \end{equation*}
  where the last inequality follows from the bound on $\msnscale$ and
  assumption that $e^{-d} \leq \delta$.
\end{proof}

\subsection{Adapting to heavy-tailed data}\label{sec:adm-analysis}

In practice, we may not have \emph{a priori} knowledge of the concentration
properties of the data.
Given the necessarily slowed rates of convergence for private estimators
of means of random variables with only $p$ moments~\cite{BarberDu14a},
it is essential to be adaptive to the actual scale (and number of moments)
of the problem. We therefore develop
$\adamean$, which automatically tunes the
threshold parameter $B$ by repeatedly calling $\privmean$ and doubling $B$
until $\pmout \neq \bot$. The key is that upon termination of $\adamean$, 
the effective $B$ is at most twice the
realized scale $O(1) \max_{i \le n} \norm{Z_i}^2$ of the random variables.
To ensure privacy irrespective of the number of
calls to $\privmean$, with each successive call $\adamean$ progressively
decreases the privacy budget allocated to $\privmean$; in particular, as
$\sum_{t=1}^{\infty} 1/t^2 = \pi^2/6$, via basic composition we can bound
the total privacy loss of $\adamean$ by a factor $\frac{\pi^2}{6}$ over the
privacy loss of $\privmean$.  Aside from an extra factor polylogarithmic in
$B/d$, $\adamean$ matches the accuracy of $\privmean$, as we show presently.

\begin{algorithm}
  \label{algorithm:adamean}
  \DontPrintSemicolon
  \SetKwInOut{Input}{Input}
  \SetKwInOut{Output}{Output}
  \SetKwInOut{Params}{Params}
  \Input{ data $x_{1:n}$}
  \Params{ privacy budget $(\diffp,\delta)$}
  \Output{ mean estimate $\msout$ }

  \For{$t = 1, 2, \dots$}{
    $\msout_t \leftarrow \privmean_{d 2^{t-1},(\eps/t^2,\delta/t^2)}(x)$\;
    \If{$\msout_t \neq \perp$}{
      \Return{$\msout_t$}
    }
  }
  \caption{Fully Adaptive Private Mean Estimation ($\adamean$)}
\end{algorithm}

\begin{theorem}[Accuracy of $\adamean$]
  \label{theorem:adamean}
  Let $\eps > 0$ and $e^{-d} \leq \delta \leq \frac{1}{n}$ be privacy parameters 
  and let event $\Es$ hold.
  Let $s = \max\{1, \log_2\frac{4\assdiam^2}{d}\}$ and assume
  $n \geq \frac{c \max \{d, \assdiam^2\} s^2}{
    \diffp^2} \log^2\frac{s^2}{\delta}$.
  Let
  $\pmout = \adamean_{\eps, \delta}(X_{1:n})$.
  Then with
  probability at least $1 - (5 + \pi^2/3)\delta$,
  \begin{equation}
    \norm{\pmout - \samplemean}_\Sigma
    \leq \frac{cs^2\log(\frac{s^2}{\delta})\max\{\assdiam\sqrt{d},
      \assdiam \log(\frac{s^2}{\delta}), d\}}{n\diffp}.
    \label{eqn:adamean-claim}
  \end{equation}
\end{theorem}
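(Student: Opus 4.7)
The plan is to identify the first iteration $t^*$ for which Theorem~\ref{thm:pm-acc} applies, show that $\adamean$ terminates by iteration $t^*$ with the right accuracy, and union-bound the per-iteration failure probabilities. Define $t^* \defeq \min\{t \geq 1 : d \cdot 2^{t-1} \geq \assdiam^2\}$, so that $B_{t^*} := d \cdot 2^{t^*-1}$ satisfies $\assdiam^2 \leq B_{t^*} \leq \max\{d, 2\assdiam^2\}$ and $t^* \leq s$. Applied at this $t^*$ with privacy budget $(\eps/(t^*)^2, \delta/(t^*)^2)$, Theorem~\ref{thm:pm-acc} guarantees that on $\Es$, with failure probability at most $5\delta/(t^*)^2 \leq 5\delta$ over the noise of the $t^*$-th call, $\pmout_{t^*} \neq \bot$ and the covariance-normed error is bounded by $c\sqrt{B_{t^*} d}\log((t^*)^2/\delta)(t^*)^2/(n\eps)$. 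This requires checking that the sample size condition of Theorem~\ref{thm:pm-acc} survives the $(t^*)^{-2}$ scaling of $(\eps, \delta)$, which holds by the hypothesis on $n$ up to absorbing polylog-in-$s$ factors into the constant $c$.

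To convert this scalar bound into the $\max$-form~\eqref{eqn:adamean-claim}, I would revisit the chi-square step $\ltwo{\msnznr}^2 \leq \msnscale^2[d + 2\sqrt{d\log(1/\delta')} + 2\log(1/\delta')]$ in the proof of Theorem~\ref{thm:pm-acc} without collapsing $\log(1/\delta')$ into $d$ (the assumption $e^{-d} \leq \delta'$ is not guaranteed here because $\delta' = \delta/(t^*)^2$). Substituting $\msnscale \leq c\sqrt{B_{t^*}}(t^*)^2\log(s^2/\delta)/(n\eps)$ together with $B_{t^*} \leq \max\{d, 2\assdiam^2\}$ and $t^* \leq s$ produces the four terms $\assdiam\sqrt{d}$, $\assdiam\sqrt{\log(s^2/\delta)}$, $d$, and $\sqrt{d\log(s^2/\delta)}$; each is dominated by $\max\{\assdiam\sqrt{d}, \assdiam\log(s^2/\delta), d\}$ via $\sqrt{ab} \leq \max\{a,b\}$ and $\sqrt{\log}\leq \log$, giving the stated bound.

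The main obstacle is that $\adamean$ may exit at some $t < t^*$ where Theorem~\ref{thm:pm-acc} does not directly apply, so we must separately verify accuracy in the early-termination case. To handle this, I would decompose $\pmout_t = \mu_t' + (\Sigma_t')^{1/2}\msnznr_t$ with $\mu_t'$ and $\Sigma_t'$ the post-pruning sample mean and covariance used inside $\meansafe$, and bound both pieces. The noise contribution follows the same analysis as above because $B_t \leq B_{t^*}$ and $\Sigma_t' \preceq (3/2)\Sigma$ on $\Es$. The pruning bias $\norm{\mu_t' - \samplemean}_\Sigma$ is controlled because a non-$\bot$ output of $\meansafe$ forces it to discard at most $O(kb) = \wt{O}(\eps^{-1})$ observations, each of Mahalanobis norm $O(\assdiam)$ on $\Es$; the resulting bias is subsumed by the max in~\eqref{eqn:adamean-claim}. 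Finally, union-bounding the per-iteration failure events (each of scale $O(\delta/t^2)$ once the Laplace-threshold and chi-square tail events are accounted for) via $\sum_{t \geq 1} t^{-2} = \pi^2/6$ yields the total failure probability $(5 + \pi^2/3)\delta$.
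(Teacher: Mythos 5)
Your proposal follows essentially the same structure as the paper's proof: you identify the threshold iteration $t^\star$ at which $B_{t^\star} = d\,2^{t^\star-1} \geq \assdiam^2$, invoke Theorem~\ref{thm:pm-acc} at $t^\star$ to guarantee termination by $t^\star$ with the right accuracy, handle early termination ($t < t^\star$) by decomposing into a noise term plus a pruning-bias term bounded via the at-most-$O(kb)$ removed points each of Mahalanobis size $O(\assdiam)$ on $\Es$ (this is exactly what the paper isolates as Lemma~\ref{lemma:pm-acc-small-B}), and union-bound over $t$ using $\sum_t t^{-2} = \pi^2/6$. The one place you add something beyond the paper's writeup is the observation that the rescaled call uses $\delta' = \delta/(t^\star)^2$, which need not satisfy $e^{-d}\leq\delta'$, so one cannot directly collapse the $\log(1/\delta')$ terms of the chi-square tail into $d$ as Theorem~\ref{thm:pm-acc} does; your fix of keeping those terms explicit and absorbing them into the $\max$ is exactly what makes the stated $\max\{\assdiam\sqrt{d},\assdiam\log(s^2/\delta),d\}$ bound the right form, and it tightens a step the paper passes over a little quickly.
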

\begin{proof}
  Let $\tstar$ be the smallest positive integer such that
  $\assdiam^2 \leq d2^{\tstar-1}$
  and let $\tstop$ be the iteration when $\adamean$ terminates
  (which may be infinite).
  Note that $\tstar \leq s$.
  The proof comes in two parts: on the event $\Es$, we first show that either
  $\tstop > \tstar$ or $\pmout$ satisfies the claim~\eqref{eqn:adamean-claim}
  with probability at least $1-\pi^2\delta/6$;
  secondly, we show $\adamean$ terminates 
  with $\tstop \leq \tstar$ with probability at least $1-5\delta$.
  The result then follows via a union bound.
  
  We carry out the first part with the help of the following lemma.
  \begin{lemma}\label{lemma:pm-acc-small-B}
    Let $\diffp > 0$ and $e^{-d} \leq \delta \leq \frac{1}{n}$ be privacy
    parameters and let $B \geq 0$.
    Suppose the event $\Es$ holds and let
    $\pmout = \privmean_{B, (\diffp, \delta)}(X_{1:n})$.
    Then with probability at least $1 - 2\delta$
    over the randomness of $\privmean$, $\pmout = \bot$ or
    \begin{equation*}
      \norm{\pmout - \samplemean}_\Sigma
      \leq \frac{c\log \frac{1}{\delta}\max\{\assdiam \log\frac{1}{\delta},
        \sqrt{Bd}\}}{n\diffp}.
    \end{equation*}
  \end{lemma}
  \begin{proof}
    Suppose $\pmout \neq \bot$ as otherwise the claim is trivial.
    Let $\clout$ denote the covariance estimate of
    $\clipr$ (that is, $\Sigma_T$ at the final
    iteration of $\clipr$), and let $\msmean$ denote the empirical mean of 
    the observations not pruned by $\meansafe$
    so that $\pmout = \msmean + \clout^{1/2} \msnznr$.
    Then by the condition for returning $\bot$ in Line~\ref{line:ms_bot-output} of $\meansafe$, $\meansafe$ prunes at most $b(\frac{2k}{3} + \pmmsnzthrr)$
    points and so
    \begin{align*}
      \norm{\msmean - \samplemean}_\Sigma
      & \leq \frac{b(\frac{2k}{3} + \pmmsnzthrr) \max_i \norm{X_i - \msmean}_\Sigma}{n} \\
      & \leq \frac{b(\frac{2k}{3} + \pmmsnzthrr) \( \max_i \norm{X_i - \mu}_\Sigma + \norm{\mu - \msmean}_\Sigma \)}{n} \\
      & \leq \frac{2b(\frac{2k}{3} + \pmmsnzthrr) \max_i \norm{X_i - \mu}_\Sigma}{n} 
      \stackrel{(\star)}{\leq} \frac{2 b(\frac{2k}{3} + \pmmsnzthrr) \assdiam}{\sqrt{c_1} n},
    \end{align*}
    with ($\star$) following directly from $\Es$.
    Recalling that $k = \frac{24}{\diffp}\log\frac{3}{\delta} - 3$ and
    $W \sim \laplace(\frac{8}{\diffp})$, it follows that
    $\frac{2k}{3} + W < \frac{24}{\diffp}\log\frac{3}{\delta}$ with probability
    at least $1-\frac{\delta}{6}$.
    Recalling also that $b = 1 + \log_2\frac{6n^2}{\delta}$,
    it follows that on $\Es$,
    \begin{equation*}
      \norm{\msmean - \samplemean}_\Sigma
      \leq \frac{2(1 + \log_2\frac{6n^2}{\delta})(\frac{24}{\diffp}\log\frac{3}{\delta}) \assdiam}{\sqrt{c_1}n}
      \leq \frac{c \assdiam \log^2 \frac{1}{\delta}}{n\diffp}
    \end{equation*}
    with probability at least $1-\frac{\delta}{6}$.

    Meanwhile, observe $\Es$ implies $\clout \preceq \samplecov \preceq 2\Sigma$
    as pruning entries (line~\ref{line:clipr-truncate}) in $\clipr$ only
    shrinks its covariance estimate.  Thus, $\norms{\clout^{1/2}
      \msnznr}_{\Sigma} \leq \sqrt{2}\norms{\clout^{1/2} \msnznr}_{\clout} =
    \sqrt{2}\ltwos{\msnznr}$.  From the same argument as in
    Theorem~\ref{thm:pm-acc},
    \begin{equation*}
      \ltwobig{\msnznr}
      \leq \frac{c\sqrt{Bd}\log(\frac{1}{\delta})}{n\eps}
    \end{equation*}
    with probability at least $1-\delta$.

    The preceding two displays together imply
    Lemma~\ref{lemma:pm-acc-small-B} after taking a union bound.
  \end{proof}

  Applying Lemma~\ref{lemma:pm-acc-small-B} with the mapping
  $B = d 2^{t-1}$, $\diffp \mapsto
  \diffp/t^2$ and $\delta \mapsto \delta/t^2$, we have
  for any $1 \leq t \leq
  \tstar$ that under the event $\Es$, with probability at least
  $1-2\delta/t^2$, either $\pmout_t = \bot$ or
  \begin{equation*}
    \norm{\pmout_t - \samplemean}_\Sigma
    \leq \frac{ct^2\log(t^2/\delta)\max\{\assdiam \log(t^2/\delta),
      d2^{(t-1)/2}\}}{n\diffp},
  \end{equation*}
  where the latter case $\pmout_t$ satisfies Eq.~\eqref{eqn:adamean-claim} as
  $t \leq \tstar \leq s$.
  Then via a union bound this same event holds
  simultaneously for all $1 \leq t \leq \tstar$ with probability at least
  $1-\pi^2 \delta/3$, and thus either $\tstop > \tstar$ or $\adamean$ terminates
  and $\pmout$ satisfies the claim~\eqref{eqn:adamean-claim}.

  Proceeding to the second part of the proof, recall that
  $d2^{\tstar-1} \geq \assdiam^2$ and so by applying Theorem~\ref{thm:pm-acc}
  with $B = d2^{\tstar-1}$, it follows under $\Es$ that
  $\pmout_{\tstar} \neq \bot$, and thus
  $\adamean$ terminates after $\tstop \leq \tstar$ iterations,
  with probability at least $1-5\delta/(\tstar)^2 \geq 1-5\delta$.
  The claim~\eqref{eqn:adamean-claim} follows.
\end{proof}

\begin{example}[Example~\ref{example:sub-g-vecs} continued]
  In this case, $\Sigma^{-1/2}X_i$ is $\tau^2$-sub-Gaussian,
  so 
  $M^2 \lesssim \tau^2 (d + \log\frac{n}{\beta})$
  in Assumption~\ref{ass:P-conc}.
  Thus, the sample mean concentrates as
  $\norm{\samplemean - \mu}_\Sigma
  \lesssim \tau \sqrt{(d + \log(1/\delta)) / n}$ with
  probability at least $1 - \delta$, and
  assuming $\delta \geq e^{-d}$, Theorem~\ref{theorem:adamean} then implies
  with probability at least $1 - O(\delta)$ over
  $\pmout = \adamean_{\eps, \delta}(X_{1:n})$ that (ignoring polylogarithmic factors in $n$)
  \begin{equation*}
    \norm{\pmout - \mu}_\Sigma
    = \wt{O}\left(\tau \sqrt{\frac{d}{n}} + \frac{\tau d \log \frac{1}{\delta}}{n\eps}\right).
  \end{equation*}
  This rate is, up to a factor of $\log \frac{1}{\delta}$ and
  polylogarithmic factors in $n$, minimax-optimal for the sub-Gaussian
  setting (see~\cite{SteinkeUl17} or \cite[Lemma 6.7]{KamathLiSiUl18} for a
  lower bound on Gaussian mean estimation with known covariance matrix).
\end{example}

\begin{example}[Example~\ref{example:general-moment}, continued]
  \label{example:general-moment-2}
  Recall here that $\Ep[\norm{X_i - \mu}_{\Sigma}^p] \leq \tau^p d^{p/2}$
  for $p \geq 4$ and $\tau \geq 1$.
  By Theorem~\ref{theorem:adamean},
  with probability at least
  $1 - 5 \delta$ over
  $\pmout = \adamean_{\eps, \delta}(X_{1:n})$, we have
  \begin{equation*}
    \norm{\pmout - \mu}_\Sigma
    \le \norm{\wb{X}_n - \mu}_\Sigma
    + \wt{O}\left(\frac{\max_i \norm{X_i - \mu}_\Sigma
      \sqrt{d} \log \frac{1}{\delta}}{n \diffp}\right)
  \end{equation*}
  so long as the empirical covariance
  satisfies $\half \Sigma \preceq \wb{\Sigma}_n \preceq \frac{3}{2}\Sigma$.
  As this occurs with constant probability and
  $\norms{\wb{X}_n - \mu}_\Sigma \lesssim \sqrt{d/n}$ with constant probability,
  we substitute the bounds on $\max_i \norm{X_i - \mu}_\Sigma$
  from Example~\ref{example:general-moment} to obtain that
  with (any) constant probability,
  \begin{equation*}
    \norm{\pmout - \mu}_\Sigma
    = \wt{O}\left(\sqrt{\frac{d}{n}}
    + \frac{\tau d \log \frac{1}{\delta}}{n^{1 - 1/p} \diffp}\right).
  \end{equation*}
  By combining minimax lower bounds of \citet[Proposition 4]{BarberDu14a}
  and~\citet{SteinkeUl17}, the best known minimax lower bound is that with
  constant probability,
  \begin{equation*}
    \norm{\pmout - \mu}_\Sigma \gtrsim
    \sqrt{\frac{d}{n}} + 
    \tau \frac{d^\frac{2p - 1}{2p} \log^\frac{p - 1}{2 p} \frac{1}{\delta}}{
      (n \diffp)^{1 - 1/p}}.
  \end{equation*}
  The adaptive method thus achieves optimal scaling in
  $n$, but it may be loose in $\eps$ and off by a factor of
  $d^{1/2p}$ in dimension dependence.
\end{example}



\section{Proofs for stable covariance estimation}
\label{sec:covariance}

In this section, we provide the proofs of
Lemmas~\ref{lemma:clipr-internal-stability}
and~\ref{lemma:clipr-loo-stability}, though we begin with a collection of
preliminary results that allow us to actually prove the main two lemmas. In
the proofs, we refer to each execution of the while loop beginning in
Line~\ref{line:clipr-while} of $\clipr$ as an \textit{iteration} of $\clipr$
and use the transcript $\Transcript$ as a convenient means for tracking the
full execution of $\clipr$ through all iterations.

\subsection{Properties of \texorpdfstring{$\clipr$}{COVSAFE} }

We first formalize deterministic properties about the execution of $\clipr$,
giving conditions under which outputs of $\clipr$ are quite
stable. In the sequel, we use these to give
sets to which the noise variables $\clnznormr$ and $\clnzthrr$
belong with high probability, guaranteeing stability.
Recall the notation~\eqref{eqn:all-instantiated-sigma}
that $\est{\Sigma}(x, z, w)$ is the output of $\clipr$ on input
sample $x$ and noise $z \in \R^{n/2 + 1}, w \in \R$,
with transcript $\Transcript = ([\removedinds_t]_{t \le T},
[\Sigma_t]_{t \le T}, T)$ depending implicitly on $(x, z, w)$,
and $\est{\Sigma}_{-i}(x,z,w)$ is the corresponding leave-one-out covariance.
We shorthand $\est{\Sigma} = \est{\Sigma}(x,z,w)$ and $\est{\Sigma}_{-i} = \est{\Sigma}_{-i}(x,z,w)$
and take $\clx = x_{1:n/2} - x_{n/2+1:n}$ as in
Line~\ref{line:x-transform} of $\clipr$.

Lemma~\ref{lemma:cov-prune-iteration}
gives necessary and sufficient
conditions for pruning $\clx_i$ in iteration $t+1$ of $\clipr$,
i.e., $i \in \removedinds_{t + 1}$, and
Lemma~\ref{lemma:cov-prune-ever}
gives similar conditions for ever pruning $\clx_i$ (that is,
whether $i \in \removedinds_T$).

\begin{lemma}\label{lemma:cov-prune-iteration}
  Index $i \in \removedinds_{t+1}$ if and only if 
  $\log \(\norm{\clx_i}_{\Sigma_{t}}^2\) + \clnznormf_i + \clnznormf_{n/2+1} > \log(B)$.
\end{lemma}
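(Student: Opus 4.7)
The plan is to reduce this biconditional to monotonicity of the covariance sequence $\{\Sigma_t\}$ generated by $\clipr$. First I would observe that the removed-index sets are nondecreasing, $\removedinds_0 \subseteq \removedinds_1 \subseteq \cdots$, directly from the fact that each iteration initializes $\removedinds_t \leftarrow \removedinds_{t-1}$ and only adds indices (Line~\ref{line:clipr-truncate}). Consequently, since $\Sigma_t = \frac{1}{n} \sum_{i \in [n/2] \setminus \removedinds_t} \clx_i \clx_i^T$ is an average of rank-one PSD matrices over a shrinking index set, we have $\Sigma_{t+1} \preceq \Sigma_t$ in the PSD order. Invoking the monotonicity property of the extended-value Mahalanobis norm noted in Section~\ref{sec:notation} (namely, $\Sigma_{t+1} \preceq \Sigma_t$ implies $\norm{v}_{\Sigma_t} \leq \norm{v}_{\Sigma_{t+1}}$ for all $v$), the map $s \mapsto 2 \log \norm{\clx_i}_{\Sigma_s}$ is nondecreasing.

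For the reverse implication $(\Leftarrow)$, suppose the threshold condition holds at iteration $t+1$. If $i \in \removedinds_t$, then $i \in \removedinds_{t+1}$ by monotonicity of $\{\removedinds_t\}$. Otherwise $i \in [n/2] \setminus \removedinds_t$, and the \textbf{for} loop at iteration $t+1$ explicitly adds $i$ to $\removedinds_{t+1}$ on Line~\ref{line:clipr-truncate}.

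For the forward implication $(\Rightarrow)$, let $s \in \{1, \ldots, t+1\}$ be the smallest iteration at which $i$ was removed, so $i \in \removedinds_s \setminus \removedinds_{s-1}$. By the update rule at iteration $s$, the threshold condition $2 \log \norm{\clx_i}_{\Sigma_{s-1}} + \clnznormf_i + \clnznormf_{n/2+1} > \log(B)$ must have held. Since $s-1 \leq t$, the monotonicity established above yields $\norm{\clx_i}_{\Sigma_t}^2 \geq \norm{\clx_i}_{\Sigma_{s-1}}^2$, so the threshold condition also holds at iteration $t+1$.

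I expect no real obstacle beyond one minor bookkeeping point: the extended-value norm $\norm{\clx_i}_{\Sigma_t}^2$ might equal $+\infty$ if $\clx_i \notin \colspace(\Sigma_t)$ after $i$ has been removed. This case is harmless, since the threshold is then trivially exceeded and the monotonicity inequality $\norm{\clx_i}_{\Sigma_{s-1}} \leq \norm{\clx_i}_{\Sigma_t} = +\infty$ remains valid; so the argument goes through uniformly.
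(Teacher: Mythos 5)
Your proof is correct and follows essentially the same route as the paper: the backward direction reads off the removal condition in the loop at iteration $t+1$, and the forward direction traces $i$ to its first removal iteration and transfers the threshold inequality to $\Sigma_t$ via the semidefinite monotonicity $\Sigma_s \succeq \Sigma_t$ of the covariance sequence. Your extra remark about the extended-value norm being $+\infty$ is a harmless refinement the paper leaves implicit.
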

\begin{proof}
  The ``if'' direction is immediate from the condition for adding an
  element to $\removedinds_{t+1}$ (see line~\ref{line:clipr-truncate} of
  $\clipr$).  For the other direction, if $i \in \removedinds_{t+1}$ then
  (again from the same condition) we must have for some $s \leq t$ that
  \begin{align*}
    \log \(\norm{\clx_i}_{\Sigma_{s}}^2\) + \clnznormf_i + \clnznormf_{n/2+1} > \log(B).
  \end{align*}
  Because $s \le t$, we have $\removedinds_s \subset \removedinds_t$
  and therefore $\Sigma_s \succeq \Sigma_t$, this in turn implies
  $\log \(\norm{\clx_i}_{\Sigma_{t}}^2\) + \clnznormf_i + \clnznormf_{n/2+1} > \log(B)$.
\end{proof}

\begin{lemma}\label{lemma:cov-prune-ever}
  Index $i \notin \rset{\removedinds}_{T}$ if and only if
  $\log (\norm{\clx_i}_{\Sigma_{T}}^2) + \clnznormf_i + \clnznormf_{n/2+1} \leq \log(B)$.
\end{lemma}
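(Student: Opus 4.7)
The plan is to prove both directions directly from the termination logic of $\clipr$ combined with the monotonicity of the covariance sequence $\{\Sigma_t\}$.

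First I would record the simple observation that at termination, $\Sigma_T = \Sigma_{T-1}$. Indeed, convergence is declared only if $\mathtt{truncated} = 0$ during iteration $T$, which means no indices were added, so $\removedinds_T = \removedinds_{T-1}$; then by the definition of $\Sigma_t$ in Line~\ref{line:clipr-set-sigma}, $\Sigma_T = \Sigma_{T-1}$. This fact is what lets me ``read off'' the termination condition at iteration $T$ in terms of $\Sigma_T$ itself.

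For the $(\Rightarrow)$ direction, if $i \notin \removedinds_T$, then in particular the truncation check in Line~\ref{line:clipr-truncate} at iteration $T$ failed for index $i$ against $\Sigma_{T-1}$, so
\[
  \log\bigl(\norm{\clx_i}_{\Sigma_{T-1}}^2\bigr) + \clnznormf_i + \clnznormf_{n/2+1} \le \log(B),
\]
and the observation above replaces $\Sigma_{T-1}$ with $\Sigma_T$. For the $(\Leftarrow)$ direction, I argue the contrapositive: suppose $i \in \removedinds_T$. Lemma~\ref{lemma:cov-prune-iteration} (applied at the iteration $t$ when $i$ was first added, i.e., $i \in \removedinds_t$) gives $\log(\norm{\clx_i}_{\Sigma_{t-1}}^2) + \clnznormf_i + \clnznormf_{n/2+1} > \log(B)$. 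Since $\removedinds_{t-1} \subseteq \removedinds_T$ (the removed-index sets are monotone in $t$), we have $\Sigma_{t-1} \succeq \Sigma_T$, hence $\Sigma_{t-1}^\dagger \preceq \Sigma_T^\dagger$ on the common column space, so $\norm{\clx_i}_{\Sigma_{t-1}}^2 \le \norm{\clx_i}_{\Sigma_T}^2$; pass to the log to conclude.

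The main (minor) obstacle is making the monotonicity argument clean when $\Sigma_t$ may be rank deficient. Because $\Sigma_t$ is an average of rank-one outer products $\clx_i \clx_i^T$ drawn from a shrinking index set, the column space can only shrink; however, the inequality $\norm{v}_{\Sigma_{t-1}} \le \norm{v}_{\Sigma_T}$ continues to hold for all $v$ under the extended-valued Mahalanobis norm introduced in Section~\ref{sec:notation}, since $\Sigma_T \preceq \Sigma_{t-1}$ implies $\norm{\cdot}_{\Sigma_{t-1}} \le \norm{\cdot}_{\Sigma_T}$. Note $\clx_i \in \colspace(\Sigma_{t-1})$ (index $i$ was still ``live'' at iteration $t-1$), so $\norm{\clx_i}_{\Sigma_{t-1}}^2$ is finite and the comparison is nontrivial. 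This completes both implications.
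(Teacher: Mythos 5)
Your proof is correct and rests on the same two ingredients as the paper's: the observation that $\Sigma_{T-1} = \Sigma_T$ at termination, and Lemma~\ref{lemma:cov-prune-iteration}. The only difference is that the paper obtains both directions at once by applying Lemma~\ref{lemma:cov-prune-iteration} with $t = T-1$ (which is already an ``if and only if,'' with the $\Sigma_t$-monotonicity argument baked into its proof), whereas for the $(\Leftarrow)$ direction you apply it at the first iteration $i$ was removed and then re-derive the monotonicity comparison $\Sigma_{t-1} \succeq \Sigma_T \Rightarrow \norm{\clx_i}_{\Sigma_{t-1}}^2 \le \norm{\clx_i}_{\Sigma_T}^2$ from scratch; this is a small redundancy rather than an error, and your side remarks about the extended-value norm and the finiteness of $\norm{\clx_i}_{\Sigma_{t-1}}^2$ are accurate.
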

\begin{proof}
  Observe that $\Sigma_{T-1} = \Sigma_{T}$ because the inner while loop of
  $\clipr$ terminates only if the algorithm prunes no observations in the
  previous iteration (see line~\ref{line:clipr-convergence} of $\clipr$).
  Then the claim follows by applying Lemma~\ref{lemma:cov-prune-iteration}
  with $t=T-1$.
\end{proof}

Finally, we may completely characterize $\est{\Sigma}_{-i}$ via
the removed indices $\removedinds_{T,-i}$ and the threshold $m + \clnzthrf$,
as prescribed by the lemma below.
\begin{lemma}
  \label{lemma:sigma-i-via-cardinality}
  $\est{\Sigma}_{-i} = \frac{1}{n}\sum_{j \notin \removedinds_{T,-i} \cup \{i\}} \clx_j
  \clx_j^T$ if and only if $\cardp{\rset{\removedinds}_{T}} \leq m + \clnzthrf$.
\end{lemma}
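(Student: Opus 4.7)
The claim is really a conditional equivalence: $\est{\Sigma}_{-i}$ admits the displayed empirical-sum representation exactly when $\clipr$ survives the convergence check in Line~\ref{line:clipr-check}. I will handle the two directions separately — ``only if'' by contrapositive, ``if'' by direct unpacking of the definitions.

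For the ``only if'' direction, suppose $\cardp{\removedinds_T} > m + \clnzthrf$. Then the test in Line~\ref{line:clipr-check} of $\clipr$ triggers and the algorithm returns $\perp$, whence $\est{\Sigma} = \perp$, and consequently $\est{\Sigma}_{-i} = \perp$ by the definition~\eqref{eqn:clipr-loo-output}. The right-hand side of the lemma, being a finite sum of rank-one PSD matrices, is a proper matrix and in particular not $\perp$. Hence the equation cannot hold, and the contrapositive gives the claim.

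For the ``if'' direction, assume $\cardp{\removedinds_T} \le m + \clnzthrf$. The threshold check passes, so by Line~\ref{line:clipr-set-sigma}, $\clipr$ returns $\est{\Sigma} = \Sigma_T = \frac{1}{n}\sum_{j \in [n/2]\setminus \removedinds_T} \clx_j\clx_j^T$. Plugging into~\eqref{eqn:clipr-loo-output} yields
\[
\est{\Sigma}_{-i} \;=\; \frac{1}{n}\sum_{j \in [n/2]\setminus \removedinds_T} \clx_j\clx_j^T \;-\; \frac{1}{n}\indic{i\in\removedinds_T}\clx_i\clx_i^T.
\]
Splitting on the indicator $\indic{i\in\removedinds_T}$ and reshuffling which indices are excluded from the sum, this collapses to $\tfrac{1}{n}\sum_{j \notin \removedinds_{T,-i}\cup\{i\}} \clx_j\clx_j^T$, using the convention that $\removedinds_{T,-i}$ denotes the removed index set with $i$ itself excised, so that adjoining $\{i\}$ uniformly captures the set of indices missing from the empirical sum in both cases.

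The entire argument is bookkeeping — no dynamical properties of $\clipr$ (beyond reading off the termination/threshold check) are invoked. The only potentially subtle step is matching the two cases of $\indic{i\in\removedinds_T}$ with the unified description of the excluded index set $\removedinds_{T,-i}\cup\{i\}$; once this identification is in place, the lemma is a direct consequence of~\eqref{eqn:clipr-loo-output} and Line~\ref{line:clipr-set-sigma}.
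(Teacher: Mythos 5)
Your overall route is the same as the paper's: both directions are pure bookkeeping from the return condition in Line~\ref{line:clipr-check} (so that $\est{\Sigma} = \bot$, hence $\est{\Sigma}_{-i} = \bot$, exactly when $\cardp{\removedinds_T} > m + \clnzthrf$) together with $\est{\Sigma} = \Sigma_T = \frac{1}{n}\sum_{j \notin \removedinds_T}\clx_j\clx_j^T$ from Line~\ref{line:clipr-set-sigma}; your ``only if'' direction is fine.

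The step that does not go through as written is the final ``collapse'' in the ``if'' direction. With the indicator you carry over from the displayed definition~\eqref{eqn:clipr-loo-output}, namely $\indic{i \in \removedinds_T}$, the case split fails in both cases: if $i \notin \removedinds_T$ the subtracted term vanishes and your expression still contains the $j = i$ summand, which the lemma's right-hand side excludes; if $i \in \removedinds_T$ you subtract $\frac{1}{n}\clx_i\clx_i^T$ even though that term is not present in $\Sigma_T$. Either way your expression differs from $\frac{1}{n}\sum_{j\notin\removedinds_{T,-i}\cup\{i\}}\clx_j\clx_j^T$ by exactly $\frac{1}{n}\clx_i\clx_i^T$, so equality holds only when $\clx_i = 0$. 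The identity the lemma needs---and the one the paper's own proof writes---is $\est{\Sigma}_{-i} = \est{\Sigma} - \frac{1}{n}\indic{i \notin \removedinds_T}\clx_i\clx_i^T$: one subtracts $\clx_i\clx_i^T$ precisely when index $i$ survived pruning and therefore contributes to $\Sigma_T$. With that convention, since $\removedinds_{T,-i}\cup\{i\} = \removedinds_T\cup\{i\}$, both cases do collapse to the claimed sum. (The membership sign in~\eqref{eqn:clipr-loo-output} is inconsistent with the paper's later usage---for instance the shorthand preceding Lemma~\ref{lemma:clipr-set-probabilities} and the proof of the present lemma use $\indic{i\notin\removedinds_T}$---and only the latter makes $\est{\Sigma}_{-i}$ a function of $\removedinds_{T,-i}$ and the data other than $\clx_i$, which is the entire point of the leave-one-out construction and of this lemma.) So you should flip the indicator and verify the two cases explicitly rather than asserting the identification; as stated, the asserted collapse is false.
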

\begin{proof}
  The claim follows immediately from the return condition in
  Line~\ref{line:clipr-check} of $\clipr$, as $\est{\Sigma} \neq \bot$
  implies $\est{\Sigma}_{-i} = \est{\Sigma} - \frac{1}{n}\indic{i \notin
    \removedinds_{T}} \clx_i \clx_i^T$, where $\est{\Sigma} = \Sigma_T =
  \frac{1}{n}\sum_{j \notin \removedinds_T} \clx_j \clx_j^T$ by definition.
\end{proof}



\subsection{Proof of Lemma~\ref{lemma:clipr-internal-stability}}
\label{sec:proof-clipr-internal-stability}

We shorthand $\est{\Sigma} = \est{\Sigma}(x, Z, w)$ and $\est{\Sigma}_{-i} = \est{\Sigma}_{-i}(x,Z,w)$
throughout the proof.
Assume that $\est{\Sigma} \neq \bot$, as otherwise the result is trivial,
and recall
$\clnznormr_j \simiid \laplace(\sigma)$.
Observe if $i \in \removedinds_T$ we have $\clout = \clout_{-i}$ by
definition; thus, we need only consider $i \not\in \removedinds_T$.
Proceeding, Lemma~\ref{lemma:cov-prune-ever} gives
\begin{align*}
  \log \(\norm{\clx_i}_{\clout}^2 \) + \clnznormr_i + \clnznormr_{n/2+1} \leq \log(B)
\end{align*}
for all $i \notin \removedinds_T$, from which it follows that
$\norm{\clx_i}_{\clout}^2 \leq B\sqrt{e}$ whenever $\clnznormr_i +
\clnznormr_{n/2+1} \geq -1/2$. We now use that
$\norm{\clx_i}_{\clout}^2
\leq B\sqrt{e}$ implies
that $\dpsd(\est{\Sigma}, \est{\Sigma}_{-i})$ is small, which
follows from the following linear algebraic lemma.

\begin{lemma}
  \label{lemma:cov-nuc-stability}
  Let $A \in \R^{d\times d}$ be positive semi-definite and $a \in \R^d$ satisfy
  $\norm{a}_A^2 < 1$.
  Then
  \begin{equation*}
    \dpsd(A, A - aa^T) \le \frac{1}{1 - \norm{a}_A^2} \norm{a}_A^2.
  \end{equation*}
\end{lemma}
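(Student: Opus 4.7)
The plan is to compute both nuclear-norm terms in $\dpsd(A, A - aa^T)$ directly, using that the rank-one perturbation $aa^T$ fits cleanly inside the column space of $A$ and that Sherman--Morrison controls the pseudoinverse of $B := A - aa^T$.

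First I would check that $B$ is PSD with $\colspace(B) = \colspace(A)$, which is what makes $\dpsd(A, B)$ finite. Finiteness of $\norm{a}_A^2 = a^T A^\dagger a$ forces $a \in \colspace(A)$. For $v \in \colspace(A)$, Cauchy--Schwarz in the inner product $\<u, w\>_{A^\dagger} = u^T A^\dagger w$ (applied to $A v$ and $a$, both in $\colspace(A)$) gives $(v^T a)^2 \le (v^T A v)(a^T A^\dagger a) < v^T A v$ by the hypothesis $\norm{a}_A^2 < 1$; for $v \perp \colspace(A)$, both $v^T A v$ and $v^T a = 0$ vanish. Hence $v^T B v \ge 0$ with equality only on $\colspace(A)^\perp$, so $\colspace(B) = \colspace(A)$.

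Next I would evaluate the two nuclear norms in the definition of $\dpsd$. Since $B - A = -a a^T$, the matrix $A^{\dag/2}(B-A) A^{\dag/2} = -A^{\dag/2} a a^T A^{\dag/2}$ has rank one, so its nuclear norm equals its trace in absolute value:
\begin{equation*}
  \nucnorm{A^{\dag/2}(B-A)A^{\dag/2}} = \tr(A^{\dag/2} a a^T A^{\dag/2}) = a^T A^\dagger a = \norm{a}_A^2.
\end{equation*}
Symmetrically, $\nucnorm{B^{\dag/2}(A-B) B^{\dag/2}} = a^T B^\dagger a = \norm{a}_B^2$, and this is the term that needs work.

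The main step is computing $\norm{a}_B^2$ via the Sherman--Morrison identity for pseudoinverses. On the invariant subspace $\colspace(A)$, $A$ is genuinely invertible and the classical identity applies, giving
\begin{equation*}
  B^\dagger = A^\dagger + \frac{A^\dagger a a^T A^\dagger}{1 - a^T A^\dagger a},
\end{equation*}
where the denominator is positive by the hypothesis $\norm{a}_A^2 < 1$. Contracting with $a$ on both sides,
\begin{equation*}
  \norm{a}_B^2 = \norm{a}_A^2 + \frac{\norm{a}_A^4}{1 - \norm{a}_A^2} = \frac{\norm{a}_A^2}{1 - \norm{a}_A^2}.
\end{equation*}
Since $\norm{a}_A^2 < 1$, this quantity dominates $\norm{a}_A^2$, so the maximum defining $\dpsd(A, B)$ is achieved by the second term, yielding the claimed bound. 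The only subtle point is justifying Sherman--Morrison in the pseudoinverse setting, which I would handle by restricting the argument to $\colspace(A)$ where everything is invertible; no other obstacles arise.
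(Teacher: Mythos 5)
Your proof is correct, and its overall skeleton matches the paper's: both proofs establish $\colspace(A - aa^T) = \colspace(A)$ (making $\dpsd$ finite), observe that the nuclear-norm terms are traces of rank-one PSD matrices so they equal $\norm{a}_A^2$ and $\norm{a}_{A-aa^T}^2$ respectively, and then control the latter. The one genuine difference is the middle step: you compute $\norm{a}_{A-aa^T}^2 = \frac{\norm{a}_A^2}{1-\norm{a}_A^2}$ \emph{exactly} via Sherman--Morrison restricted to $\colspace(A)$, whereas the paper instead factors $A - aa^T = A^{1/2}(I - A^{\dag/2}aa^T A^{\dag/2})A^{1/2} \succeq (1-\norm{a}_A^2)A$ and uses the resulting operator inequality $(A-aa^T)^\dag \preceq \frac{1}{1-\norm{a}_A^2}A^\dag$ to get the upper bound. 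Your route is slightly sharper (it reveals the bound is actually tight and identifies which of the two nuclear norms realizes the max), at the mild cost of having to justify Sherman--Morrison in the pseudoinverse setting, which you correctly dispose of by restriction to the invariant subspace. The paper's factorization argument is more self-contained and also directly yields the auxiliary fact $A - aa^T \succeq (1-\norm{a}_A^2)A$, which doubles as the column-space-equality proof; you derive column-space equality separately via a Cauchy--Schwarz argument, which is a fine but slightly longer detour. Both are clean and complete.
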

\begin{proof}
  Define $C = A - aa^T$ for shorthand.
  We first establish that $\colspace(C) = \colspace(A)$.
  Because $\norm{a}_A^2$ is finite, it follows that $a \in \colspace(A)$
  and so $\colspace(C) \subset \colspace(A)$.
  On the other hand, by expanding $C$ we have
  \begin{equation}
    C = A^{1/2}(I - A^{\dag/2} aa^T A^{\dag/2}) A^{1/2}
    \succeq (1-\norm{a}_A^2) A,
    \label{eqn:C-lb}
  \end{equation}
  thus implying that $\colspace(C) = \colspace(A)$.
  
  We also have from \eqref{eqn:C-lb} that
  $C^\dag \preceq \frac{1}{1-\norm{a}_A^2} A^\dag$, and so
  \begin{equation*}
    \nucnormbig{C^{\dag/2} (A - C) C^{\dag/2}}
    = \nucnormbig{C^{\dag/2} a a^T C^{\dag/2}}
    = \norm{a}_C^2 \leq \frac{\norm{a}_A^2}{1 - \norm{a}_A^2}.
  \end{equation*}
  A parallel calculation
  yields $\nucnorms{A^{\dag/2} (C - A) A^{\dag/2}} =
  \norm{a}_A^2$, proving the claim.
\end{proof}

Lemma~\ref{lemma:cov-nuc-stability} immediately shows that
$\est{\Sigma}_{-i} = \est{\Sigma} - \frac{1}{n} \clx_i \clx_i^T
\indic{i \in \removedinds_T}$ satisfies
\begin{align*}
  \dpsd(\est{\Sigma}, \est{\Sigma}_{-i})
  \le \frac{1}{1 - B \sqrt{e} / n}
  \frac{B \sqrt{e}}{n}
\end{align*}
whenever $\clnznormr_i + \clnznormr_{n/2 + 1} \ge -\half$.
To show that this occurs with high probability, we use the following
result,
which follows from the observation that if $c \ge 0$, then for
any independent variables $X, Y$ we have
$\P(X + Y > c) \le \P(X > c/2) + \P(Y > c/2)$ by a union bound:


\begin{observation}
  \label{observation:laplace-sum-tail}
  Let $X, Y \simiid \laplace(\sigma)$ and $c \ge 0$. Then $\P(X + Y > c) \le
  \exp(-\frac{c}{2\sigma})$.
\end{observation}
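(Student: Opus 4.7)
The proof should be essentially immediate given the hint already supplied in the excerpt. My plan is to combine the union bound mentioned just before the observation with the standard one-sided tail bound for a Laplace random variable.

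First, I would invoke the union-bound argument that is already spelled out: since $c \ge 0$, the event $\{X + Y > c\}$ is contained in $\{X > c/2\} \cup \{Y > c/2\}$, so by independence and a union bound,
\begin{equation*}
  \P(X + Y > c) \le \P(X > c/2) + \P(Y > c/2).
\end{equation*}

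Next, I would use the fact that a $\laplace(\sigma)$ random variable has density $p(w) = \frac{1}{2\sigma}\exp(-|w|/\sigma)$, so for any $t \ge 0$,
\begin{equation*}
  \P(X > t) = \int_t^\infty \frac{1}{2\sigma} e^{-w/\sigma}\,\de w = \tfrac{1}{2} e^{-t/\sigma}.
\end{equation*}
Applying this with $t = c/2 \ge 0$ to each term gives
\begin{equation*}
  \P(X + Y > c) \le \tfrac{1}{2} e^{-c/(2\sigma)} + \tfrac{1}{2} e^{-c/(2\sigma)} = e^{-c/(2\sigma)},
\end{equation*}
which is the claimed bound.

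There is no real obstacle here; the only thing to watch is that the union-bound step requires $c \ge 0$ (so that $c/2$ is a valid threshold and the two sub-events cover $\{X+Y>c\}$), which is precisely the hypothesis of the observation. The factor of $\tfrac{1}{2}$ in the Laplace tail probability is exactly what absorbs the factor of $2$ from the union bound, yielding the clean exponent $-c/(2\sigma)$ without an extra constant.
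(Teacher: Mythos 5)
Your proof is correct and matches the paper's intended argument exactly: the paper states the observation as an immediate consequence of the union bound $\P(X+Y>c)\le\P(X>c/2)+\P(Y>c/2)$ together with the one-sided Laplace tail $\P(X>t)=\tfrac12 e^{-t/\sigma}$ for $t\ge 0$, which is precisely what you supply.
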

\noindent
We see that
$\P(Z_i + Z_{n/2 + 1} < -\half) \le \exp(-\frac{1}{4 \sigma})$ as claimed.

\subsection{Proof of Lemma~\ref{lemma:clipr-loo-stability}}
\label{sec:proof-clipr-loo-stability}

The proof of Lemma~\ref{lemma:clipr-loo-stability} comes in four steps.  The
crux of the proof is a coupling argument where, via the running assumption 
$\clx_{-i} = \clx_{-i}'$, we equate the execution of
$\clipr$ on $x$ to that on $x'$ by perturbing $\clnznormr$ in a careful
way that changes the distribution of $\clnznormr$ little.
Step one in this approach, which we provide in
Lemma~\ref{lemma:deterministic-removal-relations}, is a deterministic lemma
relating the collections $\removedinds$ and $\removedinds'$ of indices
$\clipr$ removes on adjacent inputs $x$ and $x'$ via the noise input values
$\clnznormf$. In the second and third steps, which consist
of Lemmas~\ref{lemma:clipr-set-probabilities}
and~\ref{lemma:clipr-perp} respectively, we construct a map
$\pi : \R^{n/2+1} \to \R^{n/2+1}$ such that $\clnznormr$
and $\pi(\clnznormr)$ have similar distributions and for
which $\est{\Sigma}_{-i}(x, \clnznormf,\clnzthrr)$
and $\est{\Sigma}_{-i}(x', \pi(\clnznormf),\clnzthrr)$
(recall the definition~\eqref{eqn:instantiated-sigma-minus-i}) likewise
have similar distributions for all $\clnznormf$,
where we use the randomness in $\clnzthrr$ for the latter distributional
approximation.  Lemma~\ref{lemma:clipr-set-probabilities} relates the
distributions of the removed indices $\removedinds_{T,-i}$, while
Lemma~\ref{lemma:clipr-perp} relates the probabilities that
$\clipr$ aborts and returns $\bot$. In
Sec.~\ref{sec:proof-finally-loo-stability},
we finally synthesize the intermediate lemmas to give the
proof of Lemma~\ref{lemma:clipr-loo-stability}.


Our first step is the deterministic lemma
relating the collections of removed indices.
\begin{lemma}
  \label{lemma:deterministic-removal-relations}
  Let $\clnznormf,\clnznormf' \in \R^{n/2+1}$ and $\clnzthrf \in \R$,
  and let
  \begin{align*}
    \est{\Sigma},\([\removedinds_t]_{t=0}^T,[\Sigma_t]_{t=0}^T,T\) & := \clipr_{\clparams}(x;\clnznormf,\clnzthrf), \\
    \est{\Sigma}',\([\removedinds_t']_{t=0}^{T'},[\Sigma_t']_{t=0}^{T'},T'\) & := \clipr_{\clparams}(x';\clnznormf',\clnzthrf).
  \end{align*}
  Assume $\clx_i' = 0$. The following
  hold.
  \begin{enumerate}[label=(\alph*),labelindent=0pt]
  \item \label{item:prime-superset-a} If $\clnznormf_j' + \clnznormf_{n/2+1}'
    \geq \clnznormf_j + \clnznormf_{n/2+1}$ for all $j \in
    \removedinds_{T,-i}$, then $\removedinds_{T,-i} \subset
    \removedinds_{T',-i}'$.
  \item \label{item:prime-superset-b}
    If $\clnznormf_j' + \clnznormf_{n/2+1}' \geq \clnznormf_j +
      \clnznormf_{n/2+1}$ for all $j \notin \removedinds_{T',-i}'$, then
      $\removedinds_{T,-i} \subset \removedinds_{T',-i}'$.
  \end{enumerate}
  Additionally assume that
  $n \geq 2B\sqrt{e}$ and $\clnznormf_i + \clnznormf_{n/2+1} \geq -1/2$. Then
  the following hold.
  \begin{enumerate}[label=(\alph*),labelindent=0pt]
    \setcounter{enumi}{2}
  \item \label{item:prime-subset-c}
    If $\clnznormf_j' + \clnznormf_{n/2+1}' \leq \clnznormf_j +
    \clnznormf_{n/2+1} - 2B\sqrt{e}/n$ for all $j \in
    \removedinds_{T',-i}'$, then $\removedinds_{T',-i}' \subset
    \removedinds_{T,-i}$.
  \item \label{item:prime-subset-d} If $\clnznormf_j' + \clnznormf_{n/2+1}'
    \leq \clnznormf_j + \clnznormf_{n/2+1} - 2B\sqrt{e}/n$ for all $j \notin
    \removedinds_{T,-i}$, then $\removedinds_{T',-i}' \subset
    \removedinds_{T,-i}$.
  \end{enumerate}
\end{lemma}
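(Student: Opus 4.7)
The plan is to prove all four claims by induction on an iteration count, using Lemmas~\ref{lemma:cov-prune-iteration} and~\ref{lemma:cov-prune-ever} to convert ``$j \in \removedinds$'' statements into inequalities on $\log \norm{\clx_j}_{\Sigma}^2 + \clnznormf_j + \clnznormf_{n/2+1}$, and then comparing the covariances produced by the two runs via PSD monotonicity of the Mahalanobis norm. Throughout I assume the implicit premise $\clx_j = \clx_j'$ for $j \neq i$ (combined with $\clx_i' = 0$, this makes the two runs differ only through the contribution of the $i$th pair).

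\textbf{Parts (a)–(b).} The plan is to prove $\removedinds_{t,-i} \subset \removedinds_{T',-i}'$ by strong induction on $t$. Granted the inductive hypothesis, the identity $n\Sigma_{T'}' = \sum_{k \notin \removedinds_{T',-i}' \cup \{i\}} \clx_k \clx_k^T$ (which uses $\clx_i' = 0$) together with $\removedinds_{t,-i} \subset \removedinds_{T',-i}'$ and $\removedinds_t \subset \removedinds_{t,-i} \cup \{i\}$ yields $\Sigma_{T'}' \preceq \Sigma_t$, hence $\norm{\clx_j}_{\Sigma_{T'}'} \geq \norm{\clx_j}_{\Sigma_t}$ for every $j$. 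Any newly pruned $j \in \removedinds_{t+1,-i} \setminus \removedinds_{t,-i}$ satisfies $\log \norm{\clx_j}_{\Sigma_t}^2 + \clnznormf_j + \clnznormf_{n/2+1} > \log B$ by Lemma~\ref{lemma:cov-prune-iteration}. For (a), the noise hypothesis applies directly to this $j$; for (b), one proceeds by contradiction, assuming $j \notin \removedinds_{T',-i}'$ and applying the noise hypothesis there. In either case, combining with the covariance inequality one concludes $\log \norm{\clx_j}_{\Sigma_{T'}'}^2 + \clnznormf_j' + \clnznormf_{n/2+1}' > \log B$, which by Lemma~\ref{lemma:cov-prune-ever} forces $j \in \removedinds_{T',-i}'$, closing the induction (or the contradiction).

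\textbf{Parts (c)–(d).} I would run the symmetric induction on the primed iteration $t'$, showing $\removedinds_{t',-i}' \subset \removedinds_{T,-i}$. Now the covariance comparison flips: granted the inductive hypothesis and $\clx_i' = 0$, one only gets $n \Sigma_T \preceq n \Sigma_{t'}' + \clx_i \clx_i^T$ (which collapses to $\Sigma_T \preceq \Sigma_{t'}'$ whenever $i \in \removedinds_T$). In the nontrivial case $i \notin \removedinds_T$, the Sherman–Morrison/Woodbury identity applied to $\Sigma_{t'}' + \clx_i \clx_i^T / n$ gives
\begin{equation*}
  \norm{\clx_j}_{\Sigma_T}^2 \;\geq\; \frac{\norm{\clx_j}_{\Sigma_{t'}'}^2}{1 + \norm{\clx_i}_{\Sigma_{t'}'}^2 / n},
\end{equation*}
so the log discrepancy between the two Mahalanobis norms is at most $\norm{\clx_i}_{\Sigma_{t'}'}^2 / n$. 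Provided this is bounded by $2B\sqrt{e}/n$, the $2B\sqrt{e}/n$ slack in the noise hypothesis absorbs it, and the argument for (c) (direct) and (d) (contradiction) is a mirror image of (a) and (b).

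\textbf{Main obstacle.} The single nonroutine step is the bound $\norm{\clx_i}_{\Sigma_{t'}'}^2 \leq 2B\sqrt{e}$, which is delicate because $\clx_i$ does not appear in the data defining $\Sigma_{t'}'$. I plan to get this in two moves. First, since $i \notin \removedinds_T$, Lemma~\ref{lemma:cov-prune-ever} combined with $\clnznormf_i + \clnznormf_{n/2+1} \geq -\tfrac{1}{2}$ yields $\norm{\clx_i}_{\Sigma_T}^2 \leq B\sqrt{e}$. Second, applying Woodbury once more to the inequality $\Sigma_T \preceq \Sigma_{t'}' + \clx_i \clx_i^T / n$ and writing $a = \norm{\clx_i}_{\Sigma_{t'}'}^2$, $b = \norm{\clx_i}_{\Sigma_T}^2$ gives $b \geq a / (1 + a/n)$, i.e., $a \leq b / (1 - b/n)$; the assumption $n \geq 2B\sqrt{e}$ ensures $b/n \leq 1/2$, hence $a \leq 2b \leq 2B\sqrt{e}$, as required. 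This is exactly where the two technical hypotheses of (c) and (d) are consumed, and it is the only place outside the characterizations of Lemmas~\ref{lemma:cov-prune-iteration}–\ref{lemma:cov-prune-ever} that any real estimate enters.
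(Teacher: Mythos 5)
Your proof is correct and follows essentially the same route as the paper's: the same induction on the iteration count, the same conversions via Lemmas~\ref{lemma:cov-prune-iteration} and~\ref{lemma:cov-prune-ever}, and the same Loewner comparisons obtained from the (correctly flagged) implicit premise $\clx_{-i} = \clx'_{-i}$ together with $\clx_i' = 0$. The only variation is the rank-one perturbation bound for parts~\ref{item:prime-subset-c}--\ref{item:prime-subset-d}: the paper packages it as Observation~\ref{observation:matrix-stability}, subtracting $\tfrac{1}{n}\clx_i\clx_i^T$ from $\Sigma_T$ and controlling the log-Mahalanobis discrepancy directly by $\tfrac{2}{n}\norm{\clx_i}_{\Sigma_T}^2 \le 2B\sqrt{e}/n$, whereas you add $\tfrac{1}{n}\clx_i\clx_i^T$ to $\Sigma'_{t'}$ via Sherman--Morrison and then need the extra transfer step bounding $\norm{\clx_i}_{\Sigma'_{t'}}^2$ from $\norm{\clx_i}_{\Sigma_T}^2$; both routes land on the identical slack $2B\sqrt{e}/n$, and the paper's choice merely avoids that transfer and handles the possibly low-rank covariances a bit more transparently.
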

\begin{proof}
  We prove each claim by induction over $t \in \N$.
  
  Proceeding with the first claim~\ref{item:prime-superset-a}, observe
  trivially that $\removedinds_{0,-i} = \emptyset \subset
  \removedinds_{T',-i}'$.  Now suppose for the sake of induction that
  $\removedinds_{t,-i} \subset \removedinds_{{T}',-i}'$.  If $t = T$ then
  there is nothing to show, so we take $t < T$.  Our assumption that
  $\clx_{-i} = \clx_{-i}'$ and $\clx_i' = 0$ implies $\Sigma_t \succeq
  \Sigma_{T'}'$.  Thus, for all $j \in \removedinds_{t,-i} \subset
  \removedinds_{T,-i}$ we have
  \begin{align*}
    \log(B)
    \stackrel{(i)}{<} \log \(\norm{\clx_j}_{\Sigma_t}^2 \)
      + \clnznormf_j + \clnznormf_{n/2+1}
    & \stackrel{(ii)}{\leq} \log \(\norm{\clx_j}_{\Sigma_{T'}'}^2 \)
      + \clnznormf_j + \clnznormf_{n/2+1} \nonumber \\
    & \stackrel{(iii)}{\leq} \log \(\norm{\clx_j}_{\Sigma_{T'}'}^2 \)
      + \clnznormf_j' + \clnznormf_{n/2+1}',
  \end{align*}
  where inequality $(i)$ follows from
  Lemma~\ref{lemma:cov-prune-iteration} (applied with noise $\clnznormf$),
  inequality $(ii)$ because $\Sigma_t \succeq \Sigma_{T'}'$, and
  inequality $(iii)$ is by assumption in case~\ref{item:prime-superset-a}.
  Lemma~\ref{lemma:cov-prune-ever} (with data $x'$ and noise
  $\clnznormf'$) then gives that $j \in \removedinds'_{T', -i}$ if and only if
  $\log B < \log(\norm{\clx_j}_{\Sigma_{T'}'}^2) + \clnznormf_j' +
  \clnznormf'_{n/2 + 1}$, and as $j \in \removedinds_{t,-i}$ was arbitrary we
  have $\removedinds_{t+1,-i} \subset \removedinds_{T',-i}'$.
  This completes the inductive step and so
  $\removedinds_{t,-i} \subset \removedinds_{T',-i}'$ for all $t
  \leq T$ and the first claim holds.

  The proof of claim~\ref{item:prime-superset-b} relies on a similar
  inductive argument as that for the first
  claim~\ref{item:prime-superset-a}. Equivalent to the inclusion
  $\removedinds_{T,-i} \subset \removedinds'_{T',-i}$ is that if $j \not \in
  \removedinds'_{T', -i}$, then $j \not \in \removedinds_{T,-i}$. Consider
  $j \not\in \removedinds'_{T',-i}$, and begin with the inductive assumption
  that $\removedinds_{t, -i} \subset \removedinds'_{T',-i}$; it suffices to
  show that $j \not \in \removedinds_{t + 1, -i}$. Because $\Sigma_t \succeq
  \Sigma'_{T'}$ by construction of $\Sigma_t$, we obtain
  \begin{align*}
    \log \(\norm{\clx_j}_{\Sigma_t}^2 \) + \clnznormf_j + \clnznormf_{n/2+1}
    & \le
    \log \(\norm{\clx_j}_{\Sigma_{T'}'}^2 \)
    + \clnznormf_j + \clnznormf_{n/2+1} \\
    & \stackrel{(i)}{\leq}
    \log \(\norm{\clx_j}_{\Sigma_{T'}'}^2\) + \clnznormf_j' + \clnznormf_{n/2+1}'
    \leq \log(B),
  \end{align*}
  where step~$(i)$ is by the assumption
  that $\clnznormf_j + \clnznormf_{n/2 + 1}
  \le \clnznormf_j' + \clnznormf_{n/2+1}'$ in part~\ref{item:prime-superset-b}
  and the final inequality is Lemma~\ref{lemma:cov-prune-ever}.
  Applying
  Lemma~\ref{lemma:cov-prune-iteration} with the
  inequality
  $\log \(\norm{\clx_j}_{\Sigma_t}^2\) + \clnznormf_j + \clnznormf_{n/2+1}
  \le \log B$ then guarantees
  that $j \not \in \removedinds_{t + 1,-i}$ as desired,
  completing the proof of claim~\ref{item:prime-superset-b}.
    
  For the proof of claim~\ref{item:prime-subset-c},
  we induct on $\removedinds_{t,-i}'$ for $t \leq T'$ and must
  account for the possibility that $\Sigma_T \npreceq
  {\Sigma}_{t}'$ even if $\removedinds_{t,-i}' \subset
  \removedinds_{T,-i}$, because $\Sigma_T$ may include the term $\clx_i
  \clx_i^T$ (i.e., $i \notin \removedinds_T$).
  The base case for $t = 0$ is trivial, so assume
  that $\removedinds'_{t,-i} \subset \removedinds_{T, -i}$.
  If $i \not\in \removedinds_T$, then Lemma~\ref{lemma:cov-prune-ever}
  and the standing assumption that $\clnznormf_i + \clnznormf_{n/2 + 1}
  \ge -\half$ guarantee that
  \begin{align*}
    \log \(\norm{\clx_j}_{\Sigma_T}^2\) \leq
    \log B - \clnznormf_i - \clnznormf_{n/2 + 1}
    \le \log B + \half
    = \log(B \sqrt{e}),
  \end{align*}
  i.e., $\norm{\clx_j}_{\Sigma_T}^2 \leq B \sqrt{e}$.
  We require the following technical observation about positive definite
  matrices, whose proof we temporarily defer.
  \begin{observation}
    \label{observation:matrix-stability}
    Let $\Sigma \in \R^{d \times d}$ be positive semi-definite, $\alpha \ge 0$,
    and $u \in \R^d$. 
    Define $\Sigma' := \Sigma - \alpha u u^T$.
    If $\norm{u}_{\Sigma}^2 \leq \frac{1}{2\alpha}$, then
    $\Sigma' \succeq \half \Sigma$ and for any $v \in \colspace(\Sigma)$,
    \begin{align*}
      \abs{\log\(\norm{v}_{\Sigma}^2\) - \log\(\norm{v}_{\Sigma'}^2 \)} \leq
      2 \alpha \norm{u}_{\Sigma}^2.
    \end{align*}
  \end{observation}      
  \noindent
  As $\norm{\clx_i}_{\Sigma_T}^2 \le B \sqrt{e}$,
  Observation~\ref{observation:matrix-stability} applies with
  $u = \clx_i$ and $\alpha = \frac{1}{n}$ when
  $n \ge 2B \sqrt{e}$,
  and thus
  \begin{equation}
    \label{eqn:index-i-not-in-removed}
    \log \(\norm{v}_{\Sigma_T - \frac{1}{n}\indic{i \notin \removedinds_T} \clx_i \clx_i^T}^2\)
    \le \log \(\norm{v}_{\Sigma_T}^2\) + \frac{2B\sqrt{e}}{n}
  \end{equation}
  for all $v$, and in particular, for  $v = \clx_j$ for each $j \in [n/2]$.
  On the other hand, regardless
  of whether $i \in \removedinds_T$, the inductive assumption that
  $\removedinds_{t,-i}' \subset \removedinds_{T,-i}$ guarantees that
  \begin{equation}
    \Sigma_t' \succeq
    \Sigma_T - \frac{1}{n}\indic{i \notin \removedinds_T} \clx_i \clx_i^T.
    \label{eqn:sigma-prime-large-enough}
  \end{equation}
  Considering $j \in \removedinds'_{t + 1, -i}$, then,
  Lemma~\ref{lemma:cov-prune-iteration}
  implies
  \begin{align*}
    \log B & < \log(\norm{\clx_j}_{\Sigma'_t}^2) + \clnznormf_j'
    + \clnznormf_{n/2 + 1}' \\
    & \stackrel{(i)}{\le}
    \log\left( \norm{\clx_j}_{\Sigma_T - \frac{1}{n}
    \indic{i \not \in \removedinds_T} \clx_i \clx_i^T}^2 \right)
    + \clnznormf_j' + \clnznormf_{n/2 + 1}' \\
    & \stackrel{(ii)}{\le}
    \log \left( \norm{\clx_j}_{\Sigma_T}^2\right)
    + \frac{2B \sqrt{e}}{n}
    + \clnznormf_j' + \clnznormf_{n/2 + 1}' \\
    & \stackrel{(iii)}{\le}
    \log \left( \norm{\clx_j}_{\Sigma_T}^2\right)
    + \clnznormf_j + \clnznormf_{n/2 + 1}.
  \end{align*}
  Here inequality~$(i)$ follows from the ordering
  relation~\eqref{eqn:sigma-prime-large-enough};
  inequality~$(ii)$ holds because if $i \in \removedinds_T$, then
  $\Sigma_T = \Sigma_T - \frac{1}{n} \indic{i \not \in \removedinds_T}
  \clx_i \clx_i^T$ and if $i \not \in \removedinds_T$ then
  inequality~\eqref{eqn:index-i-not-in-removed} holds;
  the final inequality~$(iii)$ follows by assumption
  under claim~\ref{item:prime-subset-c}.
  This gives the induction that $\removedinds'_{t + 1,-i}
  \subset \removedinds_{T, -i}$,
  as Lemma~\ref{lemma:cov-prune-iteration} shows that
  $j \in \removedinds_{T,-i}$.

  Claim~\ref{item:prime-subset-d} follows from an essentially
  identical induction argument, \emph{mutatis mutandis}, as
  that for claim~\ref{item:prime-subset-c}.

  \paragraph{Proof of Observation~\ref{observation:matrix-stability}.}
  We finally return to prove the claimed observation.  That $\Sigma'
  \succeq \half \Sigma$ follows by observing that $u \in \colspace(\Sigma)$
  and hence
  \begin{equation*}
    \Sigma - \alpha uu^T
    = \Sigma^{1/2}
    \underbrace{(I - \alpha \Sigma^{\dag/2} u u^T \Sigma^{\dag/2})}_{\succeq
      (1/2) I}
    \Sigma^{1/2}
    \succeq \half \Sigma.
  \end{equation*}
  This also implies that $\colspace(\Sigma') = \colspace(\Sigma)$.

  To prove the remainder of the lemma, it suffices to show for
  $v \in \colspace(\Sigma)$ that
  $\log(\norm{v}_{\Sigma'}^2)
  \leq \log(\norm{v}_{\Sigma}^2) +2\alpha\norm{u}_{\Sigma}^2$,
  since the other direction is immediate from $\Sigma \succeq \Sigma'$.
  Observe that
  \begin{equation*}
    (\Sigma - \alpha uu^T)^\dag
    = \Sigma^{\dag/2} (I - \alpha \Sigma^{\dag/2} u u^T \Sigma^{\dag/2})^{-1}
      \Sigma^{\dag/2}.
  \end{equation*}
  By the inequality
  $I - \alpha \Sigma^{\dag/2} u u^T \Sigma^{\dag/2}
  \succeq (1 - \alpha \norm{u}_{\Sigma}^2) I$
  we have
  \begin{align*}
    (I - \alpha \Sigma^{\dag/2} u u^T \Sigma^{\dag/2})^{-1}
    &\preceq (1 - \alpha \norm{u}_{\Sigma}^2)^{-1} I
    \preceq (1 + 2\alpha \norm{u}_{\Sigma}^2) I,
  \end{align*}
  where the final inequality follows from the assumption that
  $\norm{u}_{\Sigma}^2 \leq \frac{1}{2\alpha}$.
  Combining this with the preceding display implies that
  \begin{equation*}
    \Sigma'^\dag \preceq (1 + 2\alpha \norm{u}_{\Sigma}^2) \Sigma^\dag
  \end{equation*}
  and so
  \begin{align*}
    \log \(\norm{v}_{\Sigma'}^2\)
    & \leq \log \((1 + 2\alpha \norm{u}_{\Sigma}^2) \norm{v}_{\Sigma}^2\)
    \leq \log \(\norm{v}_\Sigma^2\) + 2 \alpha \norm{u}_\Sigma^2
  \end{align*}
  as desired.
\end{proof}

\newcommand{\eventprune}{\mc{E}_{\textup{prune}}}  
\newcommand{\eventthresh}{\mc{E}_{\textup{thr}}}  

We move to the second step we outline at the beginning of this section,
which relates the distributions of removed indices $\removedinds_T$ in the
execution of $\clipr$ on adjacent inputs $x$ and $x'$. The key idea is to
construct a deterministic map $\pi$ so that the execution of $\clipr$ on
input $x$ with noise $z$ and that on $x'$ with noise $\pi(z)$ is
similar---leveraging Lemma~\ref{lemma:deterministic-removal-relations}---and
to show that the distributions of $\pi(\clnznormr)$ and $\clnznormr$ are
similar.  Lemma~\ref{lemma:sigma-i-via-cardinality} shows that the set of
outlier indices $\removedinds_{T, -i}$ completely determines
$\est{\Sigma}_{-i}$ except in the case that $\est{\Sigma}_{-i} = \bot$,
which occurs with high probability if $\cardp{\removedinds_{T, -i}}$ is
large, so the next lemma controls the distribution of the sets of removed
indices.  To state the lemma, we require a few events whose probabilities we
can control. Recalling that $Z_j \simiid \laplace(\sigma_Z)$, define
\begin{align}
  \eventprune \defeq
  \{\clnznormf \in \R^{n/2+1} \mid
  \clnznormf_j + \clnznormf_{n/2+1} \geq -1/2 \ \txt{for all} \ j \in [n/2]\}.
  \label{eqn:event-prune}
\end{align}

To set notation for the remainder of the
proof, we shorthand the definition~\eqref{eqn:instantiated-sigma} as
\begin{equation}
  \label{eqn:sigma-and-sigma-prime}
  \begin{split}
    \est{\Sigma},
    ([\removedinds_t]_{t = 0}^T, [\Sigma_t]_{t = 0}^T, T)
    & \defeq \clipr_{\clparams}(x; \clnznormr, \clnzthrr) \\
    \est{\Sigma}',
    ([\removedinds_t']_{t = 0}^{T'}, [\Sigma_t']_{t = 0}^{T'}, T')
    & \defeq \clipr_{\clparams}(x'; \clnznormr, \clnzthrr),
  \end{split}
\end{equation}
and the definition~\eqref{eqn:instantiated-sigma-minus-i} as
\begin{equation*}
  \est{\Sigma}_{-i} = \est{\Sigma} - \frac{1}{n}
  \indic{i \not \in \removedinds_T} \clx_i \clx_i^T
  ~~ \mbox{and} ~~
  \est{\Sigma}'_{-i} = \est{\Sigma}' - \frac{1}{n}
  \indic{i \not \in \removedinds_T'} \clx_i' \clx_i'^T,
\end{equation*}
where $\bot + v = \bot$ for any vector $v$.

We have the following distributional guarantee on
the removed indices
regardless of $\clnzthrr$.
\begin{lemma}
  \label{lemma:clipr-set-probabilities}
  Let $S \subset [n/2] \setminus \{i\}$ and define 
  $\alpha = \frac{2 \sqrt{e} B (\cardp{S}+1)}{n \sigma_Z}$.
  If $\clx'_i = 0$, then
  \begin{enumerate}[label=(\alph*),labelindent=0pt]
  \item \label{item:clipr-remove-to-prime-probs}
    $
    \P(\removedinds_{T,-i} = \set{S}, \clnznormr \in \eventprune)
    \le
    \exp\left(\alpha\right)
    \P(\removedinds_{T',-i}' = \set{S})$.
  \item \label{item:clipr-prime-to-remove-probs}
    $
    \P(\removedinds_{T',-i}' = \set{S}, \clnznormr \in \eventprune)
    \le
    \exp\left(\alpha\right)
    \P(\removedinds_{T,-i} = \set{S})$.
  \end{enumerate}
\end{lemma}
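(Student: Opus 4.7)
The plan is a coupling via a deterministic shift of the Laplace noise. For each claim I construct $\eta \in \R^{n/2+1}$ depending only on $S$ and having $\lone{\eta} \le \frac{2\sqrt{e}B(\cardp{S}+1)}{n}$, such that on the hypothesis event the shifted noise $z+\eta$ drives $\clipr$ on the other input to produce the same leave-$i$-out removed set $S$. Once this deterministic implication is in hand, Lemma~\ref{prop:useful-4} converts it to the claimed probability bound with $\alpha = \lone{\eta}/\sigma_Z$. The noise $W$ is irrelevant here because $\removedinds_T$ (and therefore $\removedinds_{T,-i}$) depends on the input and $Z$ only, never on $W$---$W$ appears only in the final $\bot$-check in line~\ref{line:clipr-check} of $\clipr$---so I fix $W = w$ arbitrarily and proceed.

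For claim~(a), I take $\eta_j = 2\sqrt{e}B/n$ for $j \in S$, $\eta_{n/2+1} = -2\sqrt{e}B/n$, and $\eta_j = 0$ otherwise, so that $\eta_j + \eta_{n/2+1}$ equals $0$ on $S$ and $-2\sqrt{e}B/n$ everywhere else on $[n/2]$ (including coordinate $i$). On the hypothesis $\removedinds_{T,-i} = S$ and $z \in \eventprune$, I apply Lemma~\ref{lemma:deterministic-removal-relations} with its ``unprimed'' role played by $(x,z)$ and its ``primed'' role by $(x', z+\eta)$: its part~(a) immediately yields $S \subset \removedinds_{T',-i}'$, and its part~(d)---whose precondition $z_i + z_{n/2+1} \ge -1/2$ is exactly the $i$-th inequality that defines $\eventprune$---delivers the reverse containment, so the two sets coincide.

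For claim~(b), the simpler shift $\eta_j = 2\sqrt{e}B/n$ for $j \in S$ and $\eta_j = 0$ otherwise (including coordinate $n/2+1$) suffices. I now apply Lemma~\ref{lemma:deterministic-removal-relations} with ``unprimed'' $=(x, z+\eta)$ and ``primed'' $=(x', z)$, which is legitimate because its standing hypothesis $\clx_i' = 0$ refers to the primed input $x'$. Part~(b) reduces to $\eta_j + \eta_{n/2+1} = 0 \le 0$ for $j \notin S$, giving $\removedinds_{T,-i} \subset S$; part~(c) reduces to $\eta_j + \eta_{n/2+1} = 2\sqrt{e}B/n$ on $S$ and hence yields $S \subset \removedinds_{T,-i}$, its precondition $(z+\eta)_i + (z+\eta)_{n/2+1} = z_i + z_{n/2+1} \ge -1/2$ again coming from $\eventprune$.

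The main obstacle is producing shifts whose $\ell_1$ norm scales with $\cardp{S}+1$ rather than $n$: the sufficient conditions in Lemma~\ref{lemma:deterministic-removal-relations} are one-sided, so naively shifting every off-$S$ coordinate would cost $\Omega(n)$. The escape is that every such condition depends only on the sum $\eta_j + \eta_{n/2+1}$, so in claim~(a) a single negative shift of the shared $(n/2+1)$-st coordinate simultaneously imposes the required negative contribution on all of $[n/2] \setminus S$ (while positive shifts on $S$ restore $\eta_j + \eta_{n/2+1} = 0$ there), whereas in claim~(b) the asymmetric assumption $\clx_i' = 0$ removes the need to touch the shared coordinate at all. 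Both constructions satisfy $\lone{\eta} \le \frac{2\sqrt{e}B(\cardp{S}+1)}{n}$, so applying Lemma~\ref{prop:useful-4} to the map $z \mapsto z + \eta$ finishes the bound.
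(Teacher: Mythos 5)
Your proof is correct and follows essentially the same route as the paper's: both construct the same two shifts $\eta$ (positive on $S$ and negative on the shared coordinate for part~(a), positive on $S$ only for part~(b)), apply parts~(a),(d) respectively (b),(c) of Lemma~\ref{lemma:deterministic-removal-relations} with the same assignment of unprimed/primed roles, and then convert the set inclusion $\pi(\mc{Z}(S)\cap\eventprune)\subset\mc{Z}'(S)$ (and its twin) into the probability bound via Lemma~\ref{prop:useful-4}. Your remark that all conditions depend only on the pairwise sums $\eta_j + \eta_{n/2+1}$—which is why a single shift of the shared coordinate can handle every off-$S$ index at once—is an apt explanation of why the $\ell_1$-budget stays at $O((\cardp{S}+1)/n)$, something the paper uses but does not spell out.
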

\begin{proof}
  The input noise $\clnznormr$ completely determines $\removedinds_T$ and
  $\removedinds_{T'}'$ in $\clipr$ (see the \textbf{while} loop constructing
  $\removedinds_t$ in
  lines~\ref{line:clipr-while}--\ref{line:clipr-set-sigma}). Consequently,
  we may define sets of input noise $\clnznormr$ yielding a given
  set of removed indices, letting
  \begin{align*}
    \mc{Z}(\set{S})
    & \defeq \{\clnznormf \in \R^{n/2+1} \mid \removedinds_{T,-i} = \set{S}\ \ \txt{for} \ \clnznormr = \clnznormf\} \\
    \mc{Z}'(\set{S}) & \defeq \{\clnznormf \in \R^{n/2+1} \mid \removedinds_{T',-i}' = \set{S} \ \txt{for} \ \clnznormr = \clnznormf\},
  \end{align*}
  so $Z \in \mc{Z}(S)$ is equivalent to $\removedinds_{T,-i} = S$.
  It suffices to show that
  \begin{align}
    \begin{split}
      \P\(\clnznormr \in \mc{Z}(\set{S}) \cap \eventprune\)
      & \le e^\alpha  
      \P\(\clnznormr \in \mc{Z}'(\set{S})\)
      ~~ \mbox{and} \\
      \P\(\clnznormr \in \mc{Z}'(\set{S}) \cap \eventprune\)
      & \le e^\alpha
      \P\(\clnznormr \in \mc{Z}(\set{S})\),
    \end{split}
    \label{eqn:nperp-main}
  \end{align}
  as claim~\ref{item:clipr-remove-to-prime-probs}
  follows via the first bound and
  claim~\ref{item:clipr-prime-to-remove-probs} the second.

  Proceeding with the first bound, define
  $\eta \in \R^{n/2+1}$ and $\pi : \R^{n/2+1} \to \R^{n/2 + 1}$ by
  \begin{align*}
    \eta_j :=
    \begin{cases}
      2B\sqrt{e}/n  & j \in \set{S}    \\
      -2B\sqrt{e}/n & j = n/2+1          \\
      0             & \txt{otherwise},
    \end{cases}
    ~~~
    \pi(z) \defeq z + \eta.
  \end{align*}
  The deterministic removals
  Lemma~\ref{lemma:deterministic-removal-relations}
  shows that on $\clnznormr \in \eventprune$,
  if we let $\clnznormf = \clnznormr$ and $\clnznormf' = \pi(\clnznormf)$
  so that $\clnznormf_j + \clnznormf_{n/2 + 1} \ge
  \clnznormf_j' + \clnznormf_{n/2 + 1}'$ for $j \in S$,
  then parts~\ref{item:prime-superset-a} and~\ref{item:prime-subset-d}
  of Lemma~\ref{lemma:deterministic-removal-relations} give
  \begin{align*}
    \pi(\mc{Z}(\set{S}) \cap \eventprune) \subset \mc{Z}'(\set{S}).
  \end{align*}
  The first bound in~\eqref{eqn:nperp-main} then follows by the standard
  Laplacian ratio bounds in Lemma~\ref{prop:useful-4}. Indeed, we have
  $\clnznormr_j \simiid \laplace(\sigma_Z) = \laplace(\lone{\eta}
  \frac{\sigma_Z}{\lone{\eta}})$ and $\lone{\eta} = \frac{2 \sqrt{e} B (\cardp{S}+1)}{n}$.
  Then setting
  $\beta = \lone{\eta}$ yields $\beta / \sigma_Z \le \alpha$, 
  so we can apply Lemma~\ref{prop:useful-4} to obtain the claimed 
  bound~\eqref{eqn:nperp-main} via
  \begin{equation*}
    \P(Z \in \mc{Z}(S) \cap \eventprune)
    \le e^\alpha \P(Z \in \pi(\mc{Z}(S) \cap \eventprune))
    \le e^\alpha \P(Z \in \mc{Z}'(S)).
  \end{equation*}
  

  The proof of the second bound~\eqref{eqn:nperp-main}
  is essentially the same, only this time we let
  \begin{align*}
    \eta_j \defeq
    \begin{cases}
      2B\sqrt{e}/n & j \in \set{S}   \\
      0            & \txt{otherwise},
    \end{cases}
    ~~~
    \pi(z) = z + \eta.
  \end{align*}
  Then the event $\clnznormr \in \eventprune$
  implies $\pi(\clnznormr) \in \eventprune$, as
  $\pi(Z)_j + \pi(Z)_{n/2+1} \geq Z_j + Z_{n/2+1}$ for
  all $j \in [n/2]$. We may thus appeal to
  cases
  \ref{item:prime-superset-b} and
  \ref{item:prime-subset-c} of
  Lemma~\ref{lemma:deterministic-removal-relations} with
  the settings
  $\clnznormf = \pi(\clnznormr)$, $\clnznormf' = \clnznormr$ and
  $\removedinds_{T',-n}' = S$ and proceed with the same
  argument as above.
\end{proof}

The third step we outline at the beginning of the proof of
Lemma~\ref{lemma:clipr-loo-stability} is to relate the probabilities that
$\clipr$ aborts on neighboring inputs $x$ and $x'$. Recall $\est{\Sigma}$
and $\est{\Sigma}'$ are the covariances $\clipr$ outputs on inputs $x$ and
$x'$, respectively, as in definition~\eqref{eqn:sigma-and-sigma-prime}.
\begin{lemma}
  \label{lemma:clipr-perp}
  Let $\sigma_Z, \sigma_\clnzthrr > 0$,
  $\clnznormr_j \simiid \laplace(\sigma_Z)$, and
  $\clnzthrr \sim \laplace(\sigma_\clnzthrr)$
  in definition~\eqref{eqn:sigma-and-sigma-prime}.
  If $\clx_{-i} = \clx_{-i}'$ and $\clx_i' = 0$, then
  \begin{enumerate}[label=(\alph*),labelindent=0pt]
  \item \label{item:clipr-perp-a}
    $\P(\est{\Sigma} = \bot) \leq \exp(\frac{1}{\sigma_\clnzthrr})
    \P(\est{\Sigma}' = \bot)$.
  \item \label{item:clipr-perp-b}
    $\P(\est{\Sigma}' = \bot, \clnznormr \in \eventprune)
    \leq \exp(\frac{2 \sqrt{e} B}{n \sigma_Z})\P(\est{\Sigma} = \bot)$.
  \end{enumerate}
\end{lemma}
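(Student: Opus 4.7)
\begin{sproof}
The plan is to reduce both statements to the deterministic set-containment claims of Lemma~\ref{lemma:deterministic-removal-relations}, combined with a shift-in-probability bound: for part~\ref{item:clipr-perp-a} a shift applied to the scalar noise $\clnzthrr$, and for part~\ref{item:clipr-perp-b} a shift applied to the vector noise $\clnznormr$ via Lemma~\ref{prop:useful-4}. Two preliminary observations will be useful throughout. First, since $\clx'_i = 0$ we have $\norm{\clx'_i}_{\Sigma'_t} = 0$ for every $t$, so Lemma~\ref{lemma:cov-prune-ever} gives $i \notin \removedinds'_{T'}$ and hence $\cardp{\removedinds'_{T'}} = \cardp{\removedinds'_{T',-i}}$. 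Second, by the return criterion on line~\ref{line:clipr-check} of $\clipr$, $\est{\Sigma} = \bot$ if and only if $\cardp{\removedinds_T} > m + \clnzthrr$ (and similarly for $\est{\Sigma}'$), so both parts reduce to comparing tail probabilities of the sizes of the removed-index sets.

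For part~\ref{item:clipr-perp-a}, I apply Lemma~\ref{lemma:deterministic-removal-relations}\ref{item:prime-superset-a} with the two executions sharing the same noise ($\clnznormf = \clnznormf' = \clnznormr$), under which the hypothesis $\clnznormf'_j + \clnznormf'_{n/2+1} \ge \clnznormf_j + \clnznormf_{n/2+1}$ is trivially satisfied. This yields $\removedinds_{T,-i} \subset \removedinds'_{T',-i}$ pointwise in $\clnznormr$, and therefore
\begin{equation*}
  \cardp{\removedinds_T} \le \cardp{\removedinds_{T,-i}} + 1 \le \cardp{\removedinds'_{T',-i}} + 1 = \cardp{\removedinds'_{T'}} + 1.
\end{equation*}
Hence $\{\est{\Sigma} = \bot\} \subset \{\cardp{\removedinds'_{T'}} > m + \clnzthrr - 1\}$, and conditioning on $\clnznormr$ while integrating out $\clnzthrr$ using the Laplace density bound $p_\clnzthrr(w-1) \le e^{1/\sigma_\clnzthrr}\, p_\clnzthrr(w)$ gives $\P(\est{\Sigma} = \bot) \le e^{1/\sigma_\clnzthrr}\, \P(\est{\Sigma}' = \bot)$.

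For part~\ref{item:clipr-perp-b} the needed containment reverses direction, so I use Lemma~\ref{lemma:deterministic-removal-relations}\ref{item:prime-subset-c}, which demands a nontrivial shift of $\clnznormr$. Define $\eta \in \R^{n/2+1}$ by $\eta_{n/2+1} = 2B\sqrt{e}/n$ and $\eta_j = 0$ otherwise (so $\lone{\eta} = 2B\sqrt{e}/n$), and set $\pi(z) := z + \eta$. Assigning the noise $\pi(\clnznormr)$ to the execution on $x$ and $\clnznormr$ to the execution on $x'$, the hypothesis $\clnznormf'_j + \clnznormf'_{n/2+1} \le \clnznormf_j + \clnznormf_{n/2+1} - 2B\sqrt{e}/n$ holds with equality for every $j \le n/2$, and on the event $\clnznormr \in \eventprune$ the side condition $\clnznormf_i + \clnznormf_{n/2+1} = \clnznormr_i + \clnznormr_{n/2+1} + 2B\sqrt{e}/n \ge -\tfrac{1}{2}$ is automatic. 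The lemma then gives $\removedinds'_{T',-i}(x', \clnznormr) \subset \removedinds_{T,-i}(x, \pi(\clnznormr))$ on $\eventprune$, and since $\cardp{\removedinds'_{T'}} = \cardp{\removedinds'_{T',-i}}$,
\begin{equation*}
  \{\est{\Sigma}' = \bot\} \cap \{\clnznormr \in \eventprune\}
  \subset \{\clnznormr : \cardp{\removedinds_T(x, \pi(\clnznormr))} > m + \clnzthrr\}.
\end{equation*}
Conditioning on $\clnzthrr$ and applying Lemma~\ref{prop:useful-4} to the shift $\eta$ (so $\P(\pi(\clnznormr) \in A) = \P(\clnznormr \in A - \eta) \le e^{\lone{\eta}/\sigma_Z}\, \P(\clnznormr \in A)$) produces the factor $\exp(2B\sqrt{e}/(n\sigma_Z))$ and completes the bound.

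The main work is already packaged in Lemma~\ref{lemma:deterministic-removal-relations}; the only remaining subtlety is that the set containments flow in opposite directions in the two parts, which dictates whether the shift lives on $\clnzthrr$ or on $\clnznormr$. In particular, the hypothesis $\clnznormr \in \eventprune$ is needed in part~\ref{item:clipr-perp-b} exactly to validate the side condition $\clnznormf_i + \clnznormf_{n/2+1} \ge -\tfrac{1}{2}$ after the shift $\pi$ is applied.
\end{sproof}
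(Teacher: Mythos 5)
Your proof is correct and follows essentially the same approach as the paper's: part~\ref{item:clipr-perp-a} via the identity-noise coupling through Lemma~\ref{lemma:deterministic-removal-relations}\ref{item:prime-superset-a} and a unit shift on $\clnzthrr$, and part~\ref{item:clipr-perp-b} via the single-coordinate shift $\pi(z) = z + \eta$ on the Laplace vector combined with Lemma~\ref{lemma:deterministic-removal-relations}\ref{item:prime-subset-c} and the $\ell_1$-shift bound of Lemma~\ref{prop:useful-4}. You've also correctly identified the two places the paper itself handles more tacitly — that $\clx_i' = 0$ forces $i \notin \removedinds'_{T'}$ so $\cardp{\removedinds'_{T'}} = \cardp{\removedinds'_{T',-i}}$, and that membership in $\eventprune$ is preserved after adding the nonnegative shift $\eta$, which is exactly what validates the side condition of case~\ref{item:prime-subset-c}.
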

\begin{proof}
  Let $f$ denote the density of $W$, so that
  $f(w) = \frac{1}{2 \Wcovscale} \exp(-|w| / \Wcovscale)$, and thus
  $|\log \frac{f(w)}{f(w - 1)}| \le \frac{1}{\Wcovscale}$ for all $w$,
  and recall the threshold $m \in \N$ in line~\ref{line:clipr-check}
  of $\clipr$.
  Proceeding with the first claim of the lemma,
  we have the following sequence of inequalities:
  \begin{align*}
    \P\(\est{\Sigma} = \bot\) 
    & \stackrel{(i)}{\leq} \int \P\(\cardp{\removedinds_{T}}  > m + w\) f(w) dw
    \leq \int \P\(\cardp{\removedinds_{T,-i}}  > m + w - 1\) f(w) dw \\
    & \stackrel{(ii)}{\leq}
    \int \P\(\cardp{\removedinds_{T',-i}'}  > m + w - 1\) \, f(w) dw \\
    & \leq \exp(1/\Wcovscale) \int \P\(\cardp{\removedinds_{T',-i}'}  > m + w - 1\)
    f(w - 1) dw \nonumber \\
    & \stackrel{(iii)}{=} \exp(1/\Wcovscale)
    \P(\est{\Sigma}' = \bot).
  \end{align*}
  Here, step~$(i)$ follows from Lemma~\ref{lemma:sigma-i-via-cardinality}
  that $\est{\Sigma}' = \bot$ if and only if $\cardp{\removedinds_T'} > m + w$.
  Step~$(ii)$ follows from the coupling argument in
  Lemma~\ref{lemma:deterministic-removal-relations}
  part~\ref{item:prime-superset-a} : because the noise $\clnznormr$ is
  identical in both executions of $\clipr(x; \clnznormr, W)$ and $\clipr(x';
  \clnznormr, W)$, we have $\removedinds_{T,-i} \subset
  \removedinds'_{T',-i}$.
  Step~$(iii)$ applies because of Lemma~\ref{lemma:sigma-i-via-cardinality}
  again, as the assumption $\wt{x}_i' = 0$ guarantees
  $\removedinds'_{T', -i} = \removedinds'_{T'}$ (recall
  the rejection threshold in Line~\ref{line:clipr-truncate}). 
  Claim~\ref{item:clipr-perp-a} follows.
  
  For the claim~\ref{item:clipr-perp-b},
  again applying Lemma~\ref{lemma:sigma-i-via-cardinality} we have 
  \begin{align*}
    \P\(\est{\Sigma}' = \bot, \clnznormr \in \eventprune\)
    & =
    \int \P\(\cardp{\removedinds_{T',-i}'} > m + w,
    \clnznormr \in \eventprune \) f(w) dw
  \end{align*}
  and
  \begin{align*}
    \int \P\(\cardp{\removedinds_{T,-i}}  > m + w\) f(w) dw
    & \leq \P\(\est{\Sigma} = \bot\).
  \end{align*}
  Combining these displays, it thus suffices to show for all $w$ that
  \begin{align}
    \P\(\cardp{\removedinds_{T',-i}'}  > m + w, \clnznormr \in \eventprune \)
    \leq \exp\left(\frac{2 \sqrt{e} B}{n \sigma_Z}\right)
    \P\(\cardp{\removedinds_{T,-i}}  > m + w \).
    \label{eqn:clipr-set-probabilities-util1}
  \end{align}
  To this end, we adopt a similar tack as in the proof of
  Lemma~\ref{lemma:clipr-set-probabilities}, defining
  \begin{align*}
    \mc{Z}(w) & \defeq \{\clnznormf \in \R^{n/2+1} \mid
    \cardp{\removedinds_{T,-i}} > m + w \ \txt{for} \ Z = z\} \\
    \mc{Z}'(w) & \defeq \{\clnznormf \in \R^{n/2+1} \mid
    \cardp{\removedinds_{T',-i}'} > m + w \ \txt{for} \ Z = z\}
  \end{align*}
  and the single coordinate perturbation
  $\pi(z) := z + \eta$ for
  $\eta \in \R^{n/2 + 1}$ the vector with all zeros except that $\eta_{n/2 + 1}
  = \frac{2 \sqrt{e} B}{n}$.
  Similar to our proof of Lemma~\ref{lemma:clipr-set-probabilities},
  the mapping $\pi$ guarantees that $z' = \pi(z)$
  satisfies $z_j' + z_{n/2 + 1}' \le z_j + z_{n/2 + 1} - \frac{2 B \sqrt{e}}{n}$
  for all $j$, which is precisely the condition for
  case~\ref{item:prime-subset-c} of
  Lemma~\ref{lemma:deterministic-removal-relations}, and so
  $\removedinds'_{T',-i} \subset \removedinds_{T,-i}$
  irrespective of $\removedinds_{T',-i}'$ and $\removedinds_{T,-i}$.
  It thus holds simultaneously for
  all $w \in \R$ that
  \begin{align*}
    \pi\(\mc{Z}'(w) \cap \eventprune\) \subset \mc{Z}(w).
  \end{align*}
  Noting that $\lone{\eta} = \frac{2\sqrt{e} B}{n}$,
  Lemma~\ref{prop:useful-4} on likelihood ratios for Laplace random
  variables then guarantees
  \begin{equation*}
    \P(\clnznormr \in \mc{Z}'(w) \cap \eventprune)
    \le \exp\left(\frac{2 \sqrt{e} B}{n \sigma_Z}\right)
    \P(\clnznormr \in \pi(\mc{Z}'(w) \cap \eventprune))
    \le \exp\left(\frac{2 \sqrt{e} B}{n \sigma_Z}\right)
    \P(\clnznormr \in \mc{Z}(w)),
  \end{equation*}
  which is equivalent to inequality~\eqref{eqn:clipr-set-probabilities-util1}.
\end{proof}

\subsubsection{Finalizing proof of Lemma~\ref{lemma:clipr-loo-stability}}
\label{sec:proof-finally-loo-stability}

By combining Lemmas~\ref{lemma:clipr-set-probabilities} and
\ref{lemma:clipr-perp}, we can prove the stability of $\clipr$.  Recall the
set $\eventprune$ in~\eqref{eqn:event-prune} and that $W \sim
\laplace(\Wcovscale)$, and additionally define
\begin{align*}
  \eventthresh \defeq \left(-\infty, \Wcovscale \log
  \frac{1}{\gamma} \right].
\end{align*}
The key part of our argument is to show that when $x$ and $x'$ are
adjacent but $\clx_i' = 0$, if the noise variables $Z, W$ satisfy $Z \in
\eventprune$ and $W \in \eventthresh$, then for $\est{\Sigma}$ and
$\est{\Sigma}'$ as in the call~\eqref{eqn:sigma-and-sigma-prime} the
leave-one-out covariances $\est{\Sigma}_{-i}$ and $\est{\Sigma}'_{-i}$ are
similar. We then bound the probabilities of the individual events and use a
group composition argument to give the lemma for arbitrary $\clx_i'$.

With this in mind, let $A \in \R^{d \times d}$ and note that
for any fixed sample $x$, $\est{\Sigma}$ and $\est{\Sigma}_{-i}$ can take
only finitely many values. For
\begin{equation*}
  \alpha = \frac{1}{\Wcovscale} +
  \frac{2 \sqrt{e} B (\mmax+1)}{n \sigma_Z},
\end{equation*}
we show for any $A \in \R^{d \times d} \cup \{\bot\}$ that
\begin{align}
  \label{eqn:clipr-priv-main-1}
  \P\(\est{\Sigma}_{-i} = A, \clnznormr \in \eventprune, \clnzthrr \in \eventthresh\) 
  & \leq \exp(\alpha)\P\(\est{\Sigma}_{-i}' = A\)
  ~~ \mbox{and} \\
  \P\(\est{\Sigma}_{-i}' = A, \clnznormr \in \eventprune, \clnzthrr \in \eventthresh\)
  & \leq \exp(\alpha)\P\(\est{\Sigma}_{-i} = A\).
  \label{eqn:clipr-priv-main-2}
\end{align}
Lemma~\ref{lemma:clipr-perp} already implies both
inequalities~\eqref{eqn:clipr-priv-main-1} and~\eqref{eqn:clipr-priv-main-2}
hold for $A = \bot$, so all that remains is to show the same for $A \in
\R^{d \times d}$.

Proceeding first with inequality~\eqref{eqn:clipr-priv-main-1},
let $f(w) = \frac{1}{2 \Wcovscale}\exp(-|w| / \Wcovscale)$
be the density of $\clnzthrr$ and $S(A) :=
\{S \subset [n/2] \setminus \{i\} \mid A = \frac{1}{n}\sum_{j \notin S \cup
  \{i\}} \clx_j \clx_j^T\}$.  Marginalizing over $\clnzthrr$ gives
\begin{align*}
  \lefteqn{\P\(\est{\Sigma}_{-i} = A,
    \clnznormr \in \eventprune,\clnzthrr \in \eventthresh\)} \\
  & \stackrel{(i)}{=}
  \int_{\eventthresh}
  \P\(\removedinds_{T,-i} \in S(A), \cardp{\removedinds_T} \leq m + w,
  \clnznormr \in \eventprune\)
  f(w) dw \\
  & \le
  \int \P\(\removedinds_{T,-i} \in S(A),
  \cardp{\removedinds_{T,-i}} \leq \min\{m + w, \mmax\},
  \clnznormr \in \eventprune\) f(w) dw,
\end{align*}
where step~$(i)$ follows from the condition that $\cardp{\removedinds_T}
\le m + w$ if and only if $\est{\Sigma}_{-i} \neq \bot$ from
Lemma~\ref{lemma:sigma-i-via-cardinality}, and the final inequality
follows because $\eventthresh = \{w \mid w \le \mmax\}$.  Continuing, note
for each $S \in S(A)$, we can have $S = \removedinds_{T,-i}$ with
$\cardp{\removedinds_{T,-i}} \le \min\{m + w, \mmax\}$ only if $\cardp{S} \le
\min\{m + w, \mmax\} \le \mmax$, so that by
case~\ref{item:clipr-remove-to-prime-probs} of
Lemma~\ref{lemma:clipr-set-probabilities}
\begin{align*}
  \lefteqn{\P(\removedinds_{T, -i} \in S(A),
    \cardp{\removedinds_{T, -i}} \le \min\{m + w, \mmax\},
    Z \in \eventprune)} \\
  & \le
  \exp\left(\frac{2 \sqrt{e} B (\mmax+1)}{n \sigma_Z} \right)
  \P\left(\removedinds'_{T',-i} \in S(A),
  \cardp{\removedinds'_{T', -i}} \le m + w\right).
\end{align*}
Returning to the integral above, we obtain
inequality~\eqref{eqn:clipr-priv-main-1}
by integrating and applying Lemma~\ref{lemma:sigma-i-via-cardinality}:
\begin{equation*}
  \int \P\left(\removedinds'_{T',-i} \in S(A),
  \cardp{\removedinds'_{T', -i}} \le m + w\right) f(w) dw
  = \P(\est{\Sigma}'_{-i} = A)
\end{equation*}
as $\removedinds'_{T',-i} = \removedinds'_{T'}$ because
$\clx_i' = 0$ by assumption.

The proof of inequality~\eqref{eqn:clipr-priv-main-2} is essentially the
same, only now we must take additional care to account for the possibility
that $i \in \removedinds_T$.  As in the preceding integral inequalities,
Lemma~\ref{lemma:sigma-i-via-cardinality} gives
\begin{align*}
  \lefteqn{\P\(\est{\Sigma}_{-i}' = A, \clnznormr \in \eventprune,
    \clnzthrr \in \eventthresh\)} \\
  & \le \int_{\eventthresh}
  \P\(\removedinds_{T',-i}' \in S(A), \cardp{\removedinds_{T',-i}'} \leq \min\{m + w, \mmax\},
  \clnznormr \in \eventprune\) f(w) dw.
\end{align*}
In this case, with reasoning identical to that above, we apply
case~\ref{item:clipr-prime-to-remove-probs} of
Lemma~\ref{lemma:clipr-set-probabilities} to achieve
\begin{align*}
  \lefteqn{\P\(\removedinds_{T',-i}' \in S(A), \cardp{\removedinds_{T',-i}'}
    \leq \min\{m + w, \mmax\},
    \clnznormr \in \eventprune\)} \\
  & \le \exp\left(\frac{2 \sqrt{e} B (\mmax+1)}{n \sigma_Z}\right)
  \P\left(\removedinds_{T, -i} \in S(A),
  \cardp{\removedinds_{T,-i}} \le m + w\right),
\end{align*}
so
\begin{align*}
  \lefteqn{\P\(\est{\Sigma}_{-i}' = A, \clnznormr \in \eventprune,
  \clnzthrr \in \eventthresh\)} \\
  & \le \exp\left(\frac{2 \sqrt{e} B (\mmax+1)}{n \sigma_Z} \right)
  \int \P\(\removedinds_{T,-i} \in S(A),
  \cardp{\removedinds_{T,-i}} \le m + w\) f(w) dw.
\end{align*}
We upper bound the final integral by noting that
\begin{align*}
  \P\(\est{\Sigma}_{-i} = A\)
  & \stackrel{(\star)}{=} \int \P\(\removedinds_{T,-i} \in S(A),
  \cardp{\removedinds_{T}} \leq m + w\) f(w)dw \\
  & \geq \int \P\(\removedinds_{T,-i} \in S(A),
  \cardp{\removedinds_{T,-i}} \leq m + w - 1\)
  f(w) dw,
\end{align*}
where $(\star)$ follows from Lemma~\ref{lemma:sigma-i-via-cardinality},
and then using
$|\log \frac{f(w)}{f(w + 1)}| \le \frac{1}{\Wcovscale}$ for all $w$
to see that
\begin{equation*}
  \int \P\(\removedinds_{T,-i} \in S(A),
  \cardp{\removedinds_{T,-i}} \le m + w\) f(w) dw
  \le \exp\left(\frac{1}{\Wcovscale}\right)
  \P\(\est{\Sigma}_{-i} = A\),
\end{equation*}
which gives inequality~\eqref{eqn:clipr-priv-main-2}
once we substitute
$\alpha = \frac{1}{\Wcovscale} + \frac{2 \sqrt{e} B (\mmax+1)}{
  n \sigma_Z}$.

We combine inequalities~\eqref{eqn:clipr-priv-main-1}
and~\eqref{eqn:clipr-priv-main-2} to get
Lemma~\ref{lemma:clipr-loo-stability}. For any set $C
\subset \R^{d \times d} \cup \{\bot\}$,
\begin{align*}
  \P(\est{\Sigma}_{-i} \in C)
  & \le \P(\est{\Sigma}_{-i} \in C, Z \in \eventprune,
  W \in \eventthresh)
  + \P(Z \not\in \eventprune) + \P(W \not\in \eventthresh) \\
  & \le e^\alpha \P(\est{\Sigma}'_{-i} \in C)
  + \P(Z \not\in \eventprune) + \P(W \not\in \eventthresh)
\end{align*}
by inequality~\eqref{eqn:clipr-priv-main-1}. We then have the immediate
bounds
$\P(W \not\in\eventthresh) = \P(W > \Wcovscale \log \frac{1}{\gamma})
= \half \exp(-\log\frac{1}{\gamma})
= \frac{\gamma}{2}$. Similarly,
$\P(Z \not\in \eventprune)
\le \frac{n}{2} \exp(-\frac{1}{4 \sigma_Z})$
by Observation~\ref{observation:laplace-sum-tail}.
The upper bound on $\P(\est{\Sigma}'_{-i} \in C)$ is similar but
uses inequality~\eqref{eqn:clipr-priv-main-2}.

To this point, we have shown that if
$x$ and $x''$ are adjacent samples differing only in
that the difference $\clx_i = x_i - x_{n/2 + i}$ may
be non-zero while
$\clx_i'' = x''_i - x''_{n/2 + i} = 0$, then
returning to the notation~\eqref{eqn:all-instantiated-sigma}
and identifying
$\est{\Sigma}_{-i} = \est{\Sigma}_{-i}(x, Z, W)$
and $\est{\Sigma}'_{-i} = \est{\Sigma}_{-i}(x'', Z, W)$,
\begin{equation*}
  \est{\Sigma}_{-i}(x, Z, W) \close{\alpha}{\beta} \est{\Sigma}_{-i}(x'', Z, W)
\end{equation*}
for $\alpha = \frac{1}{\Wcovscale} + \frac{2 \sqrt{e} B (\mmax+1)}{n \sigma_Z}$
and $\beta = \frac{\gamma}{2} + \frac{n}{2} \exp(-\frac{1}{4 \sigma_Z})$.
Thus we obtain that if $x'$ is any sample satisfying
$x'_{-i} = x_{-i}$,
\begin{equation*}
  \est{\Sigma}_{-i}(x, Z, W) \close{\alpha}{\beta} \est{\Sigma}_{-i}(x'', Z, W)
  \close{\alpha}{\beta} \est{\Sigma}_{-i}(x', Z, W).
\end{equation*}
Using group composition (Lemma~\ref{lemma:group-privacy}),
we obtain
\begin{equation*}
  \est{\Sigma}_{-i}(x, Z, W) \close{2 \alpha}{\beta
    + e^\alpha \beta} \est{\Sigma}_{-i}(x', Z, W),
\end{equation*}
which is the desired Lemma~\ref{lemma:clipr-loo-stability}.

\section{Proofs for mean estimation}\label{sec:mean}
In this section, we provide the proofs of Lemmas~\ref{lemma:mean-sample}
and~\ref{lemma:mean-covariance}.
Throughout, we differentiate outputs of
$\meansafe$ on inputs $x$ versus $x'$ (or $\genpsd$ versus $\genpsd'$) via tick
marks, so that (for example) $\msmean$ corresponds to the mean in
Line~\ref{line:ms-mean-def} of $\meansafe$ on input sample $x$, or
$D_j'$ corresponds to the log-diameter in
Line~\ref{line:ms_m-ass} of $\meansafe$ on input sample $x'$.
We will make this precise using the function $\Transcript(x, \genpsd)$
from \eqref{eqn:meansafe-output}, which is the transcript $\meansafe$
outputs on input $x, \genpsd$.  

\subsection{Proof of Lemma~\ref{lemma:mean-sample}}\label{sec:mean-sample-pf}
We shorthand $\msout(x, \genpsd)$ and $\msout(x', \genpsd)$ as $\msout$ and
$\msout'$ respectively, and unpack the corresponding
execution transcripts:
\begin{align*}
  (D, \wt{D}, R, t, \msmean) \defeq \Transcript(x, \genpsd) 
  ~~~ \mbox{and} ~~~
  (D', \wt{D}', R', t', \msmean') \defeq \Transcript(x', \genpsd).
\end{align*}
Throughout our arguments, $i \in [n]$ denotes the index at which the samples
$x, x'$ differ, that is, $x_{-i} = x_{-i}'$ while we may have $x_i \neq
x_i'$.

The main idea in the proof of Lemma~\ref{lemma:mean-sample} is to first bound
the sensitivity of the mean, showing that (with high probability)
$\norm{\msmean - \msmean'}_A$ is small, unless there are too many outlying
entries $x_j$.
We do this in Lemma~\ref{lemma:ms-sens} by showing that for appropriate subgroup
sizes $b$ (recall the random partition $\msnzgf$ of $[n]$ into blocks of
size $n/b$ in $\meansafe$), the $\meansafe$ algorithm correctly identifies
all outliers without pruning many inlying datapoints.
In the second step, we finalize the proof (section~\ref{sec:ms-final}) by
combining the sensitivity bound with more or less standard distributional
stability guarantees for Gaussian distributions, which we list in
the preliminary section~\ref{sec:prelims}.


We begin by formalizing two properties that will be helpful to
proving the sensitivity bound in Lemma~\ref{lemma:ms-sens}.
We recall the notation $t$ (respectively $t'$) for denoting the
number of pruned groups in
Lines~\ref{line:ms_rejection-for}--\ref{line:ms_end-rejection-for} of
$\meansafe$ on inputs $x$ and $x'$, while $\removedinds$ and $\removedinds'$
denote the sets of all pruned indices. 
Of the next two lemmas, Lemma~\ref{lemma:ms-R-stability} bounds differences
between $\removedinds$ and $\removedinds'$ and $t$ and $t'$, while
Lemma~\ref{lemma:sample-mean-bd} is a generic lemma that bounds the
difference of empirical means with nested index sets.
These two lemmas are combined in Lemma~\ref{lemma:ms-sens} to bound
the difference between the estimated mean $\msmean = \frac{1}{n -
  \cardp{\removedinds}} \sum_{j \not \in \removedinds} x_j$ and $\msmean'$.

Before stating Lemma~\ref{lemma:ms-R-stability}, recall for sets $S,S'$ that
$\dsym(S, S') = \max\{\cardp{S \setminus S'}, \cardp{S' \setminus S}\}$.

\begin{lemma}[Stability of rejected indices]
  \label{lemma:ms-R-stability}
  Let $t, t'$ and $\removedinds, \removedinds'$ be as above. Then $|t-t'|
  \leq 1$ and $\dsym(\removedinds,\removedinds')\leq b$.
\end{lemma}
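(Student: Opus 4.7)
The plan is to trace how changing a single coordinate of the input affects each step of $\meansafe$. Since $x$ and $x'$ differ only at some index $i$, and $i$ lies in exactly one block $\partitionset_{j^\star}$ of the partition $\msnzgf$, and each log-diameter $D_j = \log \diam_\genpsd(x_{\partitionset_j})$ depends only on the data in block $\partitionset_j$, the vectors $D$ and $D'$ fed to $\topkdp$ in Line~\ref{line:ms_topk-call} agree on every coordinate except $j^\star$. Because the same noise $\msnztopif$ is used in both runs, the internal quantities $y_1$ and $y_1'$ inside $\topkdp$ also agree outside coordinate $j^\star$, so the top-$k$ index sets $R, R'$ selected in Line~\ref{line:topk_R-def} of $\topkdp$ can differ only in whether each contains $j^\star$.

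Next I would split into three cases. (i) If $j^\star \in R \cap R'$, then $R = R'$ and $\wt{D}_j = \wt{D}_j'$ for every $j \neq j^\star$, while $\wt{D}_{j^\star}$ and $\wt{D}_{j^\star}'$ may disagree. (ii) If $j^\star \notin R \cup R'$, then $R = R'$ and $\wt{D} = \wt{D}'$ identically, giving $\removedinds = \removedinds'$ and $t = t'$. (iii) Otherwise exactly one of $R, R'$ contains $j^\star$, say $j^\star \in R \setminus R'$; then there is a unique $a' \in R' \setminus R$, namely the $(k{+}1)$-st largest coordinate of $y_1$ restricted to indices other than $j^\star$, and $\wt{D}, \wt{D}'$ disagree only at $\{j^\star, a'\}$, where on one side the coordinate is real and on the other it is $\bot$.

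In cases (i) and (ii) the only group whose membership in $\removedinds$ may change is $\partitionset_{j^\star}$, giving immediately $\dsym(\removedinds, \removedinds') \le |\partitionset_{j^\star}| = b$ and $|t - t'| \le 1$. The only real case to analyse is (iii): here, for every $j \in R \cap R'$ the thresholding test in Line~\ref{line:ms_m-check} returns the same answer under $x$ and $x'$, so these groups contribute identically to $\removedinds$ and $\removedinds'$. Hence $\removedinds \setminus \removedinds' \subseteq \partitionset_{j^\star}$ (since $\wt{D}_{j^\star}' = \bot$ cannot pass the threshold) while $\removedinds' \setminus \removedinds \subseteq \partitionset_{a'}$, each of size at most $b$. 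Because $\dsym$ takes the \emph{maximum} of the two one-sided cardinalities, the bound $\dsym(\removedinds, \removedinds') \le b$ follows, and the same decomposition writes each of $t, t'$ as a common count $p$ plus an indicator in $\{0,1\}$ for the single remaining block, so $|t - t'| \le 1$.

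The only subtle point is case (iii): a na\"ive count of the full symmetric difference would give $2b$, and one must exploit both the fact that $\dsym$ is a $\max$ of the two one-sided differences and the fact that these two sides land inside the disjoint blocks $\partitionset_{j^\star}$ and $\partitionset_{a'}$ to drop the spurious factor of $2$.
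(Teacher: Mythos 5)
Your proof is correct and follows essentially the same route as the paper's: both identify the single block $\partitionset_{j^\star}$ (the paper's $\partitionset_\ell$) containing the differing coordinate, note that $D$ and $D'$ agree off $j^\star$ so the $\topkdp$ selections differ by at most a single swap, and then argue the thresholded index set $J$ changes by at most one element. Your version is somewhat more explicit than the paper's (the three-way case split, the disjointness of $\partitionset_{j^\star}$ and $\partitionset_{a'}$, and the remark that $\dsym$ being a max is what avoids the factor $2$), but the underlying argument is identical, so no meaningful comparison beyond that is warranted. One tiny slip of no consequence: in case (iii) the element $a'$ is the $k$-th (not $(k+1)$-st) largest coordinate of $y_1$ restricted to $[p]\setminus\{j^\star\}$, equivalently the $(k+1)$-st largest coordinate of $y_1$ overall.
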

\begin{proof}
  Let the set $J \defeq \{j \mid \wt{D}_j \neq \bot, \wt{D}_j \geq
  \log(\sqrt{B}/4)\}$ index the subgroups pruned by the execution of $\meansafe$ 
  on the sample $x'$, and similarly define $J'$ relative to $\wt{D}'$ for the sample $x'$. 
  Then $t = \cardp{J}$ and $\removedinds = \cup_{j \in J}
  \partitionset_j$, and also $t' = \cardp{J'}$ and 
  $\removedinds' = \cup_{j \in J'}\partitionset_j'$. We show $\dsym(J,J')
  \leq 1$, from which the claim $|t-t'| \leq 1$ follows immediately and the
  claim $\dsym(\removedinds,\removedinds')\leq b$ follows from the fact that
  $\cardp{\partitionset_j} = b$ for all $j \in [n/b]$.

  Recalling $x$ and $x'$ differ only at index $i$, suppose that
  $i \in \partitionset_\ell$ for $\ell \in [n/b]$.  Then
  $x_{\partitionset_j} = x_{\partitionset_j}'$ for all $j \neq \ell$; in
  particular, $\diam_A(x_{\partitionset_j}) = \diam_A(x_{\partitionset_j}')$
  and so $D_j = D_j'$ for $j \neq \ell$.  Thus, the indices of the $k$
  largest elements of $D + \msnztopir$ and $D' + \msnztopir$, i.e.,
  those subgroups identified by $\topkdp$ as having the largest diameters,
  which we
  denote by $K = \{j \mid \wt{D}_j \neq \bot\}$ and $K' = \{j \mid \wt{D}_j'
  \neq \bot\}$ respectively, differ by at most one index: $\dsym(K,K') \leq
  1$ with equality obtaining only if $\ell$ is in exactly one of $K$ or
  $K'$.  If $\ell$ is in neither $K$ nor $K'$, then $J = J'$ and the claim
  $\dsym(J,J') \leq 1$ follows.  Otherwise, supposing $\ell \in K$, the
  bound $\dsym(K,K') \leq 1$ implies $K \setminus \{\ell\} \subset K'$ and
  thus $\wt{D}_{K \setminus \{\ell\}} = \wt{D}_{K \setminus \{\ell\}}'$, or
  vice versa if $\ell \in K'$; $\dsym(J,J') \leq 1$ then follows from $J
  \subset K$ and $J' \subset K'$.
\end{proof}
\begin{lemma}\label{lemma:sample-mean-bd}
  Let $\{y_1, \ldots, y_n\}$ be an arbitrary collection of vectors and
  $S \subset S' \subset [n]$.
  Define
  $\mu_S \defeq \frac{1}{\cardp{S}} \sum_{i \in S} y_i$ and
  $\mu_{S'} \defeq \frac{1}{\cardp{S'}} \sum_{i \in S'} y_i$.
  Then 
  \begin{align*}
    \norm{\mu_S - \mu_{S'}}
    \leq \frac{\cardp{S' \setminus S} \diam_{\norm{\cdot}}(y_{S'})}{\cardp{S'}}.
  \end{align*}
\end{lemma}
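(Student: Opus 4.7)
The plan is to decompose $\mu_{S'}$ as a convex combination of two sub-averages, one of which is $\mu_S$ itself. Writing $T = S' \setminus S$ and $\mu_T = \frac{1}{|T|}\sum_{i \in T} y_i$, the identity
\begin{equation*}
  \mu_{S'} = \frac{|S|}{|S'|} \mu_S + \frac{|T|}{|S'|} \mu_T
\end{equation*}
holds because $S$ and $T$ partition $S'$. Rearranging yields
\begin{equation*}
  \mu_S - \mu_{S'} = \frac{|T|}{|S'|}\bigl(\mu_S - \mu_T\bigr),
\end{equation*}
so the problem reduces to bounding $\|\mu_S - \mu_T\|$.

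Both $\mu_S$ and $\mu_T$ lie in the convex hull of $\{y_j : j \in S'\}$, and it is a standard fact that any two points in the convex hull of a finite set $\{y_j\}_{j \in S'}$ are separated by at most the diameter of that set. The short verification is to write $\mu_S - \mu_T = \sum_{i \in S}\sum_{j \in T} \frac{1}{|S||T|} (y_i - y_j)$, apply the triangle inequality, and note each term $\|y_i - y_j\|$ is bounded by $\diam_{\|\cdot\|}(y_{S'})$ since $S, T \subset S'$. This gives $\|\mu_S - \mu_T\| \le \diam_{\|\cdot\|}(y_{S'})$, and substituting into the displayed equality above produces the claimed bound.

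There is no real obstacle here; the only thing to be mildly careful about is ensuring the decomposition is valid (which requires $S \subsetneq S'$ to avoid dividing by zero via $|T|$, but if $S = S'$ both sides of the conclusion are zero and the bound is trivial). The proof is a direct algebraic identity plus a one-line convex hull argument.
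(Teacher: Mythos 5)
Your proof is correct and takes essentially the same approach as the paper: you both rewrite $\mu_S - \mu_{S'}$ as a scaled sum over $S' \setminus S$ and then bound it via the triangle inequality and the diameter of $y_{S'}$. The only cosmetic difference is that you factor the expression as $\frac{|T|}{|S'|}(\mu_S - \mu_T)$ and bound $\|\mu_S - \mu_T\|$ via the convex-hull argument, whereas the paper writes $\frac{1}{|S'|}\sum_{i \in S'\setminus S}(\mu_S - y_i)$ and bounds each term directly; these are the same identity expanded in two ways.
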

\begin{proof}
  Observe
  \begin{align*}
    \mu_S - \mu_{S'}
    = \mu_S - \(\frac{\cardp{S}}{\cardp{S'}} \mu_S
      + \frac{1}{\cardp{S'}} \sum_{i \in S' \setminus S} y_i\)
    = \frac{1}{\cardp{S'}} \sum_{i \in S' \setminus S} (\mu_S - y_i),
  \end{align*}
  where from the assumption that $S \subset S'$ we have
  \begin{align*}
    \max_{i \in S' \setminus S} \norm{y_i - \mu_S}
    \leq \max_{j \in S, i \in S'} \norm{y_i - y_j} 
    \leq \diam_{\norm{\cdot}}(S').
  \end{align*}
  The claim then follows as
  $\norm{\mu_S - \mu_{S'}}
    \leq \frac{1}{\cardp{S'}} \sum_{i \in S' \setminus S} \diam_{\norm{\cdot}}(y_{S'})
    = \frac{\cardp{S' \setminus S} \diam_{\norm{\cdot}}(x_{S'})}{\cardp{S'}}$.
\end{proof}

We now turn to the first step we outline, providing an explicit bound
on $\norms{\msmean - \msmean'}_A$ except on the event that
$\max\{t, t'\} = k$. Recall the definition
$\Delta = \frac{5b \sqrt{B}}{2n}
\exp(3\mstopkscale \log \frac{2n}{b\gamma})$ in the statement
of Lemma~\ref{lemma:mean-sample}.

\newcommand{\jopt}{j^\star}

\begin{lemma}\label{lemma:ms-sens}
  With probability at least $1 - \gamma - n^2 2^{1-b}$,
  $\max\{t,t'\} = k$ or $\norm{\msmean - \msmean'}_A \leq \Delta$.
\end{lemma}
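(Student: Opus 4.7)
The plan is to bound $\norm{\msmean - \msmean'}_A$ by controlling two sources of randomness: the Laplace noise fed into $\topkdp$ and the uniform random partition $\msnzgr$. I would first reduce to a setting where (i) the top-$k$ noise is bounded and (ii) every kept group has small diameter, then use Lemma~\ref{lemma:ms-R-stability} to isolate the ``swap'' group(s) between the two executions, and finally invoke a partition concentration argument to control the effect of the swap.

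First, define the noise event $\mc{E}_1 = \{\norm{\msnztopir}_\infty \vee \norm{\msnztopiir}_\infty \le \tau\}$ with $\tau := \mstopkscale \log \frac{2n}{b\gamma}$. A standard Laplace tail bound together with a union bound over the $2n/b$ coordinates gives $\P(\mc{E}_1) \ge 1 - \gamma$. On $\mc{E}_1 \cap \{\max\{t, t'\} < k\}$, the strict inequality $t < k$ forces at least one group $j_0 \in J$ picked by $\topkdp$ to be kept, so $\wt{D}_{j_0} \le \log(\sqrt{B}/4)$; hence $D_{j_0} \le \log(\sqrt{B}/4) + \tau$. The top-$k$ selection rule then implies, for every unpicked group $h$, $D_h + \msnztopif_h \le D_{j_0} + \msnztopif_{j_0}$, giving $D_h \le \log(\sqrt{B}/4) + 3\tau$. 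Combined with the direct bound for picked-and-kept groups, every kept group in either execution satisfies $\diam_A(x_{\partitionset_h}) \le D^\star := \frac{\sqrt{B}}{4} e^{3\tau}$.

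Next, I would apply Lemma~\ref{lemma:ms-R-stability} to restrict how $U := [n]\setminus R$ and $U' := [n]\setminus R'$ differ. Since both are unions of whole blocks $\partitionset_j$, and since tracing the lemma's proof shows $R \setminus R' \subset \partitionset_\ell$ and $R' \setminus R \subset \partitionset_{\ell'}$ for the block $\partitionset_\ell \ni i$ and a possibly distinct second block $\partitionset_{\ell'}$, the block-index set $H := \{h : \partitionset_h \subset U\}$ differs from $H'$ by at most two well-identified elements. Writing $\msmean = \frac{1}{|H|}\sum_{h \in H} \bar{x}_{\partitionset_h}$ (and analogously for $\msmean'$), a case split yields either $\msmean - \msmean' = \frac{x_i - x_i'}{b|H|}$ (when $H = H'$), which is bounded using $b \ge 2$, the diameter bound $D^\star$, and a triangle inequality through any $j \in \partitionset_\ell \setminus \{i\}$; or else a difference whose dominant term can be written (after an algebraic manipulation) as $\frac{1}{|H|}(\bar{x}_{\partitionset_\ell} - \msmean')$ or an analogous weighted combination of at most two group means.

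For the latter case one needs $\norm{\bar{x}_{\partitionset_\ell} - \msmean'}_A = O(D^\star)$, and this is where I would introduce a partition concentration event $\mc{E}_2$ with $\P(\mc{E}_2) \ge 1 - n^2 2^{1-b}$: for each pair of indices a Hoeffding-type inequality applied to sums over uniformly random size-$b$ subsets of $[n]$ gives the required mean-of-block concentration with failure probability $\le 2^{1-b}$ per pair, and a union bound over at most $n^2$ index pairs yields the stated probability. Under $\mc{E}_1 \cap \mc{E}_2 \cap \{\max\{t, t'\} < k\}$, putting the pieces together produces $\norm{\msmean - \msmean'}_A \le \frac{c}{|H|} D^\star$ for an absolute constant $c$; using $|H| \ge n/b - 2k \ge n/(2b)$ (from the hypothesis $2b(k+1) \le n$) gives $\frac{O(b)}{n} D^\star$, which matches $\Delta = \frac{5 b \sqrt{B}}{2n} e^{3\tau}$ after tracking the constants. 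The main obstacle is the second case: controlling the difference of group means $\bar{x}_{\partitionset_\ell} - \msmean'$ forces the use of the partition concentration event $\mc{E}_2$, and constructing it with the precise $n^2 2^{1-b}$ probability bound---carefully exploiting the randomness of the uniform partition and tracking the constants against $\Delta$---is the crux of the argument.
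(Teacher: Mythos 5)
Your proposal reproduces the easy half of the argument correctly: the event $\mc{E}_1$ with $\tau = \mstopkscale\log\frac{2n}{b\gamma}$ and the deduction that, whenever $t<k$, every kept block $j$ has $\log\diam_A(x_{\partitionset_j}) \le \log(\sqrt{B}/4)+3\tau$ (via comparison with a selected-but-not-rejected block $\ell$), matching the paper's step \eqref{eqn:ms-diam-bd}. Beyond that, the proposal diverges and has a genuine gap.

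The paper does not decompose $\msmean$ into an average of block means and then appeal to concentration of those block means. Instead it proves a purely combinatorial claim (Claim~\ref{claim:generic-diameter-sampling}) about the uniform partition: with probability at least $1-n^2 2^{-b}$, for \emph{every} index set $J$, $\diam(y_{\partitionset_J}) \le 2\max_{j\in J}\diam(y_{\partitionset_j})$. This converts the per-block diameter bound into a bound on $\diam_A(x_{\removedinds^c})$, the diameter of the \emph{entire} kept point cloud; and from there the bound on $\norm{\msmean-\msmean'}_A$ follows deterministically from Lemma~\ref{lemma:sample-mean-bd} and a triangle inequality (Claim~\ref{claim:means-triangle-inequality}). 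Your proposal identifies the obstacle as controlling $\norm{\bar{x}_{\partitionset_\ell}-\msmean'}_A$ and proposes a ``Hoeffding-type inequality applied to sums over uniformly random size-$b$ subsets'' for a ``mean-of-block concentration'' event $\mc{E}_2$. This is not the right structural fact, and it is not clear that any such Hoeffding argument produces the stated $n^2 2^{1-b}$ failure probability: sampling-without-replacement Hoeffding bounds give tails of the form $\exp(-cb\varepsilon^2/\textup{range}^2)$, which depend on a deviation scale $\varepsilon$ and a range parameter that your sketch never fixes, whereas the paper's $2^{1-b}$ per pair comes from a clean hypergeometric counting argument (bounding $\P(\partitionset_{j_1}\setminus\{i_1\}\subset C_1)$ by $(|C_1|/(n-2))^{b-1}$). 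Your acknowledgement that constructing $\mc{E}_2$ is the ``crux'' is accurate --- that construction is precisely what is missing. Once you have the paper's diameter-of-union claim, the block-mean bookkeeping you set up is unnecessary: the difference $\msmean-\msmean'$ is controlled directly by the diameter of $x_{\removedinds^c}$ and $x'_{\removedinds'^c}$ together with $\dsym(\removedinds,\removedinds')\le b$ from Lemma~\ref{lemma:ms-R-stability}, with no further randomness required.
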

\begin{proof}
  We first show that with probability at least
  $1 - n^2 2^{-b}$ over the random partition
  $\partition \sim \uniform(\Partitions_{n,b})$,
  $\partition = (\partitionset_1, \ldots, \partitionset_{n/b})$,
  \begin{align}
    \diam_A(x_{\removedinds^c}) \leq
    \half \exp(2\norm{\msnztopir}_\infty + \norm{\msnztopiir}_\infty) \sqrt{B},
    \label{eqn:ms-sens-claim1}
  \end{align}
  with the same bound holding for $x'$ by symmetry. 
  To this end, observe that for the index set
  \begin{equation*}
    J
    \defeq \{j \in [n/b]
    \mid \wt{D}_j \neq \bot, \wt{D}_j \geq \log(\sqrt{B}/4)\},
  \end{equation*}
  $\meansafe$ constructs the removed indices $\removedinds$ in
  Lines~\ref{line:ms_rejection-for}--\ref{line:ms_end-rejection-for} via
  the union
  $\removedinds = \cup_{j \in J} \partitionset_j$.  The first step in the
  bound~\eqref{eqn:ms-sens-claim1} is to bound the diameter of the set
  $x_{\removedinds^c}$ by the diameters of the constituent sets within
  $\removedinds$, which the following generic lemma allows (see
  Section~\ref{sec:proof-generic-diameter-sampling} for a proof).
  \begin{claim}
    \label{claim:generic-diameter-sampling}
    Let $\{y_1, \ldots, y_n\}$ be an arbitrary collection of vectors
    and $\partition \sim \uniform(\Partitions_{n, b})$.
    With probability at least $1 - n^2 2^{- b}$, for all index sets
    $J \subset [n/b]$, the set $\partitionset_J \defeq \cup_{j \in J}
    \partitionset_j$ satisfies $\diam_{\norm{\cdot}}(y_{\partitionset_J})
    \leq 2 \max_{j \in J} \diam_{\norm{\cdot}}(y_{\partitionset_j})$.
  \end{claim}

  In light of
  Claim~\ref{claim:generic-diameter-sampling},
  inequality~\eqref{eqn:ms-sens-claim1} follows by showing
  \begin{equation}
    \diam_A(x_{\partitionset_j})
    \leq \exp(2\norm{\msnztopir}_\infty + \norm{\msnztopiir}_\infty) \sqrt{B}/4
    \label{eqn:ms-diam-bd}
  \end{equation}
  for all $j \not\in J$ on the event $t < k$.  When $t < k$, there exists an
  index $\ell \in [n/b]$ such that $\wt{D}_\ell \neq \bot$ and $\wt{D}_\ell
  \leq \log(\sqrt{B}/4)$, i.e., $\ell$ indexes one of the $k$ largest
  elements of $D + \msnztopir$ but $\ell \notin J$.  Thus, for $j \notin J$
  such that $\wt{D}_j = \bot$, i.e., $\log(\diam_A(x_{\partitionset_j})) +
  \msnztopir_j$ is not among the $k$ largest elements of $D + \msnztopir$
  (by the construction in $\topkdp$), we have
  \begin{align*}
    \log(\diam_A(x_{\partitionset_j}))
    \leq \log(\diam_A(x_{\partitionset_\ell})) + 2\norm{\msnztopir}_{\infty}.
  \end{align*}
  Meanwhile, for all $j \notin J$ such that $\wt{D}_j \neq \bot$, including
  $j = \ell$, from the definition of $J$ we immediately have
  \begin{align*}
    \log(\diam_A(x_{\partitionset_j})) + \msnztopiir_j \leq \log(\sqrt{B}/4).
  \end{align*}
  The claim~\eqref{eqn:ms-diam-bd}, and hence claim~\eqref{eqn:ms-sens-claim1},
  thus follows from the preceding two displays.
  Moreover, via a union bound over the two executions of $\meansafe$,
  Claim~\ref{claim:generic-diameter-sampling} gives
  \begin{equation}
    \max\{\diam_A(x_{R^c}),\diam_A(x_{R'^c}')\} \le
    \exp(2\norm{\msnztopir}_\infty + \norm{\msnztopiir}_\infty)
    \frac{\sqrt{B}}{2}
    ~~ \mbox{or}~~ \max\{t, t'\} = k
    \label{eqn:ms-sens-ub}
  \end{equation}
  with probability at least $1 - n^2 2^{1-b}$.
  
  We can now bound $\norm{\msmean - \msmean'}_A$
  for $\msmean = \frac{1}{n - \cardp{\removedinds}}
  \sum_{j \notin \removedinds} x_j$ and
  $\msmean = \frac{1}{n - \cardp{\removedinds'}}
  \sum_{j \notin \removedinds'} x_j'$
  via the following claim (essentially, a number of applications
  of the triangle inequality),
  whose proof we also defer (see Section~\ref{sec:proof-means-triangle-inequality}).
  \begin{claim}
    \label{claim:means-triangle-inequality}
    $\norm{\msmean - \msmean'}_A
      \leq \frac{4(b + 1)}{n} \max \{\diam_A(x_{R^c}), \diam_A(x_{R'^c}')\}$.
  \end{claim}
  
  Using Claim~\ref{claim:means-triangle-inequality}, the
  main Lemma~\ref{lemma:ms-sens}
  follows relatively quickly.
  By combining the
  display~\eqref{eqn:ms-sens-ub} with the fact that, by elementary
  calculation,
  \begin{align*}
    \P(\max\{\norm{\msnztopir}_\infty, \norm{\msnztopiir}_\infty\} >
    \mstopkscale \log(2n/b\gamma)) \leq \gamma,
  \end{align*}
  we obtain that with probability at least
  $1 - \gamma - n^2 2^{1-b}$, $\max\{t, t'\} = k$ or
  \begin{equation*}
    \norm{\msmean' - \msmean'}_A
    \le \frac{2(b + 1) \sqrt{B}}{n}
    \exp\left(2 \linf{\msnztopir} + \linf{\msnztopiir}\right)
    \le \frac{2(b + 1) \sqrt{B}}{n}
    \exp\left(3 \mstopkscale \log \frac{2n}{b \gamma}\right).
  \end{equation*}
  Recalling the assumption that the batchsize $b \ge 4$
  (so $2(b + 1) \le \frac{5}{2} b$),
  we obtain the lemma.
\end{proof}

\subsubsection{Finalizing proof of
  Lemma~\ref{lemma:mean-sample}}
\label{sec:ms-final}

We prove for any (measurable) event $O \subset \R^d \cup \{\bot\}$ that
\begin{align}
  \P(\msout \in O) \leq e^{\alpha + 1/\Wmeanscale} \P(\msout' \in O) + \beta_1 + \beta_2,
    \label{eqn:ms-proof-claim}
\end{align}
where $\alpha > 0$ and $\beta_1 \in (0, 1)$
determine the Gaussian noise scale for
$\msnznr \sim \normal(0, \msnscale^2 I)$ via
\begin{equation*}
  \msnscale =
  \begin{cases}\frac{\Delta}{\alpha}
    \sqrt{1.25 \log \frac{1}{\beta_1}}
    & \mbox{if}~ \alpha \le 1 \\
    \frac{\Delta}{\sqrt{2 \log \frac{1}{\beta_1} + 2 \alpha} - \sqrt{2 \log
        \frac{1}{\beta_1}}}
    & \mbox{otherwise},
  \end{cases}
  ~~ \mbox{and} ~~
  \beta_2 = \half e^{-(k/3 - 1) / \Wmeanscale} + \gamma + n^2 2^{1 - b}.
\end{equation*}
The other direction follows by symmetry.
We treat $O \subset \R^d$ and $O = \perp$ separately, merging the two cases 
at the end to show the claim~\eqref{eqn:ms-proof-claim}.
Supposing first $O \subset \R^d$, the following observation delineates necessary and sufficient conditions for $\pmout \in O$.
\begin{observation}\label{obs:ms-proof}
  Let $O \subset \R^d$. Then $\pmout \in O$ if and only if $t \leq 2k/3 + W$ and 
  $\msmean + A^{1/2}\msnznr \in O$. 
\end{observation}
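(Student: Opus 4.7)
The statement is a direct unpacking of the algorithm's branching, so the plan is essentially to read off the two cases in the \textbf{uIf/Else} block at Lines~\ref{line:ms_thres-comp}--\ref{line:ms_vec-output} of $\meansafe$. Concretely, the algorithm sets $\pmout = \bot$ precisely when $t > 2k/3 + W$, and otherwise sets $\pmout = \msmean + A^{1/2}\msnznr$; there are no other code paths producing $\pmout$.

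For the forward direction, I would argue: if $\pmout \in O$ for some $O \subset \R^d$, then in particular $\pmout \neq \bot$, so the \textbf{uIf} guard must have failed, giving $t \leq 2k/3 + W$; then the \textbf{Else} branch assigns $\pmout = \msmean + A^{1/2}\msnznr$, which must therefore lie in $O$. For the converse, if $t \leq 2k/3 + W$, the \textbf{uIf} guard fails, so the algorithm reaches the \textbf{Else} branch and outputs $\pmout = \msmean + A^{1/2}\msnznr$; by hypothesis, this vector is in $O$.

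There is no real obstacle here; the observation is stated mainly to fix notation for the case split that Section~\ref{sec:ms-final} will perform on $O \subset \R^d$ versus $O = \{\bot\}$, and to make explicit that when $O$ avoids $\bot$ we may cleanly separate the randomness in $W$ (governing the threshold test) from the Gaussian noise $\msnznr$ (governing the location of $\msmean + A^{1/2}\msnznr$). The proof is a one-line inspection of the pseudocode.
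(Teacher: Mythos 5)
Your proof is correct and takes essentially the same approach as the paper's: both simply read off the branching logic at Lines~\ref{line:ms_thres-comp}--\ref{line:ms_vec-output} of $\meansafe$, noting that $\pmout = \bot$ iff $t > 2k/3 + W$ and otherwise $\pmout = \msmean + A^{1/2}\msnznr$.
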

\begin{proof}
  From the condition for returning $\bot$ in Line~\ref{line:ms_bot-output} of $\meansafe$, we immediately have 
  $\pmout = \bot \notin \R^d$ if and only if $t > 2k/3 + W$; thus, the condition $t \leq 2k/3 + W$
  is necessary and sufficient for $\pmout \in \R^d$.
  As either $\pmout = \bot$ or $\pmout = \msmean + A^{1/2}\msnznr$ by definition, 
  it then follows trivially that $t \leq 2k/3 + W$ and
  $\msmean + A^{1/2}\msnznr \in O$ together suffice to obtain $\pmout \in O$.
\end{proof}

Marginalizing over the number of sets of rejected indices $t$ and $\msmean$
we have the following sequence of inequalities:
\begin{align}
  & \P(\msout \in O) \nonumber \\
  & = \Ep \[\P(\msmean + A^{1/2}\msnznr \in O
  \mid \msmean) \P(t \leq 2k/3 + W \mid t) \] \nonumber \\
  & \stackrel{(i)}{\leq}
  \Ep \[ \P(\msmean + A^{1/2}\msnznr \in O \mid \msmean)
  \P(t \leq 2k/3 + W \mid t) 
  \indic{\norm{\msmean' - \msmean'}_A \leq \Delta} \]
  \nonumber \\
  & \qquad + \Ep \[\P(t \leq 2k/3 + W \mid t) \indic{\max\{t,t'\} = k} \]
  + \gamma + n^2 2^{1-b} \nonumber \\
  & \stackrel{(ii)}{\leq } \Ep \[\P(\msmean + A^{1/2}\msnznr \in O \mid \msmean)
  \P(t \leq 2k/3 + W \mid t) 
  \indic{\norm{\msmean' - \msmean'}_A \leq \Delta}\] \nonumber \\
  & \qquad + \P(W \geq k/3 - 1) + \gamma + n^2 2^{1-b} \nonumber \\
  & =
  \Ep \[\P(\msmean + A^{1/2}\msnznr \in O
  \mid \msmean)
  \P(t \leq 2k/3 + W \mid t) \indic{\norm{\msmean' - \msmean'}_A \leq \Delta} \]
  + \beta_2 \label{eqn:ms-proof-disp1}
\end{align}
Here, step $(i)$ follows because $\norm{\msmean' - \msmean'}_A \leq \Delta$
or $\max\{t,t'\} = k$ occurs with probability at least $1 - \gamma-n^2
2^{1-b}$ by Lemma~\ref{lemma:ms-sens}; step $(ii)$ because $\abs{t - t'} \leq
1$ by Lemma~\ref{lemma:ms-R-stability} and so $\max\{t,t'\} = k$ implies $t
\geq k-1$; the final equality follows from the
identity $\P\(W \geq k/3 - 1\) =
\frac{1}{2}e^{-(k/3-1)/\Wmeanscale}$ and definition of $\beta_2$.

Continuing, we can bound the last expectation in the preceding display by 
\begin{align}
  \lefteqn{\Ep
    \[\P(\msmean + A^{1/2}\msnznr \in O \mid \msmean)
    \P(t \leq 2k/3 + W \mid t) \indic{\norm{\msmean' - \msmean'}_A \leq \Delta}\]} \nonumber \\ 
  & \stackrel{(i)}{\leq}
  \exp(\alpha)\Ep \[\P(\msmean' + A^{1/2}\msnznr \in O
  \mid \msmean') \P(t \leq 2k/3 + W \mid t) \] + \beta_1 \nonumber\\
  & \stackrel{(ii)}{\leq} \exp(\alpha + 1/\Wmeanscale) \Ep
  \[\P(\msmean' + A^{1/2}\msnznr \in O \mid \msmean')
  \P(t' \leq 2k/3 + W \mid t') \] + \beta_1 \nonumber\\
  & = \exp(\alpha + 1/\Wmeanscale) \P(\msout' \in O) + \beta_1
    \label{eqn:ms-proof-disp2},
\end{align}
with step $(i)$ following from the privacy of the Gaussian mechanism with
noise $\sigma_N$ and sensitivity bound $\norms{\msmean - \msmean'} \le
\Delta$ (Lemma~\ref{lemma:normal-closeness-mean}); step $(ii)$ from $\abs{t
  - t'} \leq 1$ by Lemma~\ref{lemma:ms-R-stability} and that $W \sim
\Lap(\Wmeanscale)$; and the final equality follows directly from
Observation~\ref{obs:ms-proof}, applied here to the execution of $\meansafe$
on data $x'$.
Combining inequalities~\eqref{eqn:ms-proof-disp1} and
\eqref{eqn:ms-proof-disp2} yields the claim~\eqref{eqn:ms-proof-claim}
when $O \subset \R^d$.

For the case that $O = \{\perp\}$, we have 
\begin{align*}
  \P(\msout = \bot)
  & = \Ep \[\P(t > 2k/3 + W \mid t)\] \\
  & \leq
  e^\frac{1}{\Wmeanscale} \Ep \[\P(t' > 2k/3 + W \mid t')\] 
  = \exp^\frac{1}{\Wmeanscale} \P(\msout = \bot).
\end{align*}
Here, the two equalities follow from the condition for returning $\bot$ in
Line~\ref{line:ms_bot-output} of $\meansafe$, while the inequality follows
because $\abs{t - t'} \le 1$ by Lemma~\ref{lemma:ms-R-stability} and that $W
\sim \Lap(\Wmeanscale)$.  The
claim~\eqref{eqn:ms-proof-claim} for arbitrary $O$ is immediate.





\subsubsection{Proof of Claim~\ref{claim:generic-diameter-sampling}}
\label{sec:proof-generic-diameter-sampling}

Consider the event $\cE$ that for all indices $i_1, i_2 \in [n]$, with $i_1
\in \partitionset_{j_1}$ and $i_2 \in \partitionset_{j_2}$, we have 
$\norm{y_{i_1} -
  y_{i_2}} \leq 2 \max\{\diam(y_{\partitionset_{j_1}}),
\diam(y_{\partitionset_{j_2}})\}$.  The claim holds on
$\cE$: for any $J \subset [n/b]$ and $\partitionset_J \defeq \cup_{j \in J}
\partitionset_j$, there exist $j_1,j_2 \in J$ with $i_1 \in \partitionset_{j_1}$
and $i_2 \in \partitionset_{j_2}$
attaining
$\diam(y_{\partitionset_J}) = \norm{y_{i_1}
  - y_{i_2}}$, and so 
\begin{align*}
  \norm{y_{i_1} - y_{i_2}}
  \leq 2 \max\{\diam(y_{\partitionset_{j_1}}),
  \diam(y_{\partitionset_{j_2}})\}
  \leq 2 \max_{j \in J} \diam(y_{\partitionset_{j}}).
\end{align*}
It remains to show that $\cE$ occurs with probability at least $1-n^2
2^{-b}$.  As there are $\binom{n}{2} \le \half n^2$ unordered pairs of
distinct indices $i_1, i_2 \in [n]$, the result obtains from a union bound
if we show that $\norm{y_{i_1} - y_{i_2}} > 2
\max\{\diam(y_{\partitionset_{j_1}}), \diam(y_{\partitionset_{j_2}})\}$
occurs with probability at most $2^{1-b}$.

Proceeding, let $i_1, i_2 \in [n]$ and $i_1 \in \partitionset_{j_1}, i_2
\in \partitionset_{j_2}$ and let $c = \frac{1}{2}\norm{y_{i_1} -
  y_{i_2}}$.  If $i_1 = i_2$ or $j_1=j_2$, there is nothing to show, so
assume $i_1 \neq i_2$ and $j_1 \neq j_2$.  Let $C_1 = \{i \in [n]
\setminus \{i_1, i_2\} \mid \norm{y_{i_1} - y_{i}} < c\}$ and $C_2 = \{i
\in [n] \setminus \{i_1, i_2\} \mid \norm{y_{i_2} - y_{i}} < c\}$
be those indices $i$ for which $y_i$ is close to $y_{i_1}$ or $y_{i_2}$,
respectively.
By the triangle inequality,
$C_1$ is disjoint from $C_2$, and so
without loss of generality, we suppose that $\cardp{C_1} \leq (n-2)/2$.

We wish to show that $\diam(y_{\partitionset_{j_1}}) \ge c$, for which it
is sufficient that there exists an index in $\partitionset_{j_1} \setminus
\{i_1\}$ not in $C_1$. So
by showing $\partitionset_{j_1} \setminus \{i_1\} \subset C_1$ occurs with
probability at most $2^{1-b}$, we will be done.
As $\partition \sim \uniform(\Partitions_{n, b})$, the set
$\partitionset_{j_1} \setminus \{i_1\}$ is a uniformly distributed subset of
$[n] \setminus \{i_1, i_2\}$ of size $b-1$.
Consequently, there are $\binom{n-2}{b-1}$ distinct values it can take
and $\binom{\cardp{C_1}}{b-1}$ values such that
$\partitionset_{j_1} \setminus \{i_1\} \subset C_1$.
Therefore, the probability that $\partitionset_{j_1} \setminus \{i_1\} \subset C_1$ is
\begin{equation*}
  \P(\partitionset_{j_1} \setminus \{i_1\} \subset C_1)
  = \frac{\binom{\cardp{C_1}}{b-1}}{\binom{n-2}{b-1}}
  = \prod_{i=0}^{b-2} \frac{\hinge{\cardp{C_1} - i}}{n-2-i}
  \leq \left(\frac{\cardp{C_1}}{n-2}\right)^{b-1}
  \leq 2^{1-b},
\end{equation*}
where the last inequality follows because $\cardp{C_1} \leq (n-2)/2$.

\subsubsection{Proof of Claim~\ref{claim:means-triangle-inequality}}
\label{sec:proof-means-triangle-inequality}

\newcommand{\removedall}{\removedinds_{\textup{all}}}
\newcommand{\muall}{\est{\mu}_{\textup{all}}}

Recall that
\begin{equation*}
  \msmean = \frac{1}{n-\cardp{\removedinds}}
  \sum_{j \not\in \removedinds} x_j \quad\text{and}\quad
  \msmean' = \frac{1}{n-\cardp{\removedinds'}}
  \sum_{j \not\in \removedinds'} x'_j,
\end{equation*}
and define
\begin{align*}
  \removedall   &\defeq \removedinds \cup \removedinds', \quad
  \muall  \defeq \frac{1}{n - \cardp{\removedall}}
  \sum_{j \not\in \removedall} x_j, \quad
  \muall' \defeq \frac{1}{n - \cardp{\removedall}}
  \sum_{j \not\in \removedall} x_j'.
\end{align*}
Lemma~\ref{lemma:ms-R-stability} gives
$\cardp{\removedinds^c \setminus \removedall^c}
= \cardp{\removedall \setminus \removedinds} \leq b$, and by assumption
on the batchsize $b$ and rejection threshold $k$ we also have
$\cardp{\removedall} \leq b + \cardp{\removedinds}
\leq b + kb \leq \tfrac{n}{2}$.

Applying Lemma~\ref{lemma:sample-mean-bd} with $S=\removedall^c$, and
$S'=\removedinds^c$, we get 
\begin{equation*}
  \norm{\msmean - \muall}_{\genpsd}
  \leq \frac{\cardp{\removedinds^c \setminus \removedall^c}
    \diam_A(x_{\removedinds^c})}{\cardp{\removedinds^c}}
  \leq \frac{2b\diam_A(x_{\removedinds^c})}{n}
\end{equation*}
as $\cardp{\removedinds} \leq \cardp{\removedall} \leq \frac{n}{2}$.  Applying
Lemma~\ref{lemma:sample-mean-bd} again, this time with dataset $x'$,
$S=\removedall'^c$ and $S'=\removedinds'^c$, we get $\norm{\msmean' -
  \muall'}_{\genpsd} \leq \frac{2b}{n}\diam_A(x'_{\removedinds'^c})$.

Now we bound $\muall-\muall'$, where recalling that
index $i$ is the sole (potentially) differing index in $x, x'$ (that is,
$x_{-i} = x_{-i}'$), we can write as
\begin{equation*}
  \muall-\muall'
  = \frac{1}{n - \cardp{\removedall}} \sum_{j \not\in \removedall} (x_j - x'_j)
  = \frac{\1{i \not\in \removedall}}{n-\cardp{\removedall}} (x_i - x'_i).
\end{equation*}
If $i \in \removedall$, this difference is $0$.  Otherwise, $i \not\in
\removedinds$ and $i \not\in \removedinds'$.  As $\cardp{\removedall} \leq
\frac{n}{2}$, we may pick some $j' \not\in \removedall\cup\{i\}$.  Because
$x_{j'}=x'_{j'}$, we have both $\norm{x_i - x_{j'}}_{\genpsd} \leq
\diam_A(x_{\removedinds^c})$ and both $\norm{x'_i - x_{j'}}_{\genpsd} \leq
\diam_A(x'_{\removedinds'^c})$.  The triangle inequality then gives
$\norm{x_i - x'_i}_{\genpsd} \leq 2\max\{\diam_A(x_{\removedinds^c}),
\diam_A(x'_{\removedinds'^c})\},$ and so $\lv \muall-\muall' \rv_{\genpsd}
\leq \frac{4}{n}\max\{\diam_A(x_{\removedinds^c}),
\diam_A(x'_{\removedinds'^c})\}$.
Combining the above, the claim follows immediately from
\begin{align*}
  \norm{\msmean - \msmean'}_\genpsd
  & \le \norm{\msmean - \muall}_\genpsd
  + \norm{\muall - \muall'}_\genpsd
  + \norm{\muall' - \msmean'}_\genpsd \\
  & \le \frac{2 b \diam_A(x_{\removedinds^c})}{n}
  + \frac{4 \max\{\diam_A(x_{\removedinds^c}),
    \diam_A(x'_{\removedinds'^c})\}}{n}
  + \frac{2 b \diam_A(x'_{\removedinds'^c})}{n}.
\end{align*}

\subsection{Proof of Lemma~\ref{lemma:mean-covariance}}
  \label{sec:proof-mean-covariance}
Unpacking the execution transcripts
$\Transcript(x, \genpsd)$ and $\Transcript(x, \genpsd')$ from
\eqref{eqn:meansafe-output} as
\begin{equation*}
  (D, \wt{D}, R, t, \msmean) \defeq \Transcript(x, \genpsd) 
  ~~~ \mbox{and} ~~~
  (D', \wt{D}', R', t', \msmean') \defeq \Transcript(x, \genpsd'),
\end{equation*}
observe that given the pair $(\wt{D}, A^{1/2} \msnznr)$,
$\msout(x, A)$ is independent of $A$ (see the execution of $\meansafe$), and
analogously, $\msout(x, A')$ is independent of $A'$ given
$(\wt{D}', \genpsd'^{1/2}\msnznr)$.  Therefore, by showing
$\genpsd^{1/2}\msnznr \close{\alpha_1}{\beta} \genpsd'^{1/2}\msnznr$ and
$\wt{D} \close{\alpha_2}{0} \wt{D}'$, basic composition
(Lemma~\ref{lemma:joint-closeness}) and the post-processing property
(Lemma~\ref{lemma:post}) will imply the
claimed result that $\msout \close{\alpha_1 +
  \alpha_2}{\beta} \msout'$.

Recalling $\msnznr \sim \normal(0,\msnscale^2 I)$, we have
$\genpsd^{1/2}\msnznr \sim \normal(0, \msnscale^2 \genpsd)$ and
$\genpsd'^{1/2}\msnznr \sim \normal(0, \msnscale^2 \genpsd')$,
and so $\genpsd^{1/2}\msnznr \close{\alpha_1}{\beta} \genpsd'^{1/2}\msnznr$
follows immediately from the
assumption $\dpsd(\genpsd, \genpsd') \leq \frac{a}{n}$ and the closeness
of Gaussian distributions with differing covariances
(Lemma~\ref{lemma:normal-closeness-cov}).

To show $\wt{D} \close{\alpha_2}{0} \wt{D}'$,
we make the following observation to bound the sensitivity of the
log-Mahalanobis norm for $\genpsd$ and $\genpsd'$.
\begin{observation}
  \label{observation:log-norm-sens}
  Suppose $\genpsd, \genpsd' \in \R^{d\times d}$ and
  $\dpsd(\genpsd, \genpsd') \leq \gamma < \infty$.
  Then for any $v \in \colspace(\genpsd)$,
  $| \log \norm{v}_\genpsd - \log \norm{v}_{\genpsd'} | \leq \gamma/2$.
  For any $v \not \in \colspace(\genpsd)$,
  $\log \norm{v}_\genpsd = \log \norm{v}_{\genpsd'} = \infty$.
\end{observation}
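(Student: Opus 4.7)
The plan is to reduce to the case where $A, A'$ are positive definite on a common column space, and then obtain the claim by a sandwich inequality on quadratic forms coming directly from the nuclear-norm bound in the definition of $\dpsd$.

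First I would dispatch the case $v \notin \colspace(A)$: by definition of $\dpsd$, the assumption $\dpsd(A, A') \le \gamma < \infty$ forces $\colspace(A) = \colspace(A')$, so $v \notin \colspace(A')$ as well. The extended-value definition of the Mahalanobis norm then gives $\norm{v}_A = \norm{v}_{A'} = +\infty$, and the second assertion of the observation is immediate.

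For $v$ in the common column space, I would pick $U \in \R^{d \times r}$ with orthonormal columns spanning this column space and write $A = U \tilde A U^T$, $A' = U \tilde A' U^T$ with $\tilde A, \tilde A' \in \R^{r \times r}$ positive definite. Because $U^T U = I_r$, the identity $A^{\dag/2} = U \tilde A^{-1/2} U^T$ and the unitary invariance of the nuclear norm reduce the hypothesis to
\[
\nucnorm{\tilde A^{-1/2}(\tilde A' - \tilde A)\tilde A^{-1/2}} \le \gamma,
\]
together with the symmetric bound exchanging $A$ and $A'$. Setting $M \defeq \tilde A^{-1/2}(\tilde A' - \tilde A)\tilde A^{-1/2}$, symmetry makes $\nucnorm{M} = \sum_i |\lambda_i(M)| \le \gamma$, so $\lambda_{\max}(M) \le \gamma$; this rearranges to $\tilde A' \preceq (1+\gamma)\tilde A$, and pairing with the symmetric bound yields $\tilde A \preceq (1+\gamma)\tilde A'$. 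Inverting gives $\frac{1}{1+\gamma}\tilde A^{-1} \preceq \tilde A'^{-1} \preceq (1+\gamma)\tilde A^{-1}$. Writing $v = Uu$, both squared norms are quadratic forms in $u$ (namely $u^T\tilde A^{-1}u$ and $u^T\tilde A'^{-1}u$), so the sandwich gives $|\log\norm{v}_A^2 - \log\norm{v}_{A'}^2| \le \log(1+\gamma) \le \gamma$; halving yields the claim.

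The main ``obstacle'' is purely bookkeeping: pseudoinverses of rank-deficient matrices make identities like $A' = A^{1/2}(I+M)A^{1/2}$ awkward on the full space, so the initial reduction to $\R^r$ matters. Once inside the positive-definite setting, the eigenvalue comparison and the quadratic-form sandwich are routine, and the bound $\log(1+\gamma) \le \gamma$ gives exactly the claimed constant $\gamma/2$ after dividing by $2$.
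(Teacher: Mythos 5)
Your proof is correct and essentially the same as the paper's: both extract the operator-norm bound from the nuclear-norm bound, obtain the PSD ordering $A \preceq (1+\gamma)A'$ (and its reverse), invert to sandwich the quadratic forms, and take logs using $\log(1+\gamma) \le \gamma$. The only difference is bookkeeping—you reduce to a full-rank problem via an orthonormal basis $U$ for the common column space, whereas the paper works directly with pseudoinverses and uses the projector identity $\Pi_A = \Pi_{A'}$ to pass from $\Pi_{A'}A\Pi_{A'} \preceq (1+\gamma)A'$ to $A \preceq (1+\gamma)A'$.
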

\begin{proof}
  Observe $\dpsd(\genpsd, \genpsd') < \infty$ trivially implies $\genpsd$ and $\genpsd'$
  are PSD and their columnspaces coincide, from which the second claim
  immediately follows.
  For $v \in \colspace(\genpsd)$, we only show
  $\log \norm{v}_{\genpsd'} \leq \frac{1}{2}\gamma + \log \norm{v}_{\genpsd}$,
  as the reverse inequality holds by symmetry.
  By assumption,
  \begin{equation*}
    \opnorm{\genpsd'^{\dag/2} (\genpsd - \genpsd') \genpsd'^{\dag/2}}
    \leq \dpsd(\genpsd, \genpsd') \leq \gamma
  \end{equation*}
  and hence $\genpsd'^{\dag/2} (\genpsd - \genpsd') \genpsd'^{\dag/2}
    \preceq \gamma I$.
  Conjugating by $\genpsd'^{1/2}$ and rearranging terms, we have
  $\Pi_{\genpsd'} \genpsd \Pi_{\genpsd'} \preceq (1+\gamma)\genpsd'$.
  Because $\Pi_{\genpsd'} = \Pi_{\genpsd}$, we have
  $\Pi_{\genpsd'} \genpsd \Pi_{\genpsd'} = \genpsd$,
  which yields $\genpsd \preceq (1+\gamma)\genpsd'$, or equivalently
  $\genpsd'^\dag \preceq (1+\gamma) \genpsd^\dag$.
  Therefore $\norm{v}_{\genpsd'}^2 \leq (1+\gamma) \norm{v}_\genpsd^2$.
  Taking square roots and logarithms on both sides proves the claim 
  as $\log(\sqrt{1+\gamma}) \leq \frac{\gamma}{2}$.
\end{proof}
\noindent
This observation, coupled with our construction
that both $\msout(x, \genpsd)$ and $\msout(x, \genpsd')$
use the same (random) partition
$\partition = (\partitionset_1, \ldots, \partitionset_{n/b})$,
implies
$|D_j - D'_j| \leq \frac{a}{2n}$ for all $j \in [n/b]$;
hence $\linf{D - D'} \leq \frac{a}{2n}$
(the indices where the entries are infinite coincide).
The closeness properties of $\topkdp$
(Lemma~\ref{lemma:topk-closeness}) and our choice $\mstopkscale
= \frac{k a}{n\alpha_2}$
then give $\wt{D} \close{\alpha_2}{0} \wt{D}'$.

\section{Discussion}

The simplicity of mean estimation in classical statistics belies the
sophistication necessary to adaptively and accurately estimate a mean under
differential privacy constraints.  While we have developed (nearly) minimax
optimal procedures for mean estimation, a number of questions remain open,
and we hope that we or others will tackle them. From a practical
perspective, while our procedure is implementable, the numerical constant
factors we have maintained to guarantee privacy---in addition to the
logarithmic factors in $n$ and $\log\frac{1}{\delta}$---may make effective
use of the procedure challeenging. From a theoretical perspective, it is
still interesting to attempt to remove the logarithmic factors present in
our bounds. Additionally, while we can adapt to weaker than sub-Gaussian
moment bounds (via the method $\adamean$), it may be possible to provide a
sharper procedure or tighter analysis to achieve optimal dependence on
dimension $d$ and privacy level $\diffp$, as in the case that $p$ moments
are available, our results appear to be roughly a factor of
$(\sqrt{d}/\diffp)^{1/p}$ loose (recall
Examples~\ref{example:general-moment} and~\ref{example:general-moment-2}).
It will be interesting to see the extent of possibilities for differentially
private estimation in these more general cases.

\appendix

\section{Proofs of standard privacy results}

\subsection{Proof of Lemma~\ref{lemma:normal-closeness-mean}}
\label{sec:proof-normal-closeness-mean}

The first case follows from \citet[Theorem 3.22]{DworkRo14}.  For the
second, we use Mironov's R\'{e}nyi-differential privacy~\cite{Mironov17}.
The R\'{e}nyi $\alpha$-divergence between distributions
$P$ and $Q$ is
$\rendiv{P}{Q} = \frac{1}{\alpha - 1} \log \int (\frac{dP}{dQ})^\alpha
dQ$, and~\cite[Proposition 3]{Mironov17}
if $\rendiv{P}{Q} \le c$, then for
all measurable $A$ and $\delta  >0$ we have
$P(A) \le \exp(c + \frac{\log(1/\delta)}{\alpha - 1})
Q(A) + \delta$.
The R\'{e}nyi divergence for Gaussians has the explicit form
\begin{equation*}
  \rendiv{\normal(\mu_1, \tau^2  \Sigma)}{
    \normal(\mu_2, \tau^2 \Sigma)}
  = \frac{\alpha}{2 \tau^2} \norm{\mu_1 - \mu_2}_\Sigma^2.
\end{equation*}
When $\rho \ge \norm{\mu_1 - \mu_2}_\Sigma$, we set
$\beta = \alpha - 1$ and see that for
$\diffp$ satisfying
\begin{equation*}
  \diffp = \frac{\rho^2}{2 \tau^2}
  + \frac{\beta \rho^2}{2 \tau^2}
  + \frac{\log(1/\delta)}{\beta}
\end{equation*}
we obtain $\normal(\mu_1,
\tau^2 \Sigma) \closeed \normal(\mu_2, \tau^2 \Sigma)$.
Choosing $\beta$ to minimize the preceding
$\diffp$ we obtain
$\diffp = \frac{\rho^2}{2 \tau^2}
+ \frac{\rho}{\tau} \sqrt{2 \log \frac{1}{\delta}}$,
and solving for $\eta = \frac{1}{\tau}$ in
$\frac{\rho^2}{2} \eta^2 + \sqrt{2 \log\frac{1}{\delta}}
\rho \eta - \diffp$ yields
\begin{equation*}
  \tau = \frac{1}{\eta} = \frac{\rho}{\sqrt{2 \log \frac{1}{\delta}
      + 2 \diffp} - \sqrt{2 \log \frac{1}{\delta}}}
\end{equation*}
is always sufficient to guarantee
$\normal(\mu_1,
\tau^2 \Sigma) \closeed \normal(\mu_2, \tau^2 \Sigma)$.

\subsection{Proof of Lemma~\ref{lemma:normal-closeness-cov}}
\label{sec:proof-normal-closeness-cov}

Without loss of generality, we may assume $\mu = 0$.
We first reduce to the case where $\Sigma_1$ and $\Sigma_2$ are
full-rank.
Because $\dpsd(\Sigma_1, \Sigma_2) < \infty$, we have immediately
that there exists a vector space $V \subset \R^d$ with
$V = \colspace(\Sigma_1) = \colspace(\Sigma_2)$.
Letting $k = \dim(\colspace(\Sigma_1))$, take $U \in \R^{d\times k}$
to be an orthonormal matrix such that $V = \colspace(U)$.
The random variables $X \sim \normal\left(0, \Sigma_1\right)$ and
$Y \sim \normal\left(0, \Sigma_2\right)$ have
support $V$ and multiplication by $U^T$ is an isomorphism between $V$ and
$\R^k$, so $X \closeed Y$ if and only if $U^TX \closeed U^TY$.
Of course, $U^TX \sim \normal\left(0, U^T\Sigma_1 U\right)$ and
$U^TY \sim \normal\left(0, U^T\Sigma_2 U\right)$ and both $U^T\Sigma_1 U$
and $U^T\Sigma_2 U$ are full rank. The orthogonality of $U$ gives
$\dpsd(U^T\Sigma_1 U, U^T\Sigma_2 U) = \dpsd(\Sigma_1, \Sigma_2) \leq \gamma$.
Hence, by showing the lemma for the full-rank matrices $U^T\Sigma_1 U$ and
$U^T\Sigma_2 U$, we will have shown the claim for $\Sigma_1$ and $\Sigma_2$.

We proceed with the full-rank case with an argument similar
to \citet[Lemma 4.15]{BrownGaSmUlZa21}.
Define $D_1 = \Sigma_1^{1/2} \Sigma_2^{-1} \Sigma_1^{1/2} - I$ and
$D_2 = \Sigma_2^{1/2} \Sigma_1^{-1} \Sigma_2^{1/2} - I$.
As $D_1$ has the same spectrum as
$\Sigma_2^{-1/2} \Sigma_1 \Sigma_2^{-1/2} - I$, we have by assumption that
$\nucnorm{D_1} \leq \gamma$ and $\nucnorm{D_2} \leq \gamma$.

Let $f_1$ be the density of $P_1 = \normal\left(0, \Sigma_1\right)$ and
$f_2$ that of $P_2 = \normal\left(0, \Sigma_2\right)$.  Then, to show
$(\eps, \delta)$-closeness, it suffices to show $\ell(W) \defeq \la \log
\frac{f_1(W)}{f_2(W)} \ra \leq \eps$ with probability at least $1-\delta$
when $W$ is drawn from either $P_1$ or $P_2$.  By symmetry, it suffices to
only show this bound for the case when $W \sim P_1$.
Expanding $\ell$, we have
\begin{equation}
  \label{eqn:abs-log-ratio-gaussian}
  \ell(w)
  = \la \half \log \frac{\det(\Sigma_2)}{\det(\Sigma_1)}
  + \frac{1}{2}w^T(\Sigma_2^{-1} - \Sigma_1^{-1}) w \ra
  \leq \frac{1}{2} \la w^T(\Sigma_2^{-1} - \Sigma_1^{-1})w \ra
  + \frac{1}{2} \la \log \frac{\det(\Sigma_2)}{\det(\Sigma_1)} \ra.
\end{equation}
The final term is independent of $w$ and has the bound
\begin{align*}
  \la \log \frac{\det(\Sigma_2)}{\det(\Sigma_1)} \ra
  & = \max\left\{\log \det(\Sigma_2^{1/2} \Sigma_1^{-1} \Sigma_2^{1/2}), 
  \log \det(\Sigma_1^{1/2} \Sigma_2^{-1} \Sigma_1^{1/2}) \right\} \\
  & \leq \max\{\tr(D_2), \tr(D_1)\}
  \le \max\{\nucnorm{D_2}, \nucnorm{D_1}\} 
  \leq \gamma,
\end{align*}
where the first inequality holds because $\log \det(A) \leq
\tr(A-I)$ for any positive definite $A$.

Now we bound the first term on the right hand side of
inequality~\eqref{eqn:abs-log-ratio-gaussian} with high probability.
Since $W \sim P_1$, the whitened random variable $Z =
\Sigma_1^{-1/2}W \sim \normal(0, I)$.  We then
have
\begin{align*}
  \la W^T(\Sigma_2^{-1} - \Sigma_1^{-1})W \ra & = \la Z^T D_1 Z \ra,
\end{align*}
and so by the Hanson-Wright
inequality~\cite[e.g.][Thm.~6.2.1]{Vershynin19}, we have with probability
at least $1-\delta$ that
\begin{align*}
  \la Z^T D_1 Z \ra
  & \leq |\tr(D_1)| + 2\lfro{D_1}\sqrt{\log\frac{2}{\delta}}
  + 2\opnorm{D_1} \log\frac{2}{\delta} \leq 5\gamma \log\frac{2}{\delta},
\end{align*}
where the inequality holds because $\opnorm{D_1}
\le \lfro{D_1} \le \nucnorm{D_1} \le \gamma$ and
$\log \frac{2}{\delta} \ge 1$. Thus $\ell(W) \leq 6\gamma
\log\frac{2}{\delta} \leq \eps$ with probability at least $1-\delta$.

\bibliography{bib}
\bibliographystyle{abbrvnat}


\end{document}